\documentclass{article}

\usepackage[square, numbers, compress]{natbib}


\usepackage[preprint]{neurips_2025}



\usepackage[utf8]{inputenc} 
\usepackage[T1]{fontenc}    
\usepackage{hyperref}       
\usepackage{url}            
\usepackage{booktabs}       
\usepackage{amsfonts}       
\usepackage{nicefrac}       
\usepackage{microtype}      
\usepackage{xcolor}         

\usepackage{amsmath, amssymb, amsthm, amsfonts, mathtools, bm, thmtools, pifont}
\usepackage{bbm, xfrac, physics, nicematrix, xparse}
\usepackage{algorithm2e, listings}
\usepackage{graphicx, booktabs, tabularray, array, tikz}
\usepackage{hyperref, cleveref, mdframed}
\usepackage{enumitem, multicol, multirow, todonotes, float, adjustbox, floatflt, caption, subcaption}
\usepackage[most]{tcolorbox}

\graphicspath{{figures/}}

\allowdisplaybreaks

\newtcolorbox{titlebox}[1]{%
    tikznode boxed title,
    enhanced,
    arc=0mm,
    boxrule=1pt,
    halign=center,
    bottom=1ex,
    interior style={white},
    attach boxed title to top left={xshift=3ex,yshift=-\tcboxedtitleheight/2},
    fonttitle=\bfseries,
    colbacktitle=white,coltitle=black,
    boxed title style={size=small,colframe=white,boxrule=0pt},
    title={#1}
}

\newcommand{\gray}[1]{\textcolor{gray}{#1}}

\DeclarePairedDelimiter{\p}{(}{)}
\DeclarePairedDelimiter{\brk}{[}{]}
\DeclarePairedDelimiter{\brc}{\lbrace}{\rbrace}

\DeclarePairedDelimiterX{\inner}[2]{\langle}{\rangle}{#1, #2}
\DeclarePairedDelimiterX{\set}[2]{\lbrace}{\rbrace}{#1 \;\middle|\; #2}
\DeclarePairedDelimiterX{\divr}[2]{(}{)}{#1 \,\delimsize\|\, #2}

\definecolor{mplBlue}{HTML}{1f77b4}
\definecolor{mplOrange}{HTML}{ff7f0e}
\definecolor{mplGreen}{HTML}{2ca02c}
\definecolor{mplRed}{HTML}{d62728}

\newcommand{\inlinehline}[2]{\tikz[baseline=-0.5ex]{\draw[#1,line width=0.3mm](0,0) -- (#2ex,0);}}
\newcommand{\inlinecirc}[2]{\tikz[baseline=-0.5ex]{\draw[fill=#1,draw=#1] (0,0) circle (#2ex);}}
\newcommand{\inlinebar}[3]{\hspace{.3ex}\tikz{\fill[#1] (0,0) rectangle (#2ex,#3ex);}\hspace{.3ex}}

\theoremstyle{plain}
\newtheorem{theorem}{Theorem}
\newtheorem{proposition}[theorem]{Proposition}
\newtheorem{lemma}[theorem]{Lemma}

\theoremstyle{definition}

\theoremstyle{remark}

\theoremstyle{plain}

\newenvironment{restate}[1]{%
  \restatethm[Restate]
}{\endrestatethm}

\DeclareMathOperator{\diag}{diag}

\DeclareMathOperator{\Attn}{Attn}
\DeclareMathOperator{\TF}{\mathsf{TF}}

\DeclareMathOperator{\E}{\mathbb{E}}

\newcommand*{\test}{\mathrm{test}}
\newcommand*{\tv}{\mathrm{tv}}

\newcommand{\env}[3][]{%
\ifthenelse{\equal{#1}{}}{\begin{#2}}{\begin{#2}[#1]}
    #3
\end{#2}
}

\newcommand{\iidsim}{\overset{\text{i.i.d.}}{\sim}}
\newcommand{\iid}{i.i.d.\ }
\newcommand{\deq}{\overset{d}{=}}
\newcommand{\addto}{\xleftarrow{\!+\!}}

\newcommand{\multo}{\xleftarrow{\!\times\!}}
\newcommand{\mulby}{\xleftarrow{\!\diamond\!}}

\DeclareMathOperator*{\argmin}{\arg\min}

\renewcommand*{\L}{\mathcal{L}}

\newcommand*{\R}{\mathbb{R}}

\newcommand*{\X}{\mathcal{X}}
\newcommand*{\Y}{\mathcal{Y}}

\newcommand{\tA}{\widetilde{A}}

\newcommand{\tR}{\widetilde{R}}

\newcommand{\tW}{\widetilde{W}}
\newcommand{\tX}{\widetilde{X}}
\newcommand{\tY}{\widetilde{Y}}
\newcommand{\tZ}{\widetilde{Z}}

\newcommand{\cM}{\mathcal{M}}
\newcommand{\cN}{\mathcal{N}}

\newcommand{\cS}{\mathcal{S}}

\newcommand{\bbZ}{\mathbb{Z}}

\makeatletter
\newcommand*\rel@kern[1]{\kern#1\dimexpr\macc@kerna}
\newcommand*\widebar[1]{%
  \begingroup
  \def\mathaccent##1##2{%
    \rel@kern{0.8}%
    \overline{\rel@kern{-0.8}\macc@nucleus\rel@kern{0.2}}%
    \rel@kern{-0.2}%
  }%
  \macc@depth\@ne
  \let\math@bgroup\@empty \let\math@egroup\macc@set@skewchar
  \mathsurround\z@ \frozen@everymath{\mathgroup\macc@group\relax}%
  \macc@set@skewchar\relax
  \let\mathaccentV\macc@nested@a
  \macc@nested@a\relax111{#1}%
  \endgroup
}
\makeatother

\newcommand{\bF}{\widebar{F}}
\newcommand{\bG}{\widebar{G}}

\newcommand{\bP}{\widebar{P}}
\newcommand{\bQ}{\widebar{Q}}

\newcommand{\bV}{\widebar{V}}

\newcommand{\bX}{\widebar{X}}

\newcommand{\bZ}{\widebar{Z}}

\title{Understanding Task Vectors in In-Context Learning: Emergence, Functionality, and Limitations}

%

\author{%
    Yuxin Dong$^1$ \quad Jiachen Jiang$^1$ \quad Zhihui Zhu$^1$ \quad Xia Ning$^1$ \\
    $^1$The Ohio State University \\
    \texttt{\{dong.1357, jiang.2880, zhu.3440, ning.104\}@osu.edu}
}

\begin{document}

\maketitle

\begin{abstract}
    Task vectors offer a compelling mechanism for accelerating inference in in-context learning (ICL) by distilling task-specific information into a single, reusable representation. Despite their empirical success, the underlying principles governing their emergence and functionality remain unclear. This work proposes the \textit{Linear Combination Conjecture}, positing that task vectors act as single in-context demonstrations formed through linear combinations of the original ones. We provide both theoretical and empirical support for this conjecture. First, we show that task vectors naturally emerge in linear transformers trained on triplet-formatted prompts through loss landscape analysis. Next, we predict the failure of task vectors on representing high-rank mappings and confirm this on practical LLMs. Our findings are further validated through saliency analyses and parameter visualization, suggesting an enhancement of task vectors by injecting multiple ones into few-shot prompts. Together, our results advance the understanding of task vectors and shed light on the mechanisms underlying ICL in transformer-based models.
\end{abstract}

\section{Introduction}

In-context learning (ICL) is a core capability of large language models (LLMs), allowing them to perform new tasks without parameter updates by conditioning on a few input-output examples in the prompt \cite{brown2020language}. Unlike traditional training, ICL relies on attention-based mechanisms to infer task structure directly from context. This surprising generalization ability has led to growing interest in uncovering the principles of learning purely from contextual examples \cite{xie2022an, chan2022data, dai2023can, shen2023pretrained, deutch2024context}.

A recent work investigates the task vector method \cite{hendel2023context} (concurrent works include function vectors \cite{todd2024function} and in-context vectors \cite{liu2024context}), a technique that distills underlying task information from ICL demonstrations into a single vector. Typically, ICL prompts are structured as sequences of triplets, each encoding a semantic mapping, in addition to a query at the end (e.g., \textit{``hot $\to$ cold, up $\to$ down, day $\to$ night, dark $\to$''}). Task vectors are then extracted from the hidden states of the last ($\to$) token. Once obtained, these vectors can be injected into the same position in new prompts (e.g., \textit{``big $\to$''}), enabling the model to generalize to unseen inputs in a zero-shot fashion.

Task vectors have been shown to naturally emerge even in small transformer models trained from scratch on synthetic data \cite{yang2025task}, suggesting that their formation is a general property of attention-based architectures. Recent studies further demonstrate that task vectors can be enhanced by aggregating hidden states across multiple layers and multiple arrow tokens \cite{li2024context}. Beyond language models, task vectors are also found effective in large-scale visual \cite{hojel2024finding} and multi-modal \cite{huang2024multimodal} models.

Despite their empirical effectiveness, the underlying mechanism of task vectors, especially how they emerge, function, and encode task information, remains poorly understood. This paper takes a step toward unveiling the principles behind it by introducing the following conjecture:
\begin{titlebox}{Linear Combination Conjecture}
    \textit{The injected task vector functions as a single in-context demonstration, \\ formed through a linear combination of the original demonstrations (hidden states).}
\end{titlebox}

\Cref{fig:lcc} provides an intuitive illustration for our conjecture. In the following sections, we validate this conjecture through various empirical and theoretical perspectives. These analyses comprehensively explain how task vectors naturally emerge within attention-based model architectures, effectively encode task-related information, and facilitate inference in zero-shot prompts. Our work advances the understanding of the underlying mechanisms behind ICL, clarifying both the efficacy and limitations of task vectors in transformer-based LLMs. The highlights of this paper are as follows:
\begin{itemize}[leftmargin=12pt,itemsep=2pt,topsep=0pt,parsep=0pt]
    \item \textbf{Theoretical Justification in Linear Transformers:} We theoretically characterize the critical points of linear-attention transformers and demonstrate how they solve random linear regression tasks through embedding concatenation and gradient descent. With a triplet-formatted input prompt structure, task vectors naturally emerge at arrow tokens as linear combinations of the in-context demonstrations. These vectors serve as redundancy against information loss induced by dropout, thereby improving robustness. Empirically, the learned linear model parameters closely align with the predicted structure and successfully replicate the task vector mechanism.
    \item \textbf{Empirical Verification in Practical LLMs:} We visualize the information flow in LLMs with saliency analysis and observe patterns consistent with linear models, suggesting they share similar underlying mechanisms. According to our conjecture, inference with task vectors is analogous to $1$-shot ICL, which is inherently limited to rank-one meta-predictors under the gradient descent perspective. To validate this, we introduce a series of bijection tasks that are provably unsolvable by rank-one predictors, and empirically confirm this failure in real-world transformers. Building on these insights, we enhance the standard task vector method by injecting multiple vectors into few-shot prompts, resulting in consistent performance gains across a range of ICL tasks.
\end{itemize}

\begin{figure}[t]
    \centering
    \includegraphics[width=0.95\linewidth]{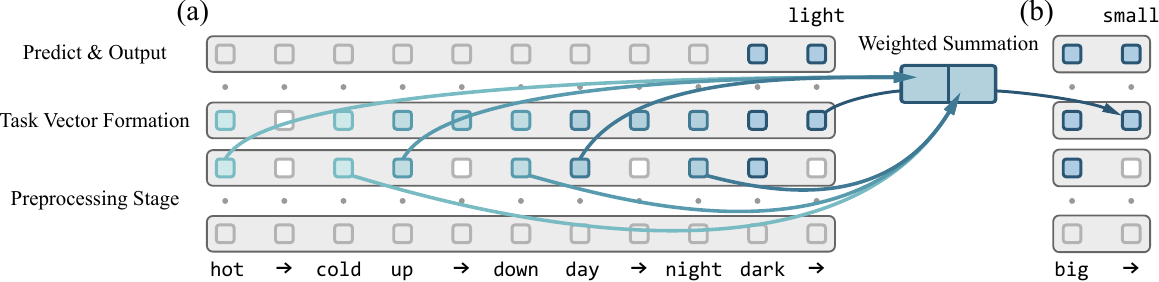}
    \caption{Overview of task vector and our main conjecture. (a) Task vector emerges during ICL as a linear combination of preceding in-context demonstrations. (b) It can then be injected into zero-shot prompts and functions as a single, representative demonstration, facilitating efficient prediction.}
    \label{fig:lcc}
\end{figure}

\section{Setting: Random Linear Regression with Linear-Attention Transformers}

\textbf{Notations:} We write $[n] = \{1, \cdots, n\}$. The Hadamard product is denoted by $\circ$, and the Kronecker product by $\otimes$. The identity matrix of dimension $n$ is denoted by $I_n$, while $0_n$ and $0_{m \times n}$ represent zero vectors or matrices of the corresponding dimensions. Subscripts are omitted when the dimensions are clear from context. We define $\cM(M) = \set*{\Lambda \in \R^{\dim(M)}}{\Lambda = M \circ A,\; A \in \R^{\dim(M)}}$ as the set of masked matrices induced by the binary mask $M$. For a general matrix $A$, the element at the $i$-th row and $j$-th column is denoted by $A_{i,j}$, and the sub-block from rows $i$ to $k$ and columns $j$ to $l$ is denoted by $A_{i:k,j:l}$. $\diag(A_1, \cdots, A_n)$ represents the block-diagonal matrix constructed by $\{A_i\}_{i=1}^n$.

\textbf{Random Linear Regression:} Following the settings in literature \cite{garg2022can, von2023transformers, ahn2023transformers, wu2024how}, we consider training linear transformers on random instances of linear regression. Let $\{x_i\}_{i=1}^{n+1}$, where $x_i \in \R^d$, denote covariates drawn \iid from distribution $P_x$, and let $\{w_i\}_{i=1}^d$, where $w_i \in \R^d$, denote coefficients drawn \iid from distribution $P_w$. Define the coefficient matrix as $W = \env{bmatrix}{w_1 & \cdots & w_d}{}^\top \in \R^{d \times d}$. The responses are then generated as $y_i = Wx_i$ for $i \in [n+1]$. We denote by $X, Y \in \R^{d \times n}$ the matrices whose columns are $x_i$ and $y_i$, respectively, for $i \in [n]$. The query covariate and response are denoted by $x_\test = x_{n+1}$ and $y_\test = y_{n+1}$ respectively.

\textbf{Linear Self-Attention Transformer:} Following prior works \cite{von2023transformers, ahn2023transformers, wu2024how}, we consider transformers composed of linear self-attention layers. Let $Z_0 \in \R^{2d \times d_p}$ denote the input matrix constructed from $X$, $Y$ and $x_\test$ but excluding $y_\test$, where $d_p$ denotes the number of tokens. The transformer is defined by stacking $L$ attention blocks with skip connections, where the $l$-th layer is expressed as:
\begin{equation}
    Z_{l} = Z_{l-1} + \tfrac{1}{n} \Attn_{V_l,Q_l}(Z_{l-1}), \qquad \Attn_{V,Q}(Z) = VZ M \p*{Z^\top QZ}.
\end{equation}
Here, the trainable parameters are $\{V_l,Q_l\}_{l=1}^L$, where $V_l \in \R^{2d \times 2d}$ represents a reparameterization of the projection and value matrices, and $Q_l \in \R^{2d \times 2d}$ denotes the query and key matrices. Following the work \cite{ahn2023transformers}, we adopt a masking matrix $M = \diag(I_{d_p-1}, 0)$ to prevent attention from earlier tokens to the final one. The output of the transformer is defined as $\TF\p*{Z_0; \{V_l,Q_l\}_{l=1}^L} = (Z_L)_{(d+1:2d), d_p}$ (i.e., the latter half of the last column). This definition aligns with the structure of the input $Z_0$, which will be further discussed in subsequent sections. During training, the parameters are optimized to minimize the expected ICL risk over random linear regression instances:
\begin{equation} \label{eq:icl_risk}
    \L\p*{\{V_l,Q_l\}_{l=1}^L} = \E_{Z_0,W} \norm{\TF\p*{Z_0; \{V_l,Q_l\}_{l=1}^L} + W x_\test}_2^2.
\end{equation}

\section{Emergence of Task Vectors in Linear-Attention Transformers}

Firstly, we present theoretical evidence indicating that task vectors naturally arise even in simple linear transformers. Specifically, we analyze the loss landscape of the in-context risk, focusing on the properties of its critical points. As a startup, recall the standard linear regression setup \cite{ahn2023transformers, wu2024how}, where the $(x_i, y_i)$ pairs for each demonstration are concatenated to form the input prompt:
\begin{equation} \label{eq:single_demo}
    Z_0 = \env{bmatrix}{X & x_\test \\ Y & 0} = \env{bmatrix}{x_1 & x_2 & \cdots & x_n & x_\test \\ y_1 & y_2 & \cdots & y_n & 0} \in \R^{2d \times (n+1)}.
\end{equation}
According to existing analyses \cite{ahn2023transformers, zhang2024trained, mahankali2024one}, each attention layer in this setting performs one step of gradient descent on the coefficient matrix $W$. Specifically, the theoretically optimal single-layer (possibly nonlinear) attention \cite{katharopoulos2020transformers} implements the following predictive function \cite{ahn2023transformers} when the covariates are drawn from $P_x = \cN(0, I_d)$, by selecting $V_1 \propto \diag(0_{d \times d}, I_d)$ and $Q_1 \propto \diag(I_d, 0_{d \times d})$:
\begin{equation} \label{eq:one_layer_opt}
    \TF\p*{Z_0; (V_1,Q_1)} = -\tfrac{1}{n} Y\sigma(X)^\top \sigma(x_\test), \quad \text{where } \sigma: \R^d \mapsto \R^r \text{ is a kernel function}.
\end{equation}
Here, we abbreviate $\env{bmatrix}{\sigma(x_1) & \cdots & \sigma(x_n)}$ as $\sigma(X)$. \Cref{eq:one_layer_opt} employs $W^\prime \propto Y\sigma(X)^\top$ as an estimate of coefficient matrix $W$, yielding prediction $\hat y_\test = W^\prime \sigma(x_\test)$. In this paper, we consider alternative settings more reflective of practical scenarios, where $x_i$ and $y_i$ are separated as distinct tokens. As noted \cite{zuo2025position}, such separation necessitates the usage of position encodings for bi-directional attention. Following prior analysis \cite{kazemnejad2023impact}, we assume that position encodings are appended to the input tokens, and reformulate the layer-wise update rule of self-attention as:
\begin{equation} \label{eq:layer_z}
    \Attn_{V,Q}(Z) = VZ M \env{bmatrix}{Z^\top & P^\top} Q \env{bmatrix}{Z \\ P}, \quad \text{where } P \in \R^{d_p \times d_p}.
\end{equation}
For analytical tractability, we take $P = I_{d_p}$ as one-hot position encodings. Inspired by the parameter structure in \cite{ahn2023transformers} and \cref{eq:one_layer_opt}, we further impose the following constraints on the trainable parameters:
\begin{equation} \label{eq:vq_abcd}
    V_l = \diag(A_l, B_l), \quad Q_l = \diag(C_l, 0_{d \times d}, D_l), \quad \text{where } A_l, B_l, C_l \in \R^{d \times d},\; D_l \in \R^{d_p \times d_p}.
\end{equation}
These parameterizations ensure that the projection and attention operations act independently on the covariate, response, and positional components of the input. This structural decoupling is essential for understanding how the transformer identifies the dependency between each $(x_i, y_i)$ pair and revealing the actual optimization algorithm being executed by the model. The proofs for the main theoretical results in this paper are available in \Cref{sec:proof}.

\subsection{Warm-up: Learning with Pairwise Demonstrations}

We begin by analyzing the optimization of linear transformers on pairwise demonstrations. Following previous approach \cite{garg2022can, wibisono2023role, xing2024theoretical}, we decompose each demonstration in \cref{eq:single_demo} into a pair of tokens $Z_0^i = \brk*{\env{smallmatrix}{x_i & 0 \\ 0 & y_i}} \in \R^{2d \times 2}$ to better reflect the practical ICL prompt structure:
\begin{equation} \label{eq:pair_demo}
    Z_0 = \env{bmatrix}{Z_0^1 & \cdots & Z_0^n & Z_0^\test} = \env{bmatrix}{x_1 & 0 & \cdots & x_n & 0 & x_\test & 0 \\
    0 & y_1 & \cdots & 0 & y_n & 0 & 0} \in \R^{(2d) \times (2n+2)}.
\end{equation}
The following theorem suggests that certain critical points of the in-context risk effectively solve the regression problem by first concatenating each pair of $(x_i, y_i)$ into the same tokens, and then executing a variant of the gradient descent algorithm to compute the prediction. To simplify notation, we denote $A = \{A_l\}_{l=1}^L$ (similarly for $B$, $C$, and $D$) and present:
\begin{theorem}[\textbf{Critical Points; Pairwise Demonstrations}] \label{thm:pair_crit}
    Assume $P_x = \cN(0, \Sigma)$, $P_w = \cN(0, \Sigma^{-1})$ with some $\Sigma \in \R^{d \times d}$ satisfying $\Sigma \succ 0$. Define $\cS_I, \cS_\Sigma \subset \R^{d \times d}$ and $\cS_P \subset \R^{d_p \times d_p}$ as
    \begin{equation*}
        \cS_I = \set*{\lambda I_d}{\lambda \in \R}, \quad \cS_\Sigma = \set*{\lambda \Sigma^{-1}}{\lambda \in \R}, \quad \cS_P = \set*{\diag(I_n \otimes \Lambda_1, \Lambda_2)}{\Lambda_1, \Lambda_2 \in \R^{2 \times 2}}.
    \end{equation*}
    Consider optimizing an $L$-layer linear transformer with pairwise demonstrations and parameter configuration given in \cref{eq:vq_abcd}, we then have
    \begin{equation*}
        \inf\nolimits_{A, B \in \cS_I^L,\; C \in \cS_\Sigma^L,\; D \in \cS_P^L} \sum\nolimits_{H \in A \cup B \cup C \cup D} \norm{\nabla_H \L \p*{\{V_l,Q_l\}_{l=1}^L}}_F^2 = 0.
    \end{equation*}
\end{theorem}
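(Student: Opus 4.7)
The plan is to exhibit a specific configuration of parameters in the constrained set $\cS_I^L \times \cS_I^L \times \cS_\Sigma^L \times \cS_P^L$ at which every gradient $\nabla_H \L$ vanishes, which yields the claimed infimum of zero. The approach has two stages: first, use the position-aware parameter $D_l$ to perform a one-shot fusion of each paired $(x_i, 0), (0, y_i)$ token into an effective single-demonstration state; second, reduce the remaining dynamics to the well-understood single-demonstration setting, where critical points are explicit.

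For the fusion step, I would expand $\Attn_{V_l, Q_l}(Z_{l-1})$ under the constraints. Writing $A_l = a_l I_d$, $B_l = b_l I_d$, $C_l = c_l \Sigma^{-1}$, and $D_l = \diag(I_n \otimes \Lambda_1^{(l)}, \Lambda_2^{(l)})$ with $P = I$, the attention score between tokens $i$ and $j$ decomposes additively as $x_i^\top C_l x_j + (D_l)_{i,j}$ thanks to the block-diagonal structure of $Q_l$. The block structure of $D_l$ routes position-based attention exclusively within each $(x_i, 0), (0, y_i)$ pair through $\Lambda_1^{(l)}$, and within the query block through $\Lambda_2^{(l)}$. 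With an appropriate choice of $\Lambda_1^{(1)}$, the first layer aggregates $y_i$ onto the covariate token and $x_i$ onto the response token, so that after layer one the paired tokens jointly encode the $(x_i, y_i)$ information of a single column of the standard prompt \cref{eq:single_demo}. The constraint set is preserved by these updates, so the canonical form persists across subsequent layers.

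Once the state has been reduced to single-demonstration form, I would invoke (or re-derive in the spirit of \cite{ahn2023transformers, mahankali2024one, zhang2024trained}) that the remaining layers implement preconditioned gradient descent on the regression problem with step sizes encoded by $(a_l, b_l, c_l)$. Because $C_l \propto \Sigma^{-1}$ aligns with the prior $P_w = \cN(0, \Sigma^{-1})$, the expectation over $(X, W)$ collapses the ICL risk \eqref{eq:icl_risk} into a polynomial in the per-layer scalars and the entries of the $\Lambda$'s. This polynomial is coercive and attains its infimum at an interior critical point; setting its scalar partial derivatives to zero and invoking the chain rule along the constraint manifold produces the vanishing of each matrix-valued gradient $\nabla_{A_l}\L, \nabla_{B_l}\L, \nabla_{C_l}\L, \nabla_{D_l}\L$.

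The main obstacle I anticipate is the bookkeeping in the reduction: one must verify that $Z_l$ stays in the canonical form at every layer, track how the skip connection and the mask $M$ interact with the fusion across layers, and treat $\Lambda_2^{(l)}$ separately since the final token is both the prediction target and a potential attention source. Showing that a single $\Lambda_1^{(l)}$ shared across all $n$ demonstration pairs (enforced by the $I_n \otimes$ structure) suffices for a joint stationary point is where the permutation symmetry from \iid sampling of $(x_i, w_i)$ combines with the alignment $C_l \propto \Sigma^{-1}$ to cancel cross-pair terms in the gradient; without these cancellations the reduced polynomial would not be jointly stationary in the scalar parameters.
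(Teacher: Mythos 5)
Your plan diverges from the paper's argument in a way that leaves a genuine gap. The step where you ``invoke the chain rule along the constraint manifold'' to conclude that each matrix-valued gradient vanishes does not follow: with $A_l = a_l I_d$ the restricted derivative is $\partial\L/\partial a_l = \tr(\nabla_{A_l}\L)$, so a stationary point of the scalar-parameterized loss only yields $\tr(\nabla_{A_l}\L)=0$ (and analogously a handful of trace or masked-entry conditions for $B_l$, $C_l$, $D_l$), which is far weaker than $\nabla_{A_l}\L = 0$. What is missing is a reason why the full, unconstrained gradients of $\L$, when evaluated at a configuration lying in $\cS_I$, $\cS_\Sigma$, $\cS_P$, should themselves lie in those constraint directions. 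Nothing in the fusion/reduction-to-single-demonstration picture gives you that, and without it, the normal components of the gradient are completely uncontrolled.

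The paper closes exactly this gap with a symmetry-averaging argument and \Cref{lm:inf_d}. It never constructs an explicit critical point — the theorem only asserts that the infimum of gradient norms is zero, and the lemma is designed so that one need not show a constrained minimizer exists (it generally may not: the scalars can diverge, so your coercivity claim is also unsupported). Instead, for each of $A_l$, $B_l$, $C_l$, $D_l$, the paper averages the directional derivative of $\L$ over a group of transformations that leave the data distribution invariant — an orthogonal group conjugated by $\Sigma^{1/2}$ for $A_l$ and $C_l$, a trace reduction for $B_l$, and a joint permutation/sign-flip group acting on the $2\times 2$ token blocks for $D_l$ — and shows that the averaged, constrained perturbation direction descends at least as fast as any ambient direction $R$. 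That dominance forces the ambient gradient to align with the constraint set and, together with a gradient-flow argument, makes the infimum of $\sum_H\norm{\nabla_H\L}_F^2$ equal to zero. Your reading of the first layer as a fusion of the $(x_i,y_i)$ token pair is sound as an interpretation of what the critical configuration does — the paper makes the same point in the discussion following the theorem — but the actual proof never reasons about the layerwise dynamics; it reasons only about distributional invariances of the risk.
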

To understand the behavior of these critical points within a self-attention layer, we fix $\Sigma = I_d$ and take $A_l, B_l = I_d$, $C_l = -\lambda I_d$, and $D_l = \diag(I_n \otimes \Lambda_1, \Lambda_2)$. Let the first and last $d$ rows of $Z_l$ be denoted by $X_l$ and $Y_l$, respectively. Under these settings, the update rule of each layer becomes:
\begin{equation}
    Z_l = Z_{l-1} - \lambda Z_{l-1} M X_{l-1}^\top X_{l-1} + \env{bmatrix}{Z_{l-1}^1 \Lambda_1 & \cdots & Z_{l-1}^n \Lambda_1 & Z_{l-1}^\test \diag(1, 0) \Lambda_2}.
\end{equation}
The above update can be decomposed into the following two distinct components:
\begin{itemize}[leftmargin=12pt,itemsep=2pt,topsep=0pt,parsep=0pt]
    \item \textbf{Gradient Descent:} The first component, $Z_l \leftarrow Z_{l-1} - \lambda Z_{l-1} M X_{l-1}^\top X_{l-1}$, 
    implements the GD++ algorithm \cite{von2023transformers}. This variant enhances convergence speed over standard gradient descent by improving the condition number of the Gram matrix $X_{l-1}^\top X_{l-1}$. Notably, this operation modifies only $X_l$ but not $Y_l$ for the first layer, as implied by the structure of $Q_l$ (\cref{eq:vq_abcd}).
    \item \textbf{Embedding Concatenation:} The second component, $Z_l^i \leftarrow Z_{l-1}^i + Z_{l-1}^i\Lambda_1$ for $i \in [n]$, 
    mixes each pair of $(x_i, y_i)$ tokens. Given that $x_i$ and $y_i$ tokens are initially linearly separable as in our formulation, this operation concatenates each $(x_i, y_i)$ pair, thereby \textit{transforming pairwise demonstrations into the original single-token format}. For the query token $Z_l^\test$, this operation copies $x_\test$ into the final token, reconstructing the structure in \cref{eq:single_demo}, where each non-final token directly concatenates $(x_i, y_i)$ of a demonstration, and the final token contains only $x_\test$.
\end{itemize}
In summary, our analysis reveals that for pairwise demonstrations, the first attention layer leverages position encodings to distinguish between covariate and response tokens, subsequently concatenating them to form a single-token prompt structure. The remaining layers then apply the GD++ algorithm, mirroring the learning dynamics on single-token demonstrations. As a result, \textbf{an $L$-layer linear transformer allocates one layer for embedding concatenation and utilizes the remaining $L-1$ layers to perform gradient descent}. In \Cref{fig:D_pair}, we visualize the learned $D_l$ weights under the setting of \Cref{thm:pair_crit}, and observe that they closely match the critical point structure of $\cS_P$.

\begin{figure}[t]
    \centering
    \begin{subfigure}[b]{0.25\textwidth}
        \centering
        \includegraphics[width=\textwidth]{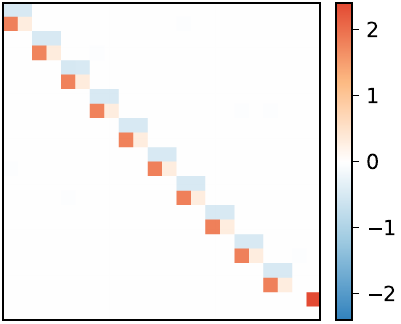}
        \caption{$D_l$ (Pairwise)}
        \label{fig:D_pair}
    \end{subfigure}
    \hspace{0.02\textwidth}
    \begin{subfigure}[b]{0.25\textwidth}
        \centering
        \includegraphics[width=\textwidth]{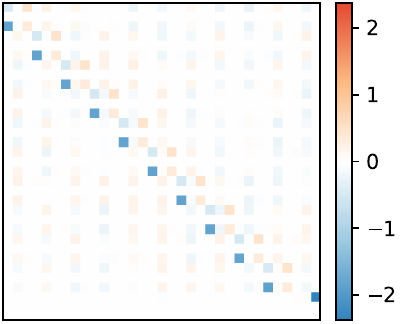}
        \caption{$D_l$ (Triplet)}
        \label{fig:D_triple}
    \end{subfigure}
    \hspace{0.02\textwidth}
    \begin{subfigure}[b]{0.25\textwidth}
        \centering
        \includegraphics[width=\textwidth]{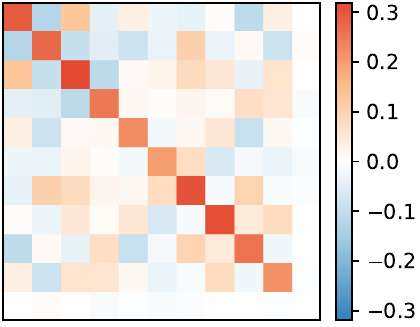}
        \caption{$\Lambda_4 \Lambda_4^\top$}
        \label{fig:ATA}
    \end{subfigure}
    \caption{Visualization of learned $D_l$ weights. (a) Pairwise demonstrations yield a block-diagonal structure aligned with \Cref{thm:pair_crit}. (b) Triplet demonstrations yield a richer structure aligned with \Cref{thm:triple_crit}. (c) The learned matrix $\Lambda_4$ has nearly orthonormal rows as suggested by \Cref{thm:opt_d4}.}
    \label{fig:vis_D}
\end{figure}

\subsection{Emergence of Task Vectors with Triplet Demonstrations}
Next, to better reflect the prompt structure of practical ICL, we insert additional zero tokens between each pair of $(x_i, y_i)$ to simulate the arrow ($\to$) tokens. This reformulates each demonstration as a triplet $(x_i, \to, y_i)$, enabling us to analyze the critical points with these triplet demonstrations:
\begin{equation} \label{eq:triple_demo}
    Z_0 = \env{bmatrix}{x_1 & 0 & 0 & \cdots & x_n & 0 & 0 & x_\test & 0 & 0 \\
    0 & 0 & y_1 & \cdots & 0 & 0 & y_n & 0 & 0 & 0} \in \R^{(2d) \times (3n+3)}.
\end{equation}
\begin{theorem}[\textbf{Critical Points; Triplet Demonstrations}] \label{thm:triple_crit}
    Assume $P_x = \cN(0, \Sigma)$, $P_w = \cN(0, \Sigma^{-1})$ with some $\Sigma \in \R^{d \times d}$ satisfying $\Sigma \succ 0$. Define $\cS_I, \cS_\Sigma \subset \R^{d \times d}$ and $\cS_P \subset \R^{d_p \times d_p}$ as
    \begin{gather*}
        \cS_I = \set*{\lambda I_d}{\lambda \in \R}, \quad \cS_\Sigma = \set*{\lambda \Sigma^{-1}}{\lambda \in \R}, \\
        \begin{aligned}
            \cS_P &= \left\{\diag(I_n \otimes \Lambda_1, \Lambda_2) + I_{n+1} \otimes \Lambda_3 + \Lambda_4 \otimes \Lambda_5 \vphantom{\env{smallmatrix}{1 & 0 & 1 \\ 0 & 0 & 0 \\ 1 & 0 & 1}} \right| \\
            &\qquad \left. \Lambda_1, \Lambda_2 \in \cM\p*{\env{smallmatrix}{1 & 0 & 1 \\ 0 & 0 & 0 \\ 1 & 0 & 1}}, \Lambda_3 \in \cM\p*{\env{smallmatrix}{0 & 0 & 0 \\ 0 & 1 & 0 \\ 0 & 0 & 0}}, \Lambda_4 \in \R^{(n+1) \times (n+1)}, \Lambda_5 \in \cM\p*{\env{smallmatrix}{0 & 1 & 0 \\ 0 & 0 & 0 \\ 0 & 1 & 0}} \right\}.
        \end{aligned}
    \end{gather*}
    Consider optimizing an $L$-layer linear transformer with triplet demonstrations and parameter configuration given in \cref{eq:vq_abcd}, we then have 
    \begin{equation*}
        \inf\nolimits_{A, B \in \cS_I^L,\; C \in \cS_\Sigma^L,\; D \in \cS_P^L} \sum\nolimits_{H \in A \cup B \cup C \cup D} \norm{\nabla_H \L \p*{\{V_l,Q_l\}_{l=1}^L}}_F^2 = 0.
    \end{equation*}
\end{theorem}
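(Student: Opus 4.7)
The plan is to extend \Cref{thm:pair_crit} by interpreting and controlling the additional parameter directions introduced by the triplet structure of $\cS_P$. My first step would be to identify the role of each $\Lambda_i$ in the forward pass. Splitting every triplet into its (x-slot, arrow-slot, y-slot) sub-columns, the four $3\times 3$ sparsity masks in $\cS_P$ specify exactly which within-triplet positions may exchange information at each layer: $\Lambda_1$ and $\Lambda_2$, with $1$s at the four x/y corners, replicate the embedding-concatenation role from \Cref{thm:pair_crit} on non-query and query triplets respectively; $\Lambda_3$ is a self-loop on arrow tokens; and $\Lambda_5$, with $1$s in the middle column, routes content from x- and y-slots \emph{into} the arrow slots, which is precisely the mechanism that produces task vectors, with the outer factor $\Lambda_4 \in \R^{(n+1)\times(n+1)}$ controlling their cross-triplet aggregation and matching the paper's \emph{Linear Combination Conjecture}.

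Second, I would reduce the triplet critical-point problem to a lifted pairwise one. Concretely, I would take the pairwise critical point of \Cref{thm:pair_crit}, place its $2\times 2$ mixing weights at the four x/y corners of each $\Lambda_1, \Lambda_2$, and set $\Lambda_3 = 0$, $\Lambda_4 = 0$, $\Lambda_5 = 0$. Under this configuration the arrow slots of $Z_0$ are zero, and because $V_l$ acts block-diagonally on the x and y components while $D_l$ at this base point cannot inject anything into arrow slots (the only directions that could are $\Lambda_3$, turned off, and $\Lambda_4 \otimes \Lambda_5$, also vanishing), the arrow slots stay zero throughout. Consequently the forward pass on the triplet prompt reproduces, on its x- and y-slots, the pairwise forward pass, and the expected ICL risk $\cL$ coincides numerically with its pairwise counterpart. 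The partial gradients along $A, B, C, \Lambda_1, \Lambda_2$ therefore inherit the vanishing from \Cref{thm:pair_crit}.

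Third, I would verify that the partial gradients along the newly introduced directions $\Lambda_3, \Lambda_4, \Lambda_5$ also vanish at this base point. For $\Lambda_3$, a first-order perturbation activates arrow-to-arrow attention, but $V_l$ acting on the zero arrow-slot columns of $Z_{l-1}$ yields no signal, so the output is unchanged at first order. For $\Lambda_4$ and $\Lambda_5$, bilinearity of $\Lambda_4 \otimes \Lambda_5$ at the origin of both factors makes each partial derivative identically zero. Together these exhibit a configuration in $\cS_I^L \times \cS_I^L \times \cS_\Sigma^L \times \cS_P^L$ where every gradient vanishes, so the infimum equals $0$. The main technical obstacle lies not in the reduction itself but in carefully tracking, through $L$ layers of coupled updates, that every path by which a first-order perturbation could reach the y-slot of the final token factors through a zero arrow-slot state; once this propagation-of-zeros argument is made precise via the explicit attention form $\Attn_{V_l, Q_l}(Z_{l-1}) = V_l Z_{l-1} M\p*{X_{l-1}^\top C_l X_{l-1} + D_l}$, the remaining gradient calculations reduce to direct extensions of those in the pairwise proof.
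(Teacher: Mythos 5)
Your plan is a genuinely different route from the paper's. The paper proves the theorem by applying \Cref{lm:inf_d}: for each arbitrary perturbation direction $R$ of a fixed parameter, it constructs an averaged direction $\tR = \E_{U_\equiv}[U_\equiv R U_\equiv^\top]$ lying in a subspace $\cS_P' \subset \cS_P$ (the one with $\Lambda_4 \otimes \Lambda_5 = 0$ but $\Lambda_3$ arbitrary) along which the directional derivative is no worse; Lemma~\ref{lm:inf_d} then converts this domination into an infimum-zero statement via a constrained gradient-flow argument. You instead propose to \emph{lift} a pairwise (near-)critical configuration and argue the lifted point is a triplet critical point. This is a legitimate alternative in spirit, and your forward-pass observation is correct: with $\Lambda_3 = \Lambda_4 = \Lambda_5 = 0$ and the corners of $\Lambda_1, \Lambda_2$ carrying the pairwise $2\times 2$ weights, the arrow columns of $Z_l$ stay identically zero, the mask $M$ in both problems removes exactly the query $y$-slot, and $\L^{\text{triple}}$ numerically equals $\L^{\text{pair}}$. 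However, two steps as written contain genuine gaps.

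First, ``take the pairwise critical point of \Cref{thm:pair_crit}'' and ``together these exhibit a configuration\ldots where every gradient vanishes, so the infimum equals $0$'' presuppose that a critical point \emph{exists}. \Cref{thm:pair_crit} only proves $\inf_{\theta \in \cS^{\text{pair}}} \norm{\nabla \L^{\text{pair}}(\theta)}_F^2 = 0$, which is strictly weaker: the infimum need not be attained (the proof of \Cref{lm:inf_d} is a gradient-flow argument that gives no compactness). Your conclusion does not follow as stated. The fix is to replace ``the critical point'' with a sequence $\theta_k \in \cS^{\text{pair}}$ with $\norm{\nabla \L^{\text{pair}}(\theta_k)}_F \to 0$, lift each to $\tilde\theta_k$, and then prove that $\norm{\nabla \L^{\text{triple}}(\tilde\theta_k)}_F \to 0$ by establishing (a) exact matching of the gradients with respect to $A_l, B_l, C_l$ and with respect to the non-arrow block of $D_l$, and (b) exact vanishing of the gradient of $\L^{\text{triple}}$ at every arrow-involving entry of $D_l$. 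The matching in (a) does hold pointwise, not merely at critical points, since perturbations along $A_l, B_l, C_l$ or a non-arrow entry of $D_l$ propagate identically in the two settings (arrow columns remain zero to first order); but this must be stated and argued, not inherited from a critical point.

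Second, your step three targets the wrong object. The theorem's quantity $\norm{\nabla_{D_l} \L}_F^2$ is the gradient with respect to the \emph{full} $d_p \times d_p$ matrix $D_l$, not with respect to the decomposition parameters $\Lambda_3, \Lambda_4, \Lambda_5$. Your remark that ``bilinearity of $\Lambda_4 \otimes \Lambda_5$ at the origin of both factors makes each partial derivative identically zero'' correctly shows $\nabla_{\Lambda_4}\L = \nabla_{\Lambda_5}\L = 0$ by the chain rule, but this says nothing about the entries of $\nabla_{D_l}\L$ indexed by arrow rows or columns: those are ordinary entries of the full gradient and do not come with any $\otimes$-structure. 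What you actually need, and what your $\Lambda_3$ sentence correctly anticipates but does not carry through in general, is an elementwise argument: for $R = E_{i',j'}$ with $i'$ an arrow row, $(Z_{l-1})_{:,i'}=0$ so the first-order perturbation $\delta Z_l = \tfrac1n V_l (Z_{l-1})_{:,i'} e_{j'}^\top$ is zero; for $R = E_{i',j'}$ with $j'$ an arrow column, one must trace through $L-l$ further layers and show that because the arrow rows of $D_m$ and of $X_{m-1}^\top C_m X_{m-1}$ are all zero, the first-order perturbation stays confined to column $j'$ and never reaches the output column $3n+3$. With these two repairs the plan is sound, but as written the existence claim does not follow from the cited theorem, and the $\Lambda_4,\Lambda_5$ argument establishes vanishing of the wrong derivative.
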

To analyze the behavior of each attention layer, we note that the critical points for the matrices $A_l$, $B_l$, and $C_l$ remain consistent with \Cref{thm:pair_crit}, thereby implementing the GD++ algorithm. For the matrix $D_l$, we decompose its structure into three distinct components:
\begin{itemize}[leftmargin=12pt,itemsep=2pt,topsep=0pt,parsep=0pt]
    \item \textbf{Embedding Concatenation:} The first component, $\diag(I_n \otimes \Lambda_1, \Lambda_2)$, mixes each pair of $(x_i, y_i)$ tokens, effectively concatenating them — analogous to the operation analyzed in the previous section. This converts all non-arrow tokens into single-token demonstrations.
    \item \textbf{Self Magnification:} The second component, $I_{n+1} \otimes \Lambda_3$, scales the embeddings corresponding to each arrow ($\to$) token by a fixed constant and adds them back to themselves.
    \item \textbf{Task Vector Formation:} The third component, $\Lambda_4 \otimes \Lambda_5$, performs a weighted summation across all demonstrations in the prompt. This operation is central to the emergence of task vectors. Let $\env{bmatrix}{\beta_1 & \cdots & \beta_{n+1}} \in \R^{n \times (n+1)}$ denote the first $n$ rows of $\Lambda_4$ (we will soon show that the last row of $\Lambda_4$ converges to zero), \textit{the first self-attention layer then outputs $n+1$ linear combinations of the demonstrations as the hidden states for the arrow tokens}, expressed as $z_\tv^i = \brk{\env{smallmatrix}{\alpha_1 X\beta_i \\ \alpha_2 Y\beta_i}}$ for $i \in [n+1]$, where $\alpha_1, \alpha_2 \in \R$ are the two non-zero entries of $\Lambda_5$. These vectors can then be injected into zero-shot prompts and function as single-token demonstrations.
\end{itemize}
This mechanism provides strong theoretical evidence for our linear combination conjecture, demonstrating that \textbf{task vectors naturally emerge from the optimization dynamics of linear-attention transformers operating on triplet-formatted prompts}. Notably, the structure of $\cS_P$ closely aligns with our visualization of $D_l$ in \Cref{fig:D_triple}, confirming our theoretical analysis. We now further investigate the structure of the weight matrix $\Lambda_4$, and present the following result:
\begin{proposition}[\textbf{Optimal Task Vector Weights}] \label{thm:opt_d4}
    Assume $P_x, P_w = \cN(0, I_d)$. Consider optimizing a $2$-layer linear-attention transformer with triplet demonstrations and parameter configuration given in \cref{eq:vq_abcd}, and assume $C_1 = 0$. Let
    \begin{equation*}
        D_1 = \diag(I_n \otimes \Lambda_1, \Lambda_2) + I_{n+1} \otimes \Lambda_3 + \Lambda_4 \otimes \Lambda_5 \in \cS_P
    \end{equation*}
    be any minimizer of the in-context risk $\L \p*{\{V_l,Q_l\}_{l=1}^L}$, we then have $\Lambda_4 \in \cS_U$, where
    \begin{equation*}
        \cS_U = \set*{\Lambda}{\Lambda \Lambda^\top = \lambda \diag(I_n, 0), \lambda \in \R}.
    \end{equation*}
\end{proposition}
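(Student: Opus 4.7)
My plan is to express the expected ICL risk as an explicit function of $\Lambda_4$ (holding all other parameters fixed) and then minimize it to read off the claimed block structure. The hypothesis $C_1 = 0$ is critical: it reduces $Q_1$ to $\diag(0, 0, D_1)$, which collapses the first-layer attention score to $D_1$ itself, so the first layer acts as the purely linear operation $Z_1 = Z_0 + \tfrac{1}{n} V_1 Z_0 M D_1$ driven entirely by the position encodings. Under the decomposition $D_1 = \diag(I_n \otimes \Lambda_1, \Lambda_2) + I_{n+1} \otimes \Lambda_3 + \Lambda_4 \otimes \Lambda_5$ from \Cref{thm:triple_crit}, each arrow position acquires a task vector $(Z_1)_{:,(k,2)} = \tfrac{1}{n}\bigl[\mu\alpha_1 \tilde X(\Lambda_4)_{:,k};\, \nu\alpha_2 Y(\Lambda_4)_{1:n,k}\bigr]$ with $\tilde X = [X, x_\test]$, while the $x$- and $y$-tokens receive only scalar rescalings of $x_i$ and $y_i$. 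A separate check of the sparsity of $\Lambda_5'$ and $\Lambda_3'$ shows that the $(n+1,3)$-column of $D_2$ vanishes at every non-query row, so the second-layer attention score at the query reduces to the contraction $\lambda_2 X_1^\top (X_1)_{:,(n+1,3)}$ alone.

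Collecting contributions across token types and using $Y = WX$, the prediction takes the linear-in-$W$ form $\hat y_\test = Wg$ with
\begin{equation*}
g = \eta\, X\bigl[(\kappa_1 + \kappa_3) I_n + \kappa_2 BB^\top\bigr] X^\top x_\test + \eta\kappa_2\, XBr\, \|x_\test\|^2,
\end{equation*}
where $B := \Lambda_4^{1:n,:}$, $r := (\Lambda_4^{n+1,:})^\top$, and $\eta, \kappa_1, \kappa_2, \kappa_3$ are scalars depending only on the remaining parameters. Because $W$ has i.i.d.\ standard normal entries independent of $(X, x_\test)$, the risk factorizes as $L = d \cdot \E\|g + x_\test\|^2$. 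Setting $A := (\kappa_1+\kappa_3) I_n + \kappa_2 BB^\top$ (symmetric) and applying Isserlis' theorem to the relevant fourth Gaussian moments---noting that the cross-term between the $XAX^\top$ and $XBr$ components vanishes by the odd third moment of $X$---one obtains
\begin{equation*}
\E\|g + x_\test\|^2 = \eta^2 d\bigl(\tr(A)^2 + (d{+}1)\|A\|_F^2\bigr) + \eta^2\kappa_2^2 d^2(d{+}2)\|Br\|^2 + 2\eta\,\tr(A)\,d + d.
\end{equation*}

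All $\Lambda_4$-dependence is through the triple $(s, q, p) := (\tr(BB^\top),\,\tr((BB^\top)^2),\,\|Br\|^2)$, and the coefficients of $q$ and $p$ in this expression are both $\eta^2 \kappa_2^2 \cdot (\text{positive})$, hence strictly positive whenever the task-vector pathway is active. For fixed $s$, Cauchy--Schwarz on the eigenvalues of $BB^\top \succeq 0$ gives $q \geq s^2/n$ with equality iff $BB^\top = (s/n) I_n$, so any minimizer must satisfy $BB^\top = (s/n) I_n$; this forces $B$ to have full row rank, so $\|Br\|^2 = 0$ gives $r = 0$. Assembling,
\begin{equation*}
\Lambda_4\Lambda_4^\top = \begin{pmatrix} BB^\top & Br \\ r^\top B^\top & r^\top r \end{pmatrix} = \lambda\,\diag(I_n, 0),\qquad \lambda = s/n,
\end{equation*}
which is exactly $\Lambda_4 \in \cS_U$.

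The hard part is the bookkeeping underlying the explicit form of $g$: verifying that the two-layer computation factors the $\Lambda_4$-dependence cleanly through $BB^\top$, $Br$, and $r^\top r$ with no hidden contractions. This requires tracking the attention contributions from each token type (noting that the non-arrow columns of $X_1$ and $Y_1$ are independent of $\Lambda_4$) and exploiting the sparsity pattern of $D_2$'s last column (which eliminates any additive $\Lambda_4$-channel from the $D_2$ term). Once this factorization is in place, the concluding convex-geometry argument is immediate. In the degenerate case $\eta\kappa_2 = 0$ the task-vector pathway is inactive and $\Lambda_4$ is unconstrained; a short auxiliary check shows such a configuration is strictly dominated and hence cannot be a global minimum.
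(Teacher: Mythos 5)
Your proposal follows essentially the same route as the paper's proof: expand the two-layer prediction explicitly in terms of the first-layer outputs, observe that the $\Lambda_4$-dependence passes through only the symmetrized quantities $\tr(BB^\top)$, $\tr((BB^\top)^2)$, and $\|Br\|^2$, evaluate the Gaussian fourth moments via Isserlis' theorem, and apply Cauchy--Schwarz on the eigenvalues of $BB^\top$ to conclude $BB^\top \propto I_n$. The one organizational difference is that the paper first isolates the $\Delta X_1$ contribution (the term carrying $x_\test^\top x_\test$), argues via the vanishing cross-term that the minimizer must kill it, and only then optimizes over the remaining block; you keep the $\|Br\|^2$ term in the risk expression throughout and read off both conclusions ($BB^\top \propto I_n$ and $Br=0$) from a single minimization. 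This is a modest simplification, and you are appropriately careful about the same degenerate corner the paper glosses over: concluding $r=0$ from $Br=0$ requires $B$ to have full row rank, i.e.\ $\lambda > 0$, which in turn requires the task-vector channel coefficient (your $\eta\kappa_2$, the paper's $b\cdot c$) to be nonzero at the optimum. You flag that the degenerate $\eta\kappa_2 = 0$ configuration should be shown strictly dominated but defer the check; the paper effectively does the same by asserting the last row of $D$ vanishes without spelling out why $B$ has full rank. Both arguments would benefit from making that step explicit, but your proposal is otherwise a faithful and slightly streamlined reconstruction of the paper's computation.
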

This result suggests that the optimal $\Lambda_4$ weight matrix satisfies two key properties: (1) the last row is zero, and (2) the first $n$ rows are mutually orthonormal. These conditions imply that the learned weight vectors $\beta_1, \cdots, \beta_{n+1}$ are likely to be distinct. Therefore, the $n+1$ task vectors produce diverse linear combinations of the demonstrations, thereby enriching the representation within the input prompt. This implication is verified in \Cref{fig:ATA}. While task vectors are typically extracted from the final arrow ($\to$) token in standard usage, here we consider all arrow tokens as task vectors as bi-directional attention allows each to aggregate information from the full prompt.

\section{Validating the Linear Combination Conjecture on Bijection Tasks}



We then present an empirical observation that supports our conjecture. Consider the setting where task vectors are injected into zero-shot prompts. Based on our prior analysis, the injected task vector $z_\tv$ is formed as a linear combination of the original demonstrations. As a result, we show that the injected prompt reconstructs the single-token structure in \cref{eq:single_demo} with only $1$ demonstration:
\begin{equation}
    Z_0 = \env{bmatrix}{z_\test & z_\tv & 0} = \env{bmatrix}{x_\test & x_\tv & 0 \\
    0 & y_\tv & 0} = \env{bmatrix}{x_\test & X\beta & 0 \\ 0 & Y\beta & 0} \in \R^{2d \times 3},
\end{equation}
where the weight vector $\beta \in \R^n$ comes from the last column of $\Lambda_4$ (\Cref{thm:triple_crit}). After the first layer, the $\Lambda_2$ matrix of $\cS_P$ moves $x_\test$ to the last token, reducing the prompt to a single-shot, single-token demonstration. According to the optimal single-layer transformer (\cref{eq:one_layer_opt}), the estimated coefficient matrix is now $W^\prime = Y \beta (X \beta)^\top$, 
which is rank-one. Therefore, if our main conjecture holds, task vectors will be inherently limited in their expressiveness: \textit{they can only realize rank-one coefficient matrices}. This implication also naturally extends to multi-layer transformers.

\begin{figure}[t]
    \centering
    \begin{subfigure}[b]{0.3\textwidth}
        \centering
        \includegraphics[width=\textwidth]{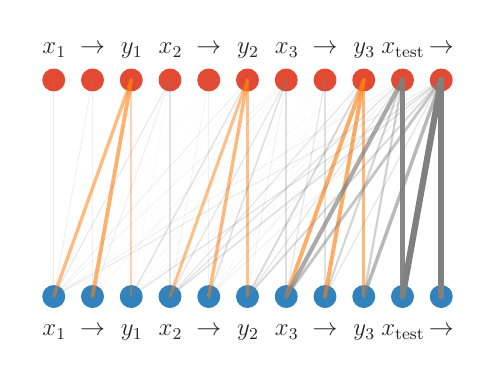}
        \caption{Saliency Map ($l=10$)}
        \label{fig:sal_10}
    \end{subfigure}
    \hspace{0.01\textwidth}
    \begin{subfigure}[b]{0.3\textwidth}
        \centering
        \includegraphics[width=\textwidth]{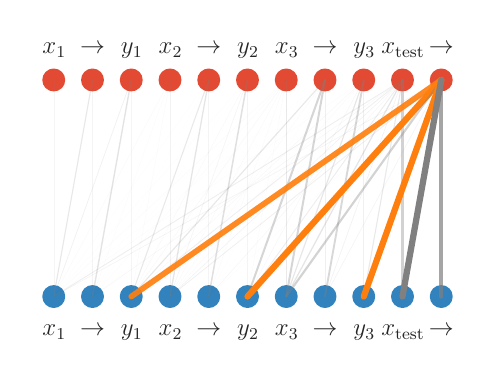}
        \caption{Saliency Map ($l=12$)}
        \label{fig:sal_12}
    \end{subfigure}
    \hspace{0.01\textwidth}
    \begin{subfigure}[b]{0.35\textwidth}
        \centering
        \includegraphics[width=\textwidth]{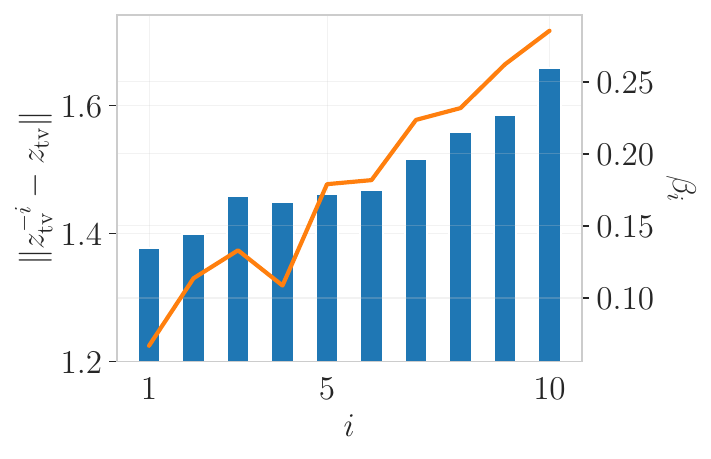}
        \caption{Task Vector Weights}
        \label{fig:tv_weights}
    \end{subfigure}
    \caption{Visualization of saliency matrices as bipartite graphs between layer $l$ (\protect\inlinecirc{mplBlue}{0.5}) and $l + 1$ (\protect\inlinecirc{mplRed}{0.5}), where edge widths 
    indicate saliency magnitude; and variations in the extracted task vector after perturbing the $i$-th demonstration (\protect\inlinebar{mplBlue}{0.7}{1.5}), alongside the predicted weights (\protect\inlinehline{mplOrange}{2}) obtained by optimizing \Cref{thm:opt_d4_dropout}. (a) Each $y_i$ token is primarily attending to its corresponding $(x_i, y_i)$ pair, reflecting embedding concatenation. (b) The final ($\to$) token attends broadly to all $y_i$ tokens, indicating task vector formation — this occurs just before the optimal injection layer ($l = 13$). (c) The predicted task vector weights closely match the trend of empirical results, validating our theoretical model. 
    }
    \label{fig:sal_weights}
\end{figure}

While our analysis is conducted on linear-attention transformers, we demonstrate that similar learning patterns also emerge within practical LLMs. Specifically, we visualize the layer-wise information flow between tokens using saliency maps \cite{wang2023label}, where the saliency score for each attention matrix is computed as $S(A_l) = \sum_h \abs{A_{l,h} \cdot \partial \L / \partial A_{l,h}}$, $A_{l,h}$ denotes the attention matrix of the $h$-th head at layer $l$, and $\L$ is the ICL loss (i.e., the cross-entropy loss for predicting $y_\test$). As demonstrated in \Cref{fig:sal_10,fig:sal_12}, the saliency maps reveal certain patterns matching the ones of embedding concatenation and weighted summation. Importantly, the latter occurs immediately before the optimal task vector injection layer. This suggests that real-world models implement a similar algorithm to solve ICL tasks and, consequently, inherit the same expressiveness limitation.

To verify this, we construct a specialized class of ICL tasks, named bijection tasks. Specifically, given a bijective mapping from domain $\X$ to codomain $\Y$, one can combine it with its inverse mapping to form a new task that maps $\X \cup \Y$ onto itself. For instance, combining the "to uppercase" task with its inverse "to lowercase" yields a bijection task that maps each letter to its opposite case, and a valid ICL prompt takes the form: \textit{``a $\to$ A, B $\to$ b, c $\to$ C, D $\to$''}. Note that this differs from task superposition \cite{xiong2024everything}, as each input corresponds to a unique, well-defined output.
We then establish a key limitation of rank-one coefficient matrices in addressing such tasks:
\begin{proposition} \label{prop:rank_one}
    Let $x, y \in \R^d$ be non-zero vectors. Then the following are equivalent: (1) There exists a rank-one matrix $W \in \R^{d \times d}$ such that $y = Wx$ and $x = Wy$; (2) $x = y$ or $x = -y$.
\end{proposition}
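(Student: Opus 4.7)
The plan is to leverage the rank-one structure explicitly. Any rank-one matrix $W \in \R^{d \times d}$ factors as $W = a b^\top$ for some nonzero vectors $a, b \in \R^d$. Under this factorization, $Wx = (b^\top x)\, a$ and $Wy = (b^\top y)\, a$, so the hypothesis $y = Wx$, $x = Wy$ forces both $x$ and $y$ to lie on the line spanned by $a$. Since $x, y$ are assumed nonzero, we can write $x = \alpha a$ and $y = \beta a$ with $\alpha, \beta \ne 0$.

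Substituting these into the two equations and setting $c := b^\top a$ gives the coupled pair $\beta = \alpha c$ and $\alpha = \beta c$. Combining them yields $\alpha = \alpha c^2$, and since $\alpha \ne 0$ we conclude $c^2 = 1$, i.e., $c = \pm 1$. Then $\beta = \pm \alpha$, so $y = \pm x$, establishing $(1) \Rightarrow (2)$.

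For the converse $(2) \Rightarrow (1)$, I would exhibit an explicit rank-one witness in each case. If $y = x$, take $W = \tfrac{x x^\top}{\|x\|_2^2}$, so that $Wx = x = y$ and $Wy = Wx = x$. If $y = -x$, take $W = -\tfrac{x x^\top}{\|x\|_2^2}$, giving $Wx = -x = y$ and $Wy = -W x = x$. Both matrices are manifestly rank one since $x \ne 0$.

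There is no real obstacle here; the only subtlety is guarding against division by zero, which is handled by the standing assumption that $x$ and $y$ are nonzero (ensuring $\alpha, \beta \ne 0$ in the forward direction and $\|x\|_2 \ne 0$ in the constructions). The argument is essentially a one-line consequence of the observation that a rank-one map has a one-dimensional image, forcing $x$ and $y$ to be collinear, after which the scalar relation $(b^\top a)^2 = 1$ pins down the sign.
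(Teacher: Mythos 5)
Your proof is correct and follows essentially the same route as the paper's: both factor $W = ab^\top$, deduce $(b^\top a)^2 = 1$ from the composed relation, and exhibit $W = \pm xx^\top/\|x\|_2^2$ for the converse. The only (cosmetic) difference is that you first make explicit that $x$ and $y$ are nonzero multiples of $a$, which cleanly justifies the nonvanishing of the relevant scalars — a step the paper leaves implicit when it asserts $b^\top x \ne 0$.
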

This result highlights that \textit{rank-one coefficient matrices cannot solve general bijection tasks}, and are restricted to only the identity mapping ($x = y$) or the negation mapping ($x = -y$). We further verify this implication in real-world LLMs: as summarized in \Cref{tbl:bijection}, both ICL and the task vector method perform well on the original tasks and their inverses. Nevertheless, for the bijection tasks, while ICL preserves performance in many cases, the task vector method consistently fails, confusing examples from the two domains and yielding near-random predictions (50\%). For instance, in the "to uppercase" task, task vectors can predict the correct letter but fail to distinguish between uppercase and lowercase. The only notable exceptions are the copy task (corresponding to the $x = y$ case in \Cref{prop:rank_one}) and the antonym task (corresponding to $x = -y$).

\begin{table}[t]
    \centering
    \small
    \caption{Comparison of the accuracies of ICL and task vector on bijection tasks (Llama-7B, $n = 10$). We use \gray{gray text} to indicate accuracies lower than $60\%$.}
    \label{tbl:bijection}
    \newcolumntype{T}{r<{$\;\to\;$}@{}l}  
    \adjustbox{max width=\textwidth}{%
    \begin{tabular}{lccTcccccc}
        \toprule
        \multirow{2}{*}[-2pt]{\textbf{Task}} & \multirow{2}{*}[-2pt]{\textbf{Domain $\X$}} & \multirow{2}{*}[-2pt]{\textbf{Domain $\Y$}} &  \multicolumn{2}{c}{\multirow{2}{*}[-2pt]{\textbf{Example}}} & \multicolumn{2}{c}{$\X \to \Y$} & \multicolumn{2}{c}{$\Y \to \X$} & \multicolumn{2}{c}{$\X \leftrightarrow \Y$} \\
        \cmidrule(lr){6-7} \cmidrule(lr){8-9} \cmidrule(lr){10-11}
        & & & \multicolumn{2}{c}{} & ICL & TV & ICL & TV & ICL & TV \\
        \midrule
        To Upper & $\{a, \cdots, z\}$ & $\{A, \cdots, Z\}$ & a & A & 1.00 & 0.91 & 1.00 & 0.99 & 1.00 & \gray{0.55} \\
        \midrule
        \multirow{3}{*}{Translation} & English & French & hello & bonjour & 0.83 & 0.84 & 0.82 & 0.70 & \gray{0.54} & \gray{0.35} \\
        & English & Italian & hello & ciao & 0.84 & 0.78 & 0.82 & 0.74 & 0.70 & \gray{0.47} \\
        & English & Spanish & hello & hola & 0.92 & 0.88 & 0.89 & 0.75 & 0.64 & \gray{0.43} \\
        \midrule
        \multirow{4}{*}{Linguistic} & Present & Gerund & go & going & 0.99 & 0.95 & 1.00 & 0.97 & 0.80 & \gray{0.41} \\
        & Present & Past & go & went & 0.98 & 0.91 & 0.99 & 0.96 & \gray{0.52} & \gray{0.33} \\
        & Present & Past Perfect & go & gone & 0.82 & 0.82 & 0.94 & 0.65 & \gray{0.55} & \gray{0.33} \\
        & Singular & Plural & dog & dogs & 0.88 & 0.78 & 0.94 & 0.89 & 0.76 & \gray{0.51} \\
        \midrule
        Copy & \multicolumn{2}{c}{$\{a, \cdots, z, A, \cdots, Z\}$} & A & A & \multicolumn{2}{c}{-} & \multicolumn{2}{c}{-} & 1.00 & 0.98 \\
        Antonym & \multicolumn{2}{c}{Adjectives} & happy & sad & 0.89 & 0.83 & \multicolumn{2}{c}{-} & 0.83 & 0.73 \\
        \bottomrule
    \end{tabular}}
\end{table}

Together, these findings empirically validate our conjecture: \textbf{the task vector approach, which is restricted to rank-one coefficient matrices, cannot solve general bijection tasks}. While a variety of ICL tasks have been explored to assess the capabilities of task vectors \cite{hendel2023context, todd2024function, li2024context}, the fundamental limitation of task vectors in addressing these bijection tasks has not been previously identified.

\section{Further Discussions}

\textbf{Inseparable Covariates and Responses.} In our main analysis, we assume that $x_i$ and $y_i$ embeddings are linearly separable, allowing the addition $x_i + y_i$ to act a concatenation operation. However, recognizing that this assumption does not generally hold for real-world transformers, we extend our analysis to the following setting, where $x_i$ and $y_i$ are no longer linearly separable.
While this still imposes a $2d$-dimensional requirement on the hidden space, such a constraint is easily satisfied in practical transformers, given the high dimensionality of their internal representations.
\begin{equation} \label{eq:pair_demo_ins}
    Z_0 = \env{bmatrix}{0 & 0 & \cdots & 0 & 0 & 0 & 0 \\
    x_1 & y_1 & \cdots & x_n & y_n & x_\test & 0} \in \R^{(2d) \times (2n+2)}.
\end{equation}
We slightly modify the sparsity constraints for the first layer, and require $(D_0)_{2i,:} = 0$ for $i \in [n+1]$:
\begin{equation} \label{eq:vq_ins0}
    V_0 = \env{bmatrix}{0 & A_0 \\ 0_{d \times d} & 0}, \quad Q_0 = \env{bmatrix}{0_{2d \times 2d} & 0 \\ 0 & D_0}, \quad \text{where } A_0 \in \R^{d \times d},\; D_0 \in \R^{d_p \times d_p}.
\end{equation}
With these conditions, we are ready to establish the critical points for inseparable demonstrations. Note that $V_0$ and $Q_0$ do not involve $B_0$ and $C_0$, so the sequences $B$ and $C$ have size $L - 1$.
\begin{theorem} \label{thm:pair_crit_ins}
    Under the same settings as \Cref{thm:pair_crit}, define $\cS_I, \cS_\Sigma \subset \R^{d \times d}$ and $\cS_P \subset \R^{d_p \times d_p}$ as
    \begin{equation*}
        \cS_I = \set*{\lambda I_d}{\lambda \in \R}, \quad \cS_\Sigma = \set*{\lambda \Sigma^{-1}}{\lambda \in \R}, \quad \cS_P = \set*{\diag(I_n \otimes \Lambda_1, \Lambda_2)}{\Lambda_1, \Lambda_2 \in \R^{2 \times 2}}.
    \end{equation*}
    Consider optimizing an $L$-layer linear transformer with inseparable pairwise demonstrations and parameter configuration given in \cref{eq:vq_ins0} for the first layer and \cref{eq:vq_abcd} for the remaining layers, then
    \begin{equation*}
        \inf\nolimits_{A \in \cS_I^L,\; B \in \cS_I^{L-1},\; C \in \cS_\Sigma^{L-1},\; D \in \cS_P^L} \sum\nolimits_{H \in A \cup B \cup C \cup D} \norm{\nabla_H \L \p*{\{V_l,Q_l\}_{l=1}^L}}_F^2 = 0.
    \end{equation*}
\end{theorem}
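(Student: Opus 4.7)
The plan is to adapt the critical-point construction from \Cref{thm:pair_crit} to the inseparable setting, using the special first-layer parameterization in \cref{eq:vq_ins0} as a ``separation'' stage that rearranges the interleaved bottom-half input into a structure on which the subsequent layers can execute the same GD++ dynamics as before. Concretely, I would set $A_0 = I_d$ and construct $D_0$ (respecting both the block-diagonal form of $\cS_P$ and the additional sparsity $(D_0)_{2i,:} = 0$) with non-zero entries $(D_0)_{2i-1, 2i} = n$ for $i \in [n]$ and $(D_0)_{2n+1, d_p} = n$. Direct computation shows that the first attention layer then places $x_i$ in the top half at every even position $2i$ and $x_\test$ in the top half at the final position $d_p$, while leaving the bottom half, which still contains the interleaved $x_i$'s and $y_i$'s, untouched. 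Thus the ``useful'' sub-prompt consisting of the even positions together with the last position now carries the same top/bottom layout as the single-token prompt in \cref{eq:single_demo}.

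For the subsequent layers $l \in \{1, \ldots, L-1\}$, I would reuse the pairwise critical-point parameterization from \Cref{thm:pair_crit}, supplemented with the entry-level conditions $(\Lambda_1^{(l)})_{1,2} = (\Lambda_1^{(l)})_{2,1} = 0$ and $(\Lambda_2^{(l)})_{1,2} = 0$ on the $2\times 2$ blocks of $D_l$. A short induction shows that under these conditions the top half at odd positions stays identically zero through every layer, and the attention weights from any query to an odd key position all vanish. Consequently the spurious $x_i$'s sitting in the bottom half at odd positions never propagate into the final prediction token, and the dynamics restricted to the even positions plus the test position exactly reproduce the GD++ iterations from \Cref{thm:pair_crit} executed on a single-token prompt of the form \cref{eq:single_demo}. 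Choosing the remaining free entries of $A_l, B_l, C_l, D_l$ to match the pairwise critical-point values realizes a critical point of this reduced single-token problem with $L-1$ effective gradient-descent layers.

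The final step is to verify that the full Frobenius-norm gradient of $\L$ vanishes at the constructed configuration for every parameter in $A \cup B \cup C \cup D$. Gradients with respect to the ``active'' parameters inherit vanishing directly from the pairwise critical-point argument in \Cref{thm:pair_crit}. The more delicate part is the gradients at the entries we have frozen to zero (for example, the even rows of $D_0$ or the four frozen entries inside each $\Lambda_1^{(l)}, \Lambda_2^{(l)}$). Evaluated at our configuration, each such derivative reduces after the chain rule to an expectation of a polynomial in $W, \{x_i\}, x_\test$ that contains an odd power of $W$ or of some $x$-variable and therefore vanishes by the mean-zero property of the Gaussian priors $P_w$ and $P_x$, via the same moment identities already used in the proofs of \Cref{thm:pair_crit} and \Cref{thm:triple_crit}.

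The main obstacle I expect is in this last step: for each frozen coordinate, I have to track how an infinitesimal perturbation propagates through the $L$-layer transformer to $\partial Y_L^{(d_p)} / \partial H$ and then confirm that the resulting inner product with the residual $Y_L^{(d_p)} + W x_\test$ vanishes in expectation. The perturbation chain is the same in spirit as in the pairwise and triplet cases, but the bookkeeping is heavier because the bottom half now carries extra $x_i$'s at odd positions, so several new terms appear in each layer-by-layer derivative and must be shown to cancel or to be killed by Gaussian symmetry.
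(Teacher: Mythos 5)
Your proposal takes a fundamentally different, and I think flawed, route from the paper's. The paper never constructs an explicit critical point: the theorem only claims that the \emph{infimum} of the constrained gradient norm is zero, and the paper's Lemma~\ref{lm:inf_d} is specifically designed to prove such an infimum statement without requiring the infimum to be attained. The actual proof works variationally: for any parameter $E$ on the restricted manifold and any direction $R$, one finds a feasible direction $\tR$ in the restricted set along which the directional derivative of $\L$ is no larger. The technical novelty for the inseparable case is the decomposition $Y_l = F_l + W G_l$ with $F_l$ built from $X = [x_1,0,\dots,x_n,0,x_\test,0]$ and $G_l$ from $\bX = [0,x_1,\dots,0,x_n,0,x_\test]$, so that the ``spurious'' $x_i$'s in the bottom half are handled cleanly by linearity rather than by forcing attention weights to odd keys to vanish. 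This decomposition is precisely what lets the $B_i$ case go through, and it is the place where the inseparable proof genuinely differs from Theorem~\ref{thm:pair_crit}.

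Your plan instead tries to exhibit an explicit critical point: a concrete choice of $A_0$, $D_0$ and ``pairwise critical-point values'' for the remaining layers at which $\nabla \L$ vanishes exactly. Two concrete problems. First, the paper never produces such explicit critical-point values for Theorem~\ref{thm:pair_crit} either, so the ingredient you want to ``reuse'' does not exist in the cited source; for $L>1$ layers it is not known (and is not claimed in this paper) that a global or even local minimizer of the restricted ICL risk is achieved at a finite parameter, rather than only approached in the limit. Second, your last step --- verifying that every partial derivative at your frozen coordinates equals zero ``by Gaussian symmetry'' --- is exactly the step the variational argument is designed to sidestep, and it is not established by the moment identities you invoke: those identities are used in the paper to compute $\E_{U_\perp}[U_\Sigma^{-1} R U_\Sigma]$ and similar averaged perturbations, not to show that a fixed perturbation direction has zero inner product with the risk gradient. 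The derivatives at the entries you freeze to zero (even rows of $D_0$, the off-pattern entries of $\Lambda_1^{(l)}, \Lambda_2^{(l)}$) will generically produce nonvanishing even-moment terms after expanding through $L$ layers; the odd-moment heuristic does not cover them. Until that computation is actually carried out, the proposal asserts, rather than proves, the theorem.

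A useful way to see the distance between the two approaches: the paper's proof, restricted to the first layer, shows $\bX M D_1 = 0$ directly from the constraint $(D_1)_{2i,:}=0$, so $X_1 = A_1 X M D_1$ with \emph{no} assumption on the odd rows of $D_1$; the proof then proceeds for \emph{all} points of $\cS$, not a single designated one. Your construction corresponds to one specific $D_0$, and you must additionally keep all later $D_l$'s on a strict sub-pattern to suppress the odd-position tokens, then hope the gradient vanishes there. That is both a narrower and a harder claim than what the theorem requires.
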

This result suggests that for inseparable demonstrations, the first layer performs a functionally similar concatenation operation by "moving" the embedding of each $x_i$ to the corresponding $y_i$ position. This enables the model to reconstruct the single-token structure without linear separability.

\textbf{Optimal Weights for Causal Task Vectors.} While task vectors naturally emerge in linear transformers, their embeddings do not directly help minimize the ICL risk, as evidenced by the identical performance between pairwise and triplet formatted prompts (\Cref{fig:icl_2,fig:icl_3}). Instead, we show that task vectors contribute to minimizing the training (i.e., LLM pretraining) risk when token-wise dropout is applied, acting as redundancies for in-context demonstrations that may be randomly dropped during training. This redundancy ensures that essential task information is preserved and continues to facilitate accurate prediction despite partial context loss.
\begin{proposition} \label{thm:opt_d4_dropout}
    Under the same settings as \Cref{thm:opt_d4}, consider adding token-wise dropouts $O_l$:
    \begin{equation*}
        Z_{l} = Z_{l-1} O_l + \tfrac{1}{n} \Attn_{V_l,Q_l}(Z_{l-1}) O_l, \quad \text{where } O_l = \diag(o_l^1, \cdots, o_l^{d_p}),\; o_l^i \iidsim \mathrm{Bern}(p).
    \end{equation*}
    Then any minimizer $\Lambda_4$ of the in-context risk $\L \p*{\{V_l,Q_l\}_{l=1}^L}$ satisfies $(\Lambda_4)_{n+1,:} = 0$ and:
    \begin{equation*}
        (\Lambda_4)_{1:n,:} \propto \argmin_\Lambda\; c_1 \norm{\Lambda}_4^4 + c_2 \sum\nolimits_{i=1}^n \norm{\Lambda_{i,:}}_2^4 + c_3 \sum\nolimits_{j=1}^{n+1} \norm{\Lambda_{:,j}}_2^4 + c_4 \norm{\Lambda \Lambda^\top}_F^2, \enspace \text{s.t. } \norm{\Lambda}_F^2 = 1.
    \end{equation*}
    where $c_1, \cdots, c_4$ are non-negative constants depending on $V_l$, $Q_l$, and $p$.
\end{proposition}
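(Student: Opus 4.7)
The plan is to substitute the parametrization of \Cref{thm:triple_crit} into the dropout-augmented two-layer forward pass, compute $\L$ as an explicit polynomial in the entries of $\Lambda_4$, and then reduce the minimization to the claimed scale-normalized quartic program. Since $C_1 = 0$, the first attention layer collapses to $V_1 Z_0 M D_1 / n$, so every non-arrow token is merely a concatenation of its $(x_i, y_i)$ pair (through $\Lambda_1, \Lambda_2$), and every arrow token $i \in [n+1]$ acquires the linear combination $z_\tv^i = \brk{\env{smallmatrix}{\alpha_1 X \beta_i \\ \alpha_2 Y \beta_i}}$ with $\beta_i = (\Lambda_4)_{:,i}$ and $\alpha_1, \alpha_2$ the non-zero entries of $\Lambda_5$. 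Applying $O_1$ zeros out selected tokens, after which the second layer (with $C_2 \propto \Sigma^{-1} = I_d$) performs a gradient-descent step that produces the prediction $\hat y_\test$ as a bilinear form in the surviving tokens and $x_\test$. This gives an explicit formula in which $\Lambda_4$ enters only through the arrow-token contributions.

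Next, I would take expectations in the order $O_1, O_2$, then $X$, then $W$. Each dropout variable contributes a Bernoulli factor: a product $o^{a_1} \cdots o^{a_k}$ has mean $p^{|\{a_i\}|}$, which splits sums over token indices according to which indices coincide. The Gaussian expectation over $W \sim \cN(0, I_d)$ kills any term linear in a single $y_i$ and, via Isserlis, converts quadratic-in-$Y$ terms into contractions of $X$ against itself. A second Gaussian expectation over $X$ then invokes the fourth-moment identity, producing three pairings of covariate indices. Combining these index-matching patterns with the dropout index-matching patterns yields exactly four inequivalent structures for the $\Lambda_4$-dependent part: both row and column indices coincide $\Rightarrow \|\Lambda\|_4^4$; only row indices coincide $\Rightarrow \sum_i \|\Lambda_{i,:}\|_2^4$; only column indices coincide $\Rightarrow \sum_j \|\Lambda_{:,j}\|_2^4$; and no coincidence (all Wick contractions go through $\Lambda\Lambda^\top$) $\Rightarrow \|\Lambda\Lambda^\top\|_F^2$. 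The coefficients $c_1,\ldots,c_4$ are then polynomials in $p$ with non-negative values because they arise from variances of squared norms.

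Finally, the risk depends on $\Lambda_4$ only through this quartic plus a quadratic in $\|\Lambda_4\|_F^2$ coming from the cross-terms with $W x_\test$, so a single one-dimensional minimization over the overall scale pins $\|\Lambda_4\|_F^2$ to a specific value and reduces the directional problem to the stated program under $\|\Lambda\|_F^2 = 1$. The vanishing of the last row follows by the same reasoning as in \Cref{thm:opt_d4}: the query arrow token contributes $Y \beta_{n+1}$ to $\hat y_\test$, a quantity independent of $W x_\test$ under the Gaussian prior on $W$, so any non-zero $(\Lambda_4)_{n+1,:}$ only adds variance and can be strictly improved upon by zeroing it. I expect the main obstacle to be the combinatorial bookkeeping of the Gaussian and Bernoulli moment expansions — specifically, confirming that the four quartic patterns above exhaust all $\Lambda_4$-dependent contributions and that no additional cross terms survive after the dropout averaging is what will require the most care.
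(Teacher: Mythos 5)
Your proposal follows essentially the same route as the paper's proof: both substitute the critical-point parametrization from \Cref{thm:triple_crit} into the two-layer dropout-augmented forward pass (picking up the formulas \cref{eq:b_x,eq:delta_x} from \Cref{thm:opt_d4}), take moments over the Bernoulli dropout variables and the Gaussian $W$ and $X$, and invoke a fourth-moment identity to reduce the $\Lambda_4$-dependence to a handful of quartic contraction patterns. The identification of the four contraction structures (all-coincide $\Rightarrow \|\Lambda\|_4^4$, rows-coincide $\Rightarrow \sum_i\|\Lambda_{i,:}\|_2^4$, columns-coincide $\Rightarrow \sum_j\|\Lambda_{:,j}\|_2^4$, nothing coincides $\Rightarrow \|\Lambda\Lambda^\top\|_F^2$), the use of the $\Delta X_1$ argument to kill the last row, and the final scale/direction decoupling to reach the normalized quartic program are all the same moves the paper makes.

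Two places where your sketch is less tight than the written proof and would need filling in: (i) You justify non-negativity of $c_1,\ldots,c_4$ by saying they ``arise from variances of squared norms,'' but they do not: they come from a messy mix of cross-contractions between the identity/$M$-branches of $\tW$, and the paper simply computes the explicit formulas (e.g.\ $c^2(1+d)(p^3-p^4-p^5+p^6) = c^2(1+d)p^3(1-p)^2(1+p) \ge 0$) rather than appealing to a variance interpretation. (ii) You note the risk has a quartic term in $\|\Lambda_4\|_F^2$ alongside the four quartic contractions and then ``pin the scale'' with a one-dimensional minimization — this decoupling is correct because each of $T_2,\ldots,T_5$ and $T_1^2$ is a homogeneous degree-$4$ quantity in $\Lambda$, so minimizing over the radial direction and the unit sphere separate cleanly, and the $T_1$ and $T_1^2$ terms drop out of the spherical problem as constants; the paper glosses over this but it is the right argument.
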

This result suggests that dropout introduces additional higher-order regularization on the task vector weights, encouraging them to distribute more uniformly across demonstrations. Furthermore, when considering causal attention (i.e., enforcing $\Lambda_4$ to be upper-triangular), it induces a decaying weight pattern from later to earlier demonstrations, which is also consistently observed in practical transformer models as evidenced in \Cref{fig:tv_weights}. While dropout is not always applied during LLM pretraining or fine-tuning, the injection of position encodings and use of normalization act as alternative sources of perturbation, thereby promoting the emergence of such redundancy.

\textbf{Extra EOS Tokens.} In our theoretical analysis, we consistently impose an additional zero token at the end of the input prompt. While this token can be interpreted as an EOS token in practical models, such a design choice is uncommon in standard ICL tasks. We justify this modeling decision with:
\begin{proposition}[Informal] \label{prop:eos}
    Given any $L$-layer, single-head, $d$-dimensional linear-attention transformer with EOS tokens, there exists an equivalent $L$-layer, two-head, $2d$-dimensional linear-attention transformer operating without EOS tokens.
\end{proposition}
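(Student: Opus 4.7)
The plan is to explicitly construct the target transformer $T'$ from the given single-head EOS-using transformer $T$. The core idea is to use the $2d$ hidden coordinates of $T'$ as two overlaid $d$-dimensional channels: the upper channel carries a running copy of $T$'s non-EOS tokens, and the lower channel of the last column carries $T$'s EOS state. The two heads then split responsibilities --- one replays $T$'s attention within the upper channel, the other computes the EOS update by reading keys and values from the upper channel and writing into the lower channel. Two heads are genuinely needed because these operations have incompatible block structures: an upper-only attention requires $V,Q$ block-diagonal with a non-trivial upper block, while an upper-to-lower EOS simulator requires $V$ with a non-trivial lower-left block and $Q$ with a non-trivial upper-right block.

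Concretely, I would lift the input as
\[
Z'_0 = \env{bmatrix}{(Z_0)_{:, 1:d_p - 1} \\ 0_{d \times (d_p - 1)}} \in \R^{2d \times (d_p - 1)},
\]
dropping the trailing EOS column of $Z_0$ and padding the lower block with zeros, and set the two heads at layer $l$ to
\[
V'_{l,1} = \diag(V_l, 0),\quad Q'_{l,1} = \diag(Q_l, 0), \qquad V'_{l,2} = \env{bmatrix}{0 & 0 \\ V_l & 0},\quad Q'_{l,2} = \env{bmatrix}{0 & Q_l \\ 0 & 0}.
\]
I would then prove by induction on $l$ that (i) the upper block $(Z'_l)_{1:d,\,:}$ equals $(Z_l)_{:,\,1:d_p - 1}$, (ii) the lower block of the last column $(Z'_l)_{d+1:2d,\,d_p - 1}$ equals the EOS column $(Z_l)_{:,\,d_p}$, and (iii) the lower block of every other column is zero. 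The base case follows from the lifting; for the inductive step one substitutes the head definitions into the linear-attention formula and reads off the blocks: head 1 acts as zero on the lower channel and reproduces $T$'s update on the upper channel, while head 2 acts as zero on the upper channel, vanishes in the lower channel wherever the lower query block is zero (by (iii)), and precisely reproduces $T$'s EOS aggregation in the last column. The output of $T'$ is then declared to be the last $d$ rows of the last column of $Z'_L$, which by (ii) coincides with $T$'s output at the EOS slot.

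The main obstacle I expect is reconciling the masking convention. The paper's mask $M = \diag(I_{d_p - 1}, 0)$ zeroes the last column when it acts as a key/value, precisely to neutralize the EOS slot. Transporting this convention verbatim into $T'$ (yielding $M' = \diag(I_{d_p - 2}, 0)$) would mask out $T'$'s final real token, which in $T$ did contribute as a key/value --- breaking invariant (i). The cleanest resolution is to drop the trailing zero and take $M' = I_{d_p - 1}$, which is natural since the without-EOS setup no longer has a dummy column to neutralize; with this choice all three invariants follow immediately. If one instead insists on retaining a trailing-zero mask pattern, head 2 of layer 1 can additionally cache the last real token's key/value contribution into the lower block of an earlier column, so that it remains visible to later layers through a mild extension of head 1 --- this needs no extra heads or dimensions but complicates the invariant bookkeeping. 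Once the mask alignment is handled, the remainder of the argument is routine block-matrix calculation.
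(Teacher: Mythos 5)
Your construction is essentially identical to the paper's: the same lifting of the input to a $2d$-dimensional state with non-EOS tokens in the upper block and the EOS slot in the lower block of the last column, the same choice of $\bV_l^1 = \diag(V_l,0)$, $\bQ_l^1 = \diag(Q_l,0)$, $\bV_l^2 = \left[\begin{smallmatrix}0 & 0 \\ V_l & 0\end{smallmatrix}\right]$, $\bQ_l^2 = \left[\begin{smallmatrix}0 & Q_l \\ 0 & 0\end{smallmatrix}\right]$, and the same three-part inductive invariant (i)--(iii). The ``main obstacle'' you worried about does not actually arise: the paper simply formulates the target two-head transformer with no mask at all (equivalently, your ``$M' = I$'' option), which is consistent with having eliminated the dummy column, and with your invariant the extra-visible last column's upper block is exactly the last genuine non-EOS token and is supposed to contribute as a key/value --- no caching trick is needed. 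The only piece you left tacit is the construction of the positional biases (the paper sets $\bP_l^1 = (P_l)_{1:d_p,1:d_p}$ and $\bP_l^2$ to the relevant restriction of $(P_l)_{:,d_p+1}$), but you clearly anticipate this under ``routine block-matrix calculation.'' Your side remark that two heads are necessary because the upper-to-upper replay and the upper-to-lower EOS simulation have incompatible block structures is a correct and slightly more explicit justification than the paper offers.
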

This equivalence suggests that the same learning dynamics can be realized through multi-head architectures without relying on explicit EOS tokens. Specifically, one head in this setting is dedicated to task vector formation, while the other handles ICL prediction. This separation allows the model to retain the functional role of the EOS token implicitly within its hidden states. Consequently, our prior theoretical analysis can be naturally extended to practical models that omit explicit EOS tokens.

\section{Experimental Studies}

\begin{figure}[t]
    \centering
    \begin{subfigure}[b]{0.35\textwidth}
        \centering
        \includegraphics[width=\textwidth]{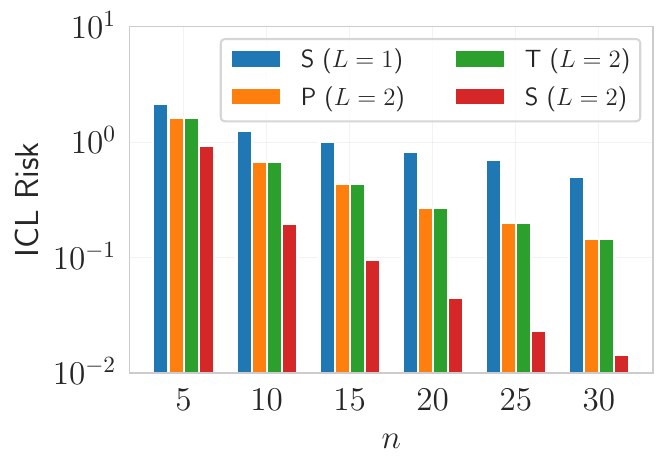}
        \caption{Single- vs. Multi-Token ($L=2$)}
        \label{fig:icl_2}
    \end{subfigure}
    \hspace{0.02\textwidth}
    \begin{subfigure}[b]{0.35\textwidth}
        \centering
        \includegraphics[width=\textwidth]{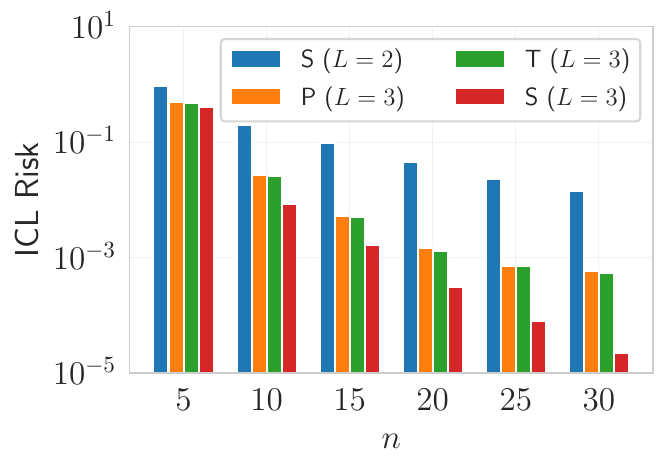}
        \caption{Single- vs. Multi-Token ($L=3$)}
        \label{fig:icl_3}
    \end{subfigure}
    \hspace{0.02\textwidth}
    \begin{subfigure}[b]{0.22\textwidth}
        \centering
        \includegraphics[width=\textwidth]{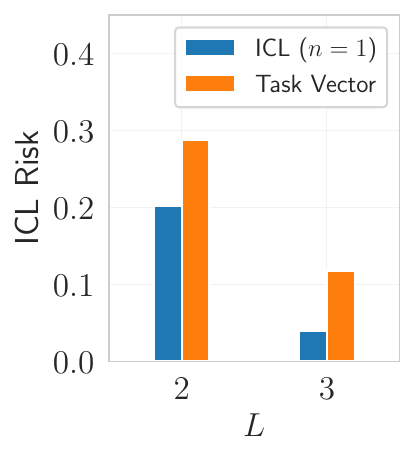}
        \caption{ICL vs. TV}
        \label{fig:tv}
    \end{subfigure}
    \caption{(a, b) Comparison of the best ICL risk achieved using single (S), pairwise (P), and triplet (T) formatted prompts. (c) Performance comparison between $1$-shot ICL and task vector.}
    \label{fig:icl}
\end{figure}

\subsection{Synthetic Results with Random Linear Regression}

In this section, we validate our critical points analysis with synthetic linear regression tasks. Specifically, we examine the achievable ICL risk of linear transformers trained with single-token (\cref{eq:single_demo}), pairwise (\cref{eq:pair_demo}), and triplet (\cref{eq:triple_demo}) demonstrations. We set the input dimension to $d = 4$ and $P_x = P_w = \cN(0, I_d)$. For each setting, we train multiple models with different random seeds and report the minimum ICL risk achieved as a proxy for the global optimum. The comparative results across different numbers of layers $L$ and demonstration formats are shown in \Cref{fig:icl_2,fig:icl_3}.

These results support our theoretical analysis: when trained with pairwise or triplet demonstrations, the transformer recovers the GD++ algorithm similar to the single-token case. Notably, the performance of $L$-layer transformers with pairwise (P) and triplet (T) demonstrations closely aligns, indicating a shared underlying learning pattern. Moreover, their performance consistently lies between that of single-token (S) case $L$-layer and $(L-1)$-layer models. The observed improvement over the $(L-1)$-layer single-token baselines comes from the additional GD++ performed solely on $x_i$ tokens in the first layer, effectively acting as a "half-step" of gradient descent.

Additionally, we successfully reproduce the task vector method in linear transformers. Specifically, we extract the hidden state of the final ($\to$) token from triplet demonstrations after the first layer, and inject this vector into zero-shot prompts consisting of only $x_\test$. To simulate the effect of layer normalization used in practical transformers, we normalize the task vectors before inference and the output vectors before ICL risk evaluation. As shown in \Cref{fig:tv}, the performance of task vectors is parallel to that of standard ICL with a single in-context example. 
This validates our conjecture that the injected task vector effectively acts as a single demonstration.

\subsection{Enhancing the Task Vector Method}

We further explore an enhancement to the original task vector method. According to our previous analysis, a single injected task vector may not provide sufficient information for inference on complex tasks (e.g., bijection tasks). Moreover, in linear-attention models, each ($\to$) token functions as an individual in-context demonstration during the gradient descent phase and thus contributes equally to the ICL risk. Motivated by this, we extend the standard task vector method, which modifies only the final arrow token, and propose a multi-vector variant that injects into every single arrow token in few-shot prompts. This enriched injection scheme enables the model to leverage multiple new demonstrations, thereby providing a more informative and distributed context for prediction.

We compare our multi-vector injection strategy (TaskV-M) against standard $N$-shot ICL (Baseline) and the original task vector method (TaskV). For each $N$-shot prompt, we generate $N+1$ distinct ICL prompts to produce $N+1$ task vectors, which are then used to replace the embeddings of all arrow tokens in the input. For each task, performance is evaluated over $50$ randomly sampled prompts, with mean accuracy and standard deviation reported across $5$ independent trials. The final results, summarized in \Cref{tbl:mtv}, span a diverse set of ICL task types, including Knowledge, Algorithmic, Translation, Linguistic, and Bijection, showing that TaskV-M consistently outperforms TaskV, especially on the more challenging bijection tasks. These findings support our analysis that every arrow token contributes meaningfully to the model’s ICL capability.

\begin{table}[t]
    \centering
    \small
    \caption{Accuracy comparison between standard ICL (Baseline), the task vector method (TaskV), and our strategy (TaskV-M). The experiment is conducted on Llama-13B with $n = 10$.}
    \label{tbl:mtv}
    \adjustbox{max width=\textwidth}{%
    \begin{tabular}{cl|ccccc|c}
        \toprule
        \multicolumn{2}{c|}{\textbf{Method}} & \textbf{Knowledge} & \textbf{Algorithmic} & \textbf{Translation} & \textbf{Linguistic} & \textbf{Bijection} & \textbf{Average} \\
        \midrule
        \multirow{2}{*}{$0$-shot} & Baseline & 6.90 \scriptsize{$\pm$ 2.08} & 15.60 \scriptsize{$\pm$ 1.72} & 7.00 \scriptsize{$\pm$ 1.65} & 12.44 \scriptsize{$\pm$ 1.74} & 8.27 \scriptsize{$\pm$ 1.33} & 10.28 \scriptsize{$\pm$ 0.98} \\
        & TaskV & \textbf{68.80} \scriptsize{$\pm$ 2.66} & \textbf{86.20} \scriptsize{$\pm$ 1.61} & \textbf{73.53} \scriptsize{$\pm$ 0.91} & \textbf{85.24} \scriptsize{$\pm$ 1.80} & \textbf{50.67} \scriptsize{$\pm$ 2.32} & \textbf{72.26} \scriptsize{$\pm$ 1.01} \\
        \midrule
        \multirow{3}{*}{$1$-shot} & Baseline & 69.50 \scriptsize{$\pm$ 3.86} & 73.67 \scriptsize{$\pm$ 1.56} & 57.80 \scriptsize{$\pm$ 2.01} & 56.22 \scriptsize{$\pm$ 1.57} & 44.76 \scriptsize{$\pm$ 2.44} & 58.11 \scriptsize{$\pm$ 0.63} \\
        & TaskV & 79.50 \scriptsize{$\pm$ 2.35} & 88.47 \scriptsize{$\pm$ 0.75} & \textbf{80.67} \scriptsize{$\pm$ 2.56} & \textbf{89.11} \scriptsize{$\pm$ 0.84} & 60.44 \scriptsize{$\pm$ 2.07} & 78.79 \scriptsize{$\pm$ 0.77} \\
        & TaskV-M & \textbf{81.30} \scriptsize{$\pm$ 2.80} & \textbf{89.53} \scriptsize{$\pm$ 0.65} & 80.13 \scriptsize{$\pm$ 2.14} & 88.71 \scriptsize{$\pm$ 0.62} & \textbf{61.78} \scriptsize{$\pm$ 0.96} & \textbf{79.34} \scriptsize{$\pm$ 0.37} \\
        \midrule
        \multirow{3}{*}{$2$-shot} & Baseline & 78.80 \scriptsize{$\pm$ 3.30} & 85.07 \scriptsize{$\pm$ 1.37} & 75.67 \scriptsize{$\pm$ 2.64} & 76.80 \scriptsize{$\pm$ 1.18} & 56.49 \scriptsize{$\pm$ 2.87} & 72.92 \scriptsize{$\pm$ 0.59} \\
        & TaskV & 84.60 \scriptsize{$\pm$ 2.11} & 88.40 \scriptsize{$\pm$ 0.68} & \textbf{84.33} \scriptsize{$\pm$ 0.92} & \textbf{90.13} \scriptsize{$\pm$ 0.92} & 62.44 \scriptsize{$\pm$ 2.16} & 80.82 \scriptsize{$\pm$ 0.42} \\
        & TaskV-M & \textbf{85.70} \scriptsize{$\pm$ 1.63} & \textbf{89.27} \scriptsize{$\pm$ 1.10} & 84.13 \scriptsize{$\pm$ 1.15} & 89.64 \scriptsize{$\pm$ 0.86} & \textbf{64.49} \scriptsize{$\pm$ 2.02} & \textbf{81.48} \scriptsize{$\pm$ 0.37} \\
        \midrule
        \multirow{3}{*}{$3$-shot} & Baseline & 86.20 \scriptsize{$\pm$ 2.69} & 88.07 \scriptsize{$\pm$ 1.06} & 80.00 \scriptsize{$\pm$ 1.67} & 84.04 \scriptsize{$\pm$ 1.19} & 62.18 \scriptsize{$\pm$ 1.52} & 78.51 \scriptsize{$\pm$ 0.42} \\
        & TaskV & 90.20 \scriptsize{$\pm$ 2.23} & 88.67 \scriptsize{$\pm$ 0.89} & \textbf{86.27} \scriptsize{$\pm$ 2.31} & 92.31 \scriptsize{$\pm$ 0.48} & 66.53 \scriptsize{$\pm$ 0.94} & 83.53 \scriptsize{$\pm$ 0.41} \\
        & TaskV-M & \textbf{90.30} \scriptsize{$\pm$ 1.50} & \textbf{89.87} \scriptsize{$\pm$ 0.83} & 86.07 \scriptsize{$\pm$ 2.17} & \textbf{92.36} \scriptsize{$\pm$ 0.72} & \textbf{68.13} \scriptsize{$\pm$ 0.76} & \textbf{84.15} \scriptsize{$\pm$ 0.52} \\
        \midrule
        \multirow{3}{*}{$4$-shot} & Baseline & 84.80 \scriptsize{$\pm$ 2.06} & 88.07 \scriptsize{$\pm$ 0.61} & 83.27 \scriptsize{$\pm$ 1.82} & 88.89 \scriptsize{$\pm$ 1.91} & 67.16 \scriptsize{$\pm$ 1.47} & 81.52 \scriptsize{$\pm$ 0.66} \\
        & TaskV & 88.70 \scriptsize{$\pm$ 1.69} & 89.53 \scriptsize{$\pm$ 1.34} & 86.27 \scriptsize{$\pm$ 1.08} & \textbf{92.76} \scriptsize{$\pm$ 0.54} & 70.44 \scriptsize{$\pm$ 1.35} & 84.66 \scriptsize{$\pm$ 0.39} \\
        & TaskV-M & \textbf{89.60} \scriptsize{$\pm$ 1.43} & \textbf{91.00} \scriptsize{$\pm$ 1.01} & \textbf{87.20} \scriptsize{$\pm$ 0.62} & 92.36 \scriptsize{$\pm$ 1.44} & \textbf{72.53} \scriptsize{$\pm$ 0.94} & \textbf{85.64} \scriptsize{$\pm$ 0.29} \\
        \bottomrule
    \end{tabular}}
\end{table}

\section{Conclusion}

This paper proposes the linear combination conjecture as a plausible explanation for the emergence and functionality of task vectors in ICL. We support this conjecture with both empirical observations and theoretical analysis, demonstrating how task vectors naturally arise under triplet-formatted demonstrations in simple linear transformer models, and why this method inherently fails on general bijection tasks. While the conjecture may not yet offer a complete characterization of ICL dynamics, it provides a new perspective on the underlying mechanisms and offers a promising direction for interpreting intermediate hidden states in modern transformer-based language models.

\bibliographystyle{plainnat}
\bibliography{main}

\begin{thebibliography}{26}
\providecommand{\natexlab}[1]{#1}
\providecommand{\url}[1]{\texttt{#1}}
\expandafter\ifx\csname urlstyle\endcsname\relax
  \providecommand{\doi}[1]{doi: #1}\else
  \providecommand{\doi}{doi: \begingroup \urlstyle{rm}\Url}\fi

\bibitem[Ahn et~al.(2023)Ahn, Cheng, Daneshmand, and Sra]{ahn2023transformers}
Kwangjun Ahn, Xiang Cheng, Hadi Daneshmand, and Suvrit Sra.
\newblock Transformers learn to implement preconditioned gradient descent for in-context learning.
\newblock \emph{Advances in Neural Information Processing Systems}, 36:\penalty0 45614--45650, 2023.

\bibitem[Brown et~al.(2020)Brown, Mann, Ryder, Subbiah, Kaplan, Dhariwal, Neelakantan, Shyam, Sastry, Askell, et~al.]{brown2020language}
Tom Brown, Benjamin Mann, Nick Ryder, Melanie Subbiah, Jared~D Kaplan, Prafulla Dhariwal, Arvind Neelakantan, Pranav Shyam, Girish Sastry, Amanda Askell, et~al.
\newblock Language models are few-shot learners.
\newblock \emph{Advances in neural information processing systems}, 33:\penalty0 1877--1901, 2020.

\bibitem[Chan et~al.(2022)Chan, Santoro, Lampinen, Wang, Singh, Richemond, McClelland, and Hill]{chan2022data}
Stephanie Chan, Adam Santoro, Andrew Lampinen, Jane Wang, Aaditya Singh, Pierre Richemond, James McClelland, and Felix Hill.
\newblock Data distributional properties drive emergent in-context learning in transformers.
\newblock \emph{Advances in neural information processing systems}, 35:\penalty0 18878--18891, 2022.

\bibitem[Dai et~al.(2023)Dai, Sun, Dong, Hao, Ma, Sui, and Wei]{dai2023can}
Damai Dai, Yutao Sun, Li~Dong, Yaru Hao, Shuming Ma, Zhifang Sui, and Furu Wei.
\newblock Why can gpt learn in-context? language models secretly perform gradient descent as meta-optimizers.
\newblock In \emph{Findings of the Association for Computational Linguistics: ACL 2023}, pages 4005--4019, 2023.

\bibitem[Deutch et~al.(2024)Deutch, Magar, Natan, and Dar]{deutch2024context}
Gilad Deutch, Nadav Magar, Tomer Natan, and Guy Dar.
\newblock In-context learning and gradient descent revisited.
\newblock In \emph{Proceedings of the 2024 Conference of the North American Chapter of the Association for Computational Linguistics: Human Language Technologies (Volume 1: Long Papers)}, pages 1017--1028, 2024.

\bibitem[Garg et~al.(2022)Garg, Tsipras, Liang, and Valiant]{garg2022can}
Shivam Garg, Dimitris Tsipras, Percy~S Liang, and Gregory Valiant.
\newblock What can transformers learn in-context? a case study of simple function classes.
\newblock \emph{Advances in Neural Information Processing Systems}, 35:\penalty0 30583--30598, 2022.

\bibitem[Hendel et~al.(2023)Hendel, Geva, and Globerson]{hendel2023context}
Roee Hendel, Mor Geva, and Amir Globerson.
\newblock In-context learning creates task vectors.
\newblock In \emph{Findings of the Association for Computational Linguistics: EMNLP 2023}, pages 9318--9333, 2023.

\bibitem[Hojel et~al.(2024)Hojel, Bai, Darrell, Globerson, and Bar]{hojel2024finding}
Alberto Hojel, Yutong Bai, Trevor Darrell, Amir Globerson, and Amir Bar.
\newblock Finding visual task vectors.
\newblock In \emph{European Conference on Computer Vision}, pages 257--273. Springer, 2024.

\bibitem[Huang et~al.(2024)Huang, Mitra, Karlinsky, Arbelle, Darrell, and Herzig]{huang2024multimodal}
Brandon Huang, Chancharik Mitra, Leonid Karlinsky, Assaf Arbelle, Trevor Darrell, and Roei Herzig.
\newblock Multimodal task vectors enable many-shot multimodal in-context learning.
\newblock \emph{Advances in Neural Information Processing Systems}, 37:\penalty0 22124--22153, 2024.

\bibitem[Katharopoulos et~al.(2020)Katharopoulos, Vyas, Pappas, and Fleuret]{katharopoulos2020transformers}
Angelos Katharopoulos, Apoorv Vyas, Nikolaos Pappas, and Fran{\c{c}}ois Fleuret.
\newblock Transformers are rnns: Fast autoregressive transformers with linear attention.
\newblock In \emph{International conference on machine learning}, pages 5156--5165. PMLR, 2020.

\bibitem[Kazemnejad et~al.(2023)Kazemnejad, Padhi, Natesan~Ramamurthy, Das, and Reddy]{kazemnejad2023impact}
Amirhossein Kazemnejad, Inkit Padhi, Karthikeyan Natesan~Ramamurthy, Payel Das, and Siva Reddy.
\newblock The impact of positional encoding on length generalization in transformers.
\newblock \emph{Advances in Neural Information Processing Systems}, 36:\penalty0 24892--24928, 2023.

\bibitem[Li et~al.(2024)Li, Hu, Sun, Hu, Zhang, et~al.]{li2024context}
Dongfang Li, Xinshuo Hu, Zetian Sun, Baotian Hu, Min Zhang, et~al.
\newblock In-context learning state vector with inner and momentum optimization.
\newblock \emph{Advances in Neural Information Processing Systems}, 37:\penalty0 7797--7820, 2024.

\bibitem[Liu et~al.(2024)Liu, Ye, Xing, and Zou]{liu2024context}
Sheng Liu, Haotian Ye, Lei Xing, and James~Y Zou.
\newblock In-context vectors: Making in context learning more effective and controllable through latent space steering.
\newblock In \emph{International Conference on Machine Learning}, pages 32287--32307. PMLR, 2024.

\bibitem[Mahankali et~al.(2024)Mahankali, Hashimoto, and Ma]{mahankali2024one}
Arvind~V. Mahankali, Tatsunori Hashimoto, and Tengyu Ma.
\newblock One step of gradient descent is provably the optimal in-context learner with one layer of linear self-attention.
\newblock In \emph{The Twelfth International Conference on Learning Representations}, 2024.
\newblock URL \url{https://openreview.net/forum?id=8p3fu56lKc}.

\bibitem[Shen et~al.(2024)Shen, Mishra, and Khashabi]{shen2023pretrained}
Lingfeng Shen, Aayush Mishra, and Daniel Khashabi.
\newblock Position: Do pretrained transformers learn in-context by gradient descent?
\newblock In \emph{Proceedings of the 41st International Conference on Machine Learning}, pages 44712--44740. PMLR, 2024.

\bibitem[Todd et~al.(2024)Todd, Li, Sharma, Mueller, Wallace, and Bau]{todd2024function}
Eric Todd, Millicent Li, Arnab~Sen Sharma, Aaron Mueller, Byron~C Wallace, and David Bau.
\newblock Function vectors in large language models.
\newblock In \emph{The Twelfth International Conference on Learning Representations}, 2024.
\newblock URL \url{https://openreview.net/forum?id=AwyxtyMwaG}.

\bibitem[Von~Oswald et~al.(2023)Von~Oswald, Niklasson, Randazzo, Sacramento, Mordvintsev, Zhmoginov, and Vladymyrov]{von2023transformers}
Johannes Von~Oswald, Eyvind Niklasson, Ettore Randazzo, Jo{\~a}o Sacramento, Alexander Mordvintsev, Andrey Zhmoginov, and Max Vladymyrov.
\newblock Transformers learn in-context by gradient descent.
\newblock In \emph{International Conference on Machine Learning}, pages 35151--35174. PMLR, 2023.

\bibitem[Wang et~al.(2023)Wang, Li, Dai, Chen, Zhou, Meng, Zhou, and Sun]{wang2023label}
Lean Wang, Lei Li, Damai Dai, Deli Chen, Hao Zhou, Fandong Meng, Jie Zhou, and Xu~Sun.
\newblock Label words are anchors: An information flow perspective for understanding in-context learning.
\newblock In \emph{Proceedings of the 2023 Conference on Empirical Methods in Natural Language Processing}, pages 9840--9855, 2023.

\bibitem[Wibisono and Wang(2023)]{wibisono2023role}
Kevin~Christian Wibisono and Yixin Wang.
\newblock On the role of unstructured training data in transformers' in-context learning capabilities.
\newblock In \emph{NeurIPS 2023 Workshop on Mathematics of Modern Machine Learning}, 2023.

\bibitem[Wu et~al.(2024)Wu, Zou, Chen, Braverman, Gu, and Bartlett]{wu2024how}
Jingfeng Wu, Difan Zou, Zixiang Chen, Vladimir Braverman, Quanquan Gu, and Peter Bartlett.
\newblock How many pretraining tasks are needed for in-context learning of linear regression?
\newblock In \emph{The Twelfth International Conference on Learning Representations}, 2024.
\newblock URL \url{https://openreview.net/forum?id=vSh5ePa0ph}.

\bibitem[Xie et~al.(2022)Xie, Raghunathan, Liang, and Ma]{xie2022an}
Sang~Michael Xie, Aditi Raghunathan, Percy Liang, and Tengyu Ma.
\newblock An explanation of in-context learning as implicit bayesian inference.
\newblock In \emph{International Conference on Learning Representations}, 2022.
\newblock URL \url{https://openreview.net/forum?id=RdJVFCHjUMI}.

\bibitem[Xing et~al.(2024)Xing, Lin, Xu, Suh, Song, and Cheng]{xing2024theoretical}
Yue Xing, Xiaofeng Lin, Chenheng Xu, Namjoon Suh, Qifan Song, and Guang Cheng.
\newblock Theoretical understanding of in-context learning in shallow transformers with unstructured data.
\newblock \emph{arXiv preprint arXiv:2402.00743}, 2024.

\bibitem[Xiong et~al.(2024)Xiong, Cai, Cooper, Ge, Papageorgiou, Sifakis, Giannou, Lin, Yang, Agarwal, et~al.]{xiong2024everything}
Zheyang Xiong, Ziyang Cai, John Cooper, Albert Ge, Vasilis Papageorgiou, Zack Sifakis, Angeliki Giannou, Ziqian Lin, Liu Yang, Saurabh Agarwal, et~al.
\newblock Everything everywhere all at once: Llms can in-context learn multiple tasks in superposition.
\newblock \emph{arXiv preprint arXiv:2410.05603}, 2024.

\bibitem[Yang et~al.(2025)Yang, Lin, Lee, Papailiopoulos, and Nowak]{yang2025task}
Liu Yang, Ziqian Lin, Kangwook Lee, Dimitris Papailiopoulos, and Robert Nowak.
\newblock Task vectors in in-context learning: Emergence, formation, and benefit.
\newblock \emph{arXiv preprint arXiv:2501.09240}, 2025.

\bibitem[Zhang et~al.(2024)Zhang, Frei, and Bartlett]{zhang2024trained}
Ruiqi Zhang, Spencer Frei, and Peter~L Bartlett.
\newblock Trained transformers learn linear models in-context.
\newblock \emph{Journal of Machine Learning Research}, 25\penalty0 (49):\penalty0 1--55, 2024.

\bibitem[Zuo et~al.(2025)Zuo, Guerzhoy, and Guerzhoy]{zuo2025position}
Chunsheng Zuo, Pavel Guerzhoy, and Michael Guerzhoy.
\newblock Position information emerges in causal transformers without positional encodings via similarity of nearby embeddings.
\newblock In \emph{Proceedings of the 31st International Conference on Computational Linguistics}, pages 9418--9430, 2025.

\end{thebibliography}

\clearpage
\appendix

\section{Auxiliary Lemmas}

\begin{lemma}[Proposed in \cite{ahn2023transformers}] \label{lm:inf_d}
    Given positive objective function $f(A)$ taking parameters $A = \{A_i\}_{i=1}^n$, where $A_i \in \R^{d_i \times d_i}$. Let $\cS = \Pi_{i=1}^n \cS_i \subset \Pi_{i=1}^n \R^{d_i \times d_i}$ be a predefined parameter subspace. Define $\tA(t, R_i) = \{A_1, \cdots, A_i + tR_i, \cdots, A_n\}$ given $i \in [1,n]$, $R_i \in \R^{d_i \times d_i}$ and $t \in \R$. If for any $A \in \cS$ and $R_i \in \R^{d_i \times d_i}$, there exists $\tR_i \in \cS_i$ such that
    \begin{equation*}
        \eval{\frac{\dd}{\dd t} f\p*{\tA(t,\tR_i)}}_{t=0} \le \eval{\frac{\dd}{\dd t} f\p*{\tA(t,R_i)}}_{t=0},
    \end{equation*}
    then we have
    \begin{equation*}
        \inf_{A \in \cS} \sum_{i=1}^n \norm{\nabla_{A_i} f(A)}_F^2 = 0.
    \end{equation*}
\end{lemma}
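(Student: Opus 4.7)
The plan is to convert the hypothesis into a descent direction that stays within $\cS$, and then derive a contradiction with the positivity of $f$ via a minimizing-sequence argument.

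First, I would specialize the hypothesis by choosing $R_i = -\nabla_{A_i} f(A)$. Writing $g_i := \nabla_{A_i} f(A)$, the right-hand directional derivative evaluates to $-\norm{g_i}_F^2$, so the condition furnishes some $\tR_i \in \cS_i$ with $\inner{g_i}{\tR_i} \le -\norm{g_i}_F^2$. Since each $\cS_i$ is a linear subspace and $A \in \cS$, the perturbed tuple $\tA(t,\tR_i)$ stays in $\cS$ for every $t \in \R$, so at any $A \in \cS$ where $g_i \ne 0$ we obtain an in-subspace direction whose first-order decrease rate is at least $\norm{g_i}_F^2$.

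Next, I would argue by contradiction: assume $c := \inf_{A \in \cS}\sum_{i=1}^n \norm{\nabla_{A_i} f(A)}_F^2 > 0$. By pigeonhole, at every $A \in \cS$ some index $i$ satisfies $\norm{g_i}_F^2 \ge c/n$. Starting from any $A^{(0)} \in \cS$, I would iteratively select such an index $i_k$, pick the corresponding $\tR_{i_k} \in \cS_{i_k}$, and perform a line search along it to generate a sequence $\{A^{(k)}\} \subset \cS$. The goal is to show $f(A^{(k+1)}) \le f(A^{(k)}) - \delta$ for some $\delta > 0$ independent of $k$, which contradicts $f > 0$ after finitely many steps.

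The main obstacle is producing a \emph{uniform} per-step decrement. A first-order bound $\inner{g_i}{\tR_i} \le -\norm{g_i}_F^2$ alone does not guarantee a fixed decrease unless one also controls $\norm{\tR_i}$ and the local Lipschitz constant $L$ of $\nabla f$ along the iterates. The descent inequality $f(A + t\tR_i) \le f(A) + t\inner{g_i}{\tR_i} + \tfrac{L}{2} t^2 \norm{\tR_i}_F^2$, optimized in $t$, yields a decrement of at least $\norm{g_i}_F^4 /(2L \norm{\tR_i}_F^2)$. I would therefore choose $\tR_i$ to be the minimum-norm element of the half-space $\{\tR \in \cS_i : \inner{g_i}{\tR} \le -\norm{g_i}_F^2\}$, namely $\tR_i = -\norm{g_i}_F^2 \cdot P_{\cS_i}(g_i)/\norm{P_{\cS_i}(g_i)}_F^2$, which is well-defined because the hypothesis forces $P_{\cS_i}(g_i) \ne 0$ whenever $g_i \ne 0$; this gives a decrement of at least $\norm{P_{\cS_i}(g_i)}_F^2/(2L)$. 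In the paper's applications $f = \L$ is a polynomial in the parameters (Gaussian data), so $L$ is locally bounded, and coercivity of the regression risk on the structured $\cS$ keeps the iterates inside a bounded region — delivering the uniform decrement needed to complete the contradiction.
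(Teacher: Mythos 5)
Your proposal shares the paper's starting move — setting $R_i = -\nabla_{A_i} f(A)$ so that the directional derivative along the in-$\cS_i$ direction furnished by the hypothesis is at most $-\norm{\nabla_{A_i} f(A)}_F^2$ — but then diverges in how it tries to cash this in. The paper first sums the componentwise hypothesis into a full-tuple statement (there exists $\tR \in \cS$ with $\tfrac{d}{dt} f(A + t\tR)\big|_{t=0} \le \tfrac{d}{dt} f(A + tR)\big|_{t=0}$), picks $R = -\nabla_A f(A)$ to make the right side $-\norm{\nabla_A f(A)}_F^2$, and then closes the argument in one sentence by asserting that the $\cS$-constrained gradient flow induced by $\tR$ "will lead to unbounded descent," contradicting $f \ge 0$. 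You instead keep the argument coordinate-wise, use pigeonhole to isolate one index with a sizable gradient block, and set up a discrete line-search iteration controlled by a descent lemma. So the overall contradiction shape is the same, but the technical vehicle is different: continuous flow in the paper, discrete descent in your version.

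Your version is actually \emph{more} explicit about the real gap here: converting a pointwise negative directional-derivative bound into a genuinely unbounded decrease of $f$ requires some control beyond the first-order inequality, and the paper simply does not supply it (no argument that the flow is complete, nor that the descent rate is uniform along it). You name the missing ingredients — a Lipschitz bound for $\nabla f$ and boundedness of the iterates — which is a more honest account. That said, your closure still has two genuine holes. First, even granting that $L$ stays bounded and iterates stay in a compact set, the per-step decrement you derive is $\norm{P_{\cS_i}(g_i)}_F^2 / (2L)$, and the hypothesis only guarantees $P_{\cS_i}(g_i) \ne 0$ when $g_i \ne 0$, not a quantitative lower bound on its norm; you would still need a compactness-plus-continuity argument (e.g.\ that $A \mapsto \max_i \norm{P_{\cS_i}(\nabla_{A_i} f(A))}_F^2$ attains a strictly positive minimum on the compact sublevel set where $\sum_i \norm{\nabla_{A_i} f}_F^2 \ge c$, which itself requires the hypothesis to rule out a zero) to make the decrement uniform. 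Second, by invoking polynomiality and coercivity of the specific ICL risk, your argument no longer proves the lemma as an abstract statement about an arbitrary positive differentiable $f$; it proves it only for the paper's concrete $\L$. The paper's own proof likewise does not close these gaps rigorously, so neither exposition would survive a stern referee — but if the goal is to reproduce the paper's argument, the flow-based one-liner is what they do, whereas your discrete-descent scheme is a different route that happens to expose the same soft spot more visibly.
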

\begin{proof}
    This lemma is proved as part of the main theorems in \cite{ahn2023transformers}. We rearrange the proof here to accommodate arbitrary function of matrices. Firstly, notice that for any $R = \{R_i\}_{i=1}^n \in \Pi_{i=1}^n \R^{d_i \times d_i}$,
    \begin{equation*}
        \sum_{i=1}^n \eval{\frac{\dd}{\dd t} f\p*{\tA(t,\tR_i)}}_{t=0} = \eval{\frac{\dd}{\dd t} f(A + tR)}_{t=0}.
    \end{equation*}
    Therefore, the provided precondition is equivalent to stating that for any $A \in \cS$ and $R \in \Pi_{i=1}^n \R^{d_i \times d_i}$, there exists $\tR \in \cS$ such that:
    \begin{equation*}
        \eval{\frac{\dd}{\dd t} f\p*{A + t\tR}}_{t=0} \le \eval{\frac{\dd}{\dd t} f(A + tR)}_{t=0}.
    \end{equation*}
    Let $R = -\nabla_A f(A)$, we then have
    \begin{align*}
        \eval{\frac{\dd}{\dd t} f(A + tR)}_{t=0} &= \eval{\inner*{\frac{\dd f\p*{A - t\nabla_A f(A)}}{\dd \p*{A - t\nabla_A f(A)}} }{ \frac{\dd \p*{A - t\nabla_A f(A)}}{t}}}_{t=0} \\
        &= \inner*{\nabla_A f(A)}{-\nabla_A f(A)} = -\norm{\nabla_A f(A)}_F^2.
    \end{align*}
    If the infimum of $\norm{\nabla_A f(A)}_F^2$ is not zero but some positive value $p$, then the $\cS$-constrained gradient flow induced by $\tR$ will lead to unbounded descent:
    \begin{equation*}
        \eval{\frac{\dd}{\dd t} f\p*{A + t\tR}}_{t=0} \le -p.
    \end{equation*}
    This contradicts the fact that $f(A) \ge 0$ and concludes the proof.
\end{proof}

The following lemma is an extension of Lemma 5 in \cite{ahn2023transformers} by accommodating multivariate $y$ samples as well as enabling a wider range of demonstration and transformer parameter configurations.
\begin{lemma} \label{lm:risk_reform}
    Let $x_1, \cdots, x_{n+1}$ be \iid samples from an input distribution, and let $W$ be sampled independently of $\{x_i\}_{i=1}^{n+1}$. Let $Z_0 \in \R^{(2d) \times N}$, where $N \in \bbZ$, be constructed of form
    \begin{equation*}
        Z_0 = \env{bmatrix}{\ast & \cdots & \ast & \ast \\ \ast & \cdots & \ast & 0_d} \in \R^{(2d) \times N},
    \end{equation*}
    where the $\ast$ parts can be arbitrarily constructed from $\{x_i\}_{i=1}^{n+1}$ and $W$. Let $\tZ_0$ be defined as replacing the zero part of $Z_0$ by $y_{n+1}$:
    \begin{equation*}
        \tZ_0 = \env{bmatrix}{\ast & \cdots & \ast & \ast \\ \ast & \cdots & \ast & y_{n+1}} \in \R^{(2d) \times N}.
    \end{equation*}
    Let $\tZ_l$ be the output of the $l$-th layer of the linear transformer, and let $\tX_l, \tY_l \in \R^{d \times N}$ be the first and last $d$ rows of $\tZ_l$, respectively. Suppose that the $\{Q_l\}_{l=1}^L$ matrices are of form
    \begin{equation*}
        Q_l = \env{bmatrix}{\underbrace{\ast}_{\text{$d$ columns}} & 0_{(2d+d_p) \times d} & \underbrace{\ast}_{\text{$d_p$ columns}}},
    \end{equation*}
    Then the in-context risk of this $L$-layer linear transformer is equivalent to
    \begin{equation}
        \L\p*{\{V_l,Q_l\}_{l=1}^L} = \E_{\tZ_0, W} \brk*{\tr \p*{(I_N - M) \tY_L^\top \tY_L (I_N - M)}}.
    \end{equation}
\end{lemma}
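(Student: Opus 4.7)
The plan is to show that passing $\tZ_0$ through the $L$ attention layers differs from passing $Z_0$ by the same rank-one shift at every depth, localized entirely in the $y$-block of the final column. This identifies the right-hand side with the per-query squared residual that defines the ICL risk \eqref{eq:icl_risk}. Let $u \in \R^{2d}$ be the vector whose first $d$ coordinates vanish and whose last $d$ coordinates equal $y_{n+1}$, and let $e_N \in \R^N$ be the last standard basis vector, so that by construction $\tZ_0 - Z_0 = u\,e_N^\top$.

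The core of the proof is an induction on $l$ establishing that $\tZ_l - Z_l = u\,e_N^\top$ for every $l \in \{0, \ldots, L\}$. Two cancellations drive the inductive step. First, since $M = \diag(I_{N-1},0)$, both $e_N^\top M = 0$ and $M e_N = 0$ hold; the former yields $V_{l+1}\tZ_l M = V_{l+1} Z_l M$, so the value side is unaffected by the perturbation. Second, the perturbation $u$ is supported on the middle $d$ entries of each column of $\env{bmatrix}{\tZ_l \\ P}$, which are exactly the entries annihilated by the prescribed zero middle-$d$ columns of $Q_{l+1}$. Hence the right factor $Q_{l+1}\env{bmatrix}{\tZ_l \\ P}$ is unchanged, so the difference $A(\tZ_l) - A(Z_l)$ in the attention weight matrix arises only from the left factor $\env{bmatrix}{\tZ_l^\top & P^\top}$, whose deviation from $\env{bmatrix}{Z_l^\top & P^\top}$ is supported on its last row alone. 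Writing this difference as $e_N v^\top$ for some row vector $v$, the attention-output difference becomes $V_{l+1} Z_l M \cdot e_N v^\top = V_{l+1} Z_l (M e_N) v^\top = 0$. The skip connection then preserves the shift exactly, giving $\tZ_{l+1} - Z_{l+1} = u\,e_N^\top$.

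Applying the claim at $l = L$ yields $(\tY_L)_{:,\,N} = (Z_L)_{d+1:2d,\,N} + y_{n+1} = \TF\p*{Z_0; \{V_l,Q_l\}_{l=1}^L} + W x_\test$, and therefore $\E\norm{(\tY_L)_{:,\,N}}_2^2$ equals the ICL risk \eqref{eq:icl_risk}. Finally, since $I_N - M = e_N e_N^\top$ is the rank-one projector onto the last coordinate, a direct computation gives $\tr\p*{(I_N - M)\,\tY_L^\top \tY_L\,(I_N - M)} = \norm{\tY_L e_N}_2^2 = \norm{(\tY_L)_{:,\,N}}_2^2$, which completes the proof after taking expectations. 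The only thing requiring care is the simultaneous interplay of the two sparsity structures — the causal mask $M$ killing the last row/column and the prescribed zero middle-$d$ columns of $Q_l$ killing any contribution of the $y$-block to the attention keys — which together confine the $y_{n+1}$ injection to a persistent rank-one shift across all layers; everything else is matrix bookkeeping.
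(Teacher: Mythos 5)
Your proof is correct and follows essentially the same route as the paper's: both track the rank-one perturbation $\tZ_l - Z_l = u\,e_N^\top$ through the layers by induction, using the three same cancellations (the causal mask annihilating the last row/column of the perturbation, and the zero middle-$d$ columns of $Q_l$ annihilating the $y$-block contribution to the attention keys), and both finish by identifying $(I_N - M) = e_N e_N^\top$ with the projection onto the final query column.
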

\begin{proof}
    Let the $V_l$ and $Q_l$ matrices be represented as:
    \begin{equation*}
        V_l = \env{bmatrix}{V_l^1 \\ V_l^2}, \quad Q_l = \env{bmatrix}{Q_l^1 & 0 & Q_l^2},
    \end{equation*}
    where $V_l^1, V_l^2 \in \R^{d \times 2d}, Q_l^1 \in \R^{(2d+d_p) \times d}, Q_l^2 \in \R^{(2d+d_p) \times d_p}$. Then the update rule in \cref{eq:layer_z} can be rephrased as
    \begin{gather*}
        X_l = X_{l-1} + \frac{1}{n} V_l^1 Z_{l-1} M \brk*{Z_{l-1}^\top, P} \p*{Q_l^1 X_{l-1} + Q_l^2 P}, \\
        Y_l = Y_{l-1} + \frac{1}{n} V_l^2 Z_{l-1} M \brk*{Z_{l-1}^\top, P} \p*{Q_l^1 X_{l-1} + Q_l^2 P}.
    \end{gather*}
    Let $\Delta_Z = \tZ_0 - Z_0$, i.e. an all-zero matrix except that the last half of the last column is $y_{n+1}$. Let $\Delta_X$ and $\Delta_Y$ be its first and last $d$ rows respectively, then $\Delta_X = 0$ and $\Delta_Y = \env{bmatrix}{0 & \cdots & 0 & y_{n+1}}$. Note that $\tZ_l = Z_l + \Delta_Z$ holds for $l = 0$ trivially. Now suppose it holds for some $l = k - 1$, then
    \begin{align*}
        \tX_k &= \tX_{k-1} + \frac{1}{n} V_k^1 \tZ_{k-1} M \brk*{\tZ_{k-1}^\top, P} \p*{Q_k^1 \tX_{k-1} + Q_k^2 P} \\
        &= X_{k-1} + \frac{1}{n} V_k^1 Z_{k-1} M \brk*{Z_{k-1}^\top, P} \p*{Q_k^1 X_{k-1} + Q_k^2 P} \\
        &\quad+ \frac{1}{n} V_k^1 \Delta_Z M \brk*{Z_{k-1}^\top, P} \p*{Q_k^1 X_{k-1} + Q_k^2 P} \\
        &\quad+ \frac{1}{n} V_k^1 Z_{k-1} M \brk*{\Delta_Z^\top, 0_{d_p \times d_p}} \p*{Q_k^1 X_{k-1} + Q_k^2 P} \\
        &\quad+ \frac{1}{n} V_k^1 \Delta_Z M \brk*{\Delta_Z^\top, 0_{d_p \times d_p}} \p*{Q_k^1 X_{k-1} + Q_k^2 P} \\
        &= X_{k-1} + \frac{1}{n} V_k^1 Z_{k-1} M \brk*{Z_{k-1}^\top, P} \p*{Q_k^1 X_{k-1} + Q_k^2 P} = X_k,
    \end{align*}
    where the last step holds by noticing that $\Delta_Z M = 0$. Similarly, one can prove that
    \begin{equation*}
        \tY_k = Y_{k-1} + \Delta_Y + \frac{1}{n} V_k^2 Z_{k-1} M \brk*{Z_{k-1}^\top, P} \p*{Q_k^1 X_{k-1} + Q_k^2 P} = Y_k + \Delta_Y.
    \end{equation*}
    Therefore, it holds that for any $l \in [1,L]$, $\tZ_l = Z_l + \Delta_Z$. Recall the in-context risk in \cref{eq:icl_risk}:
    \begin{align*}
        \L \p*{\{V_l,Q_l\}_{l=1}^L} &= \E_{Z_0,W} \norm{(Z_L)_{(d+1:2d),N} + y_{n+1}}_2^2 \\
        &= \E_{Z_0,W} \norm{\p*{Y_L + \Delta_Y}(I_N - M)}_2^2 \\
        &= \E_{\tZ_0, W} \brk*{\tr \p*{(I_N - M) \tY_L^\top \tY_L (I_N - M)}}.
    \end{align*}
    The proof is complete.
\end{proof}

\section{Proof of Theoretical Results} \label{sec:proof}

\subsection{Proof of Proposition \ref{prop:rank_one}}

\begin{proof}
    We will first prove sufficiency. Let $W = ab^\top$ be a rank-one matrix, where $a, b \in \R^d$. The given conditions imply that $x = Wy = WWx = ab^\top ab^\top x$, we then have $b^\top x = b^\top ab^\top ab^\top x = (b^\top a)^2 b^\top x$. Since $b^\top x \ne 0$, we can conclude that $b^\top a = \pm 1$. Then, $x = ab^\top ab^\top x = \pm ab^\top x = \pm y$.

    To prove the necessity, it suffices to show that selecting $W = xx^\top / \norm{x}_2^2$ when $x = y$ satisfies the given conditions (alternatively, select $W = -xx^\top / \norm{x}_2^2$ when $x = -y$).
\end{proof}

\subsection{Proof of Theorem \ref{thm:pair_crit}}

\begin{proof}
    To enhance the readability of the notations in this proof, we will drop the constant $\frac{1}{n}$ factor in linear attention. Furthermore, we will simplify $\tZ_0$, $\tX_0$ and $\tY_0$ in \Cref{lm:risk_reform} as $Z_0$, $X_0$ and $Y_0$ respectively. This results in different definitions compared to the original ones, but we will not refer to the original definitions in the remainder of this proof.
    \begin{equation*}
        Z_0 = \env{bmatrix}{X_0 \\ Y_0} = \env{bmatrix}{x_1 & 0 & \cdots & x_n & 0 & x_\test & 0 \\ 0 & y_1 & \cdots & 0 & y_n & 0 & y_\test} \in \R^{(2d) \times (2n+2)}.
    \end{equation*}
    Let $Z_l$ be the output of the $l$-th layer of the transformer, and let $X_l, Y_l \in \R^{d \times (2n+2)}$ denote the first and last $d$ rows of $Z_l$, respectively. Under the constraint in \cref{eq:vq_abcd}, we can verify that
    \begin{equation}
        \env{gathered}{
        X_l = X_{l-1} + A_l X_{l-1} M (X_{l-1}^\top C_l X_{l-1} + D_l), \\
        Y_l = Y_{l-1} + B_l Y_{l-1} M (X_{l-1}^\top C_l X_{l-1} + D_l).
        } \label{eq:layer_xy}
    \end{equation}
    
    In the following analysis, we will use $f(A \leftarrow B)$ to denote the result of the function $f$ of $A$ when replacing the value of $A$ with $B$. Additionally, we denote $f(A \leftarrow B \ast A)$ as $f(A \xleftarrow{\!\ast\!} B)$ for any operator $\ast$. Therefore, $f(A \addto B) = f(A \leftarrow A + B)$. We also denote $f(A \multo B) = f(A \leftarrow BA)$ and $f(A \mulby B) = f(A \leftarrow AB)$ for convenience.
    
    Our goal is proving that, for any $E \in A \cup B \cup C \cup D$ and an arbitrary matrix $R \in \R^{d \times d}$ ($\R^{d_p \times d_p}$ for $D$), there exists $\tR \in \cS_I$ ($\cS_\Sigma$ for $C$, $\cS_P$ for $D$) such that
    \begin{equation} \label{eq:better_R_pair}
        \eval{\frac{\dd}{\dd t} \L(E \addto t\tR)}_{t=0} \le \eval{\frac{\dd}{\dd t} \L(E \addto tR)}_{t=0}.
    \end{equation}
    
    Let $\bX_0 = [0, x_1, \cdots, 0, x_\test]$ be a function of $X_0$, we then have $Y_0 = W \bX_0$. Let $U_\bot \in \R^{d \times d}$ be a uniformly sampled random orthonormal matrix, and let $U_\Sigma = \Sigma^{1/2} U_\bot \Sigma^{-1/2}$. One can verify that $U_\Sigma^{-1} = \Sigma^{1/2} U_\bot^\top \Sigma^{-1/2}$. By applying \Cref{lm:risk_reform} and the fact that $X_0 \deq U_\Sigma X_0$, we have that for any given matrix $R$,
    \begin{align*}
        & \eval{\frac{\dd}{\dd t} \L(E \addto tR)}_{t=0} \\
        &= \eval{\frac{\dd}{\dd t} \E_{X_0, W} \brk*{\tr \p*{(I - M) Y_L^\top(E \addto tR) Y_L(E \addto tR) (I - M)}}}_{t=0} \\
        &= 2\E_{X_0, W} \brk*{\tr \p*{(I - M) Y_L^\top \eval{\frac{\dd}{\dd t} Y_L(E \addto tR)}_{t=0} (I - M)}} \\
        &= 2\E_{X_0, W, U_\bot} \brk*{\tr \p*{(I - M) Y_L^\top(X_0 \multo U_\Sigma) \eval{\frac{\dd}{\dd t} Y_L(X_0 \multo U_\Sigma, E \addto tR)}_{t=0} (I - M)}}.
    \end{align*}
    Next, we will show that \cref{eq:better_R_pair} holds for each one of $A_i, B_i, C_i, D_i$ for any $i \in [1, L]$.
    
    \textbf{1. \Cref{eq:better_R_pair} holds for $A_i$.}
    
    We first show that for any $l \in [1,L]$, the following equations hold:
    \begin{align}
        X_l(X_0 \multo U_\Sigma) &= U_\Sigma X_l, \label{eq:recur_xx} \\
        \eval{\frac{\dd}{\dd t} X_l(X_0 \multo U_\Sigma, A_i \addto tR)}_{t=0} &= U_\Sigma \eval{\frac{\dd}{\dd t} X_l(A_i \addto tU_\Sigma^{-1} R U_\Sigma)}_{t=0}. \label{eq:recur_xxa}
    \end{align}
    It is straightforward to verify that \cref{eq:recur_xx} holds for $l = 0$. Now suppose that \cref{eq:recur_xx} holds for some $l = k - 1$, we then have
    \begin{align*}
        & X_k(X_0 \multo U_\Sigma) \\
        &= X_{k-1}(X_0 \multo U_\Sigma) + A_l X_{k-1}(X_0 \multo U_\Sigma) M \p*{X_{k-1}^\top(X_0 \multo U_\Sigma) C_l X_{k-1}(X_0 \multo U_\Sigma) + D_l} \\
        &= U_\Sigma X_{k-1} + A_l U_\Sigma X_{k-1} M \p*{X_{k-1}^\top U_\Sigma^\top C_l U_\Sigma X_{k-1} + D_l} \\
        &= U_\Sigma \p*{X_{k-1} + A_l X_{k-1} M \p*{X_{k-1}^\top C_l X_{k-1} + D_l}} = U_\Sigma X_k,
    \end{align*}
    where the third equality follows by noticing that when $A_l = a_l I_d$ and $C_l = c_l \Sigma^{-1}$, we have $A_l U_\Sigma = U_\Sigma A_l$ and $U_\Sigma^\top C_l U_\Sigma = C_l$. This concludes the proof of \cref{eq:recur_xx}.

    We now turn to the proof of \cref{eq:recur_xxa}. Notice that when $l < i$, we naturally have
    \begin{equation*}
        \eval{\frac{\dd}{\dd t} X_l(X_0 \multo U_\Sigma, A_i \addto tR)}_{t=0} = U_\Sigma \eval{\frac{\dd}{\dd t} X_l(A_i \addto tU_\Sigma^{-1} R U_\Sigma)}_{t=0} = 0.
    \end{equation*}
    When $l = i$, it is easy to verify that
    \begin{align*}
        \eval{\frac{\dd}{\dd t} X_l(X_0 \multo U_\Sigma, A_i \addto tR)}_{t=0} &= R U_\Sigma X_{l-1} M (X_{l-1}^\top U_\Sigma^\top C_l U_\Sigma X_{l-1} + D_l) \\
        &= U_\Sigma \cdot U_\Sigma^{-1} R U_\Sigma M (X_{l-1}^\top C_l X_{l-1} + D_l) \\
        &= U_\Sigma \eval{\frac{\dd}{\dd t} X_l(A_i \addto tU_\Sigma^{-1} R U_\Sigma)}_{t=0}.
    \end{align*}
    Now suppose that \cref{eq:recur_xxa} holds for some $l = k - 1 \ge i$, one can verify that:
    \begin{align*}
        & \eval{\frac{\dd}{\dd t} X_k(X_0 \multo U_\Sigma, A_i \addto tR)}_{t=0} \\
        &= \eval{\frac{\dd}{\dd t} X_{k-1}(X_0 \multo U_\Sigma, A_i \addto tR)}_{t=0} + \frac{\dd}{\dd t} A_k X_{k-1}(X_0 \multo U_\Sigma, A_i \addto tR) M \\
        &\quad \cdot \eval{\p*{X_{k-1}^\top(X_0 \multo U_\Sigma, A_i \addto tR) C_k X_{k-1}(X_0 \multo U_\Sigma, A_i \addto tR) + D_k}}_{t=0} \\
        &= \eval{\frac{\dd}{\dd t} X_{k-1}(X_0 \multo U_\Sigma, A_i \addto tR)}_{t=0} \\
        &\quad+ A_k \eval{\frac{\dd}{\dd t} X_{k-1}(X_0 \multo U_\Sigma, A_i \addto tR)}_{t=0} M \p*{X_{k-1}^\top(X_0 \multo U_\Sigma) C_k X_{k-1}(X_0 \multo U_\Sigma) + D_k} \\
        &\quad+ A_k X_{k-1}(X_0 \multo U_\Sigma) M \eval{\frac{\dd}{\dd t} X_{k-1}^\top(X_0 \multo U_\Sigma, A_i \addto tR)}_{t=0} C_k X_{k-1}(X_0 \multo U_\Sigma) \\
        &\quad+ A_k X_{k-1}(X_0 \multo U_\Sigma) M X_{k-1}^\top(X_0 \multo U_\Sigma) C_k \eval{\frac{\dd}{\dd t} X_{k-1}(X_0 \multo U_\Sigma, A_i \addto tR)}_{t=0} \\
        &= U_\Sigma \eval{\frac{\dd}{\dd t} X_{k-1}(A_i \addto tU_\Sigma^{-1} R U_\Sigma)}_{t=0} \\
        &\quad+ U_\Sigma A_k \eval{\frac{\dd}{\dd t} X_{k-1}(A_i \addto tU_\Sigma^{-1} R U_\Sigma)}_{t=0} M \p*{X_{k-1}^\top C_k X_{k-1} + D_k} \\
        &\quad+ U_\Sigma A_k X_{k-1} M \eval{\frac{\dd}{\dd t} X_{k-1}^\top(A_i \addto tU_\Sigma^{-1} R U_\Sigma)}_{t=0} C_k X_{k-1} \\
        &\quad+ U_\Sigma A_k X_{k-1} M X_{k-1}^\top C_k \eval{\frac{\dd}{\dd t} X_{k-1}(A_i \addto tU_\Sigma^{-1} R U_\Sigma)}_{t=0} \\
        &= U_\Sigma \eval{\frac{\dd}{\dd t} X_{k-1}(A_i \addto tU_\Sigma^{-1} R U_\Sigma)}_{t=0} + U_\Sigma \frac{\dd}{\dd t} A_k X_{k-1}(A_i \addto tU_\Sigma^{-1} R U_\Sigma) M \\
        &\quad \cdot \eval{\p*{X_{k-1}^\top(A_i \addto tU_\Sigma^{-1} R U_\Sigma) C_k X_{k-1}(A_i \addto tU_\Sigma^{-1} R U_\Sigma) + D_k}}_{t=0} \\
        &= U_\Sigma \eval{\frac{\dd}{\dd t} X_k(A_i \addto tU_\Sigma^{-1} R U_\Sigma)}_{t=0}.
    \end{align*}
    This completes the proof of \cref{eq:recur_xxa}.
    
    Under the condition that $B_l = b_l I_d$ for some $b_l \in \R$, we can simplify \cref{eq:layer_xy} as
    \begin{align*}
        Y_l &= Y_{l-1} + b_l Y_{l-1} M (X_{l-1}^\top C_l X_{l-1} + D_l) \\
        &= Y_{l-1} \p*{I + b_l M (X_{l-1}^\top C_l X_{l-1} + D_l)} \\
        &= Y_0 \prod_{j=1}^l \p*{I + b_j M (X_{j-1}^\top C_j X_{j-1} + D_j)}.
    \end{align*}
    Define $G_l = \bX_0 \prod_{j=1}^l \p*{I + b_j M (X_{j-1}^\top C_j X_{j-1} + D_j)}$, then it satisfies that $Y_l = W G_l$. We are ready to prove that similar results to \cref{eq:recur_xx,eq:recur_xxa} also hold for $G_l$, $l \in [1,L]$:
    \begin{align}
        G_l(X_0 \multo U_\Sigma) &= U_\Sigma G_l, \label{eq:recur_gx}\\
        \eval{\frac{\dd}{\dd t} G_l(X_0 \multo U_\Sigma, A_i \addto tR)}_{t=0} &= U_\Sigma \eval{\frac{\dd}{\dd t} G_l(A_i \addto tU_\Sigma^{-1} R U_\Sigma)}_{t=0}. \label{eq:recur_gxa}
    \end{align}
    Notice that \cref{eq:recur_gx} holds trivially for $l = 0$ as $G_0 = \bX_0$. Now suppose that \cref{eq:recur_gx} holds for some $l = k - 1$, we then have
    \begin{align*}
        G_k(X_0 \multo U_\Sigma) &= G_{k-1}(X_0 \multo U_\Sigma) \p*{I + b_k M (X_{k-1}^\top(X_0 \multo U_\Sigma) C_k X_{k-1}(X_0 \multo U_\Sigma) + D_k)} \\
        &= U_\Sigma G_{k-1} \p*{I + b_k M (X_{k-1}^\top C_k X_{k-1} + D_k)} = U_\Sigma G_k.
    \end{align*}
    This concludes \cref{eq:recur_gx}. As for \cref{eq:recur_gxa}, notice that both sides equal $0$ when $l \le i$. Now suppose that \cref{eq:recur_gxa} holds for some $l = k - 1 \ge i$, we then have:
    \begin{align*}
        & \eval{\frac{\dd}{\dd t} G_k(X_0 \multo U_\Sigma, A_i \addto tR)}_{t=0} \\
        &= \eval{\frac{\dd}{\dd t} G_{k-1}(X_0 \multo U_\Sigma, A_i \addto tR)}_{t=0} + \frac{\dd}{\dd t} b_k G_{k-1}(X_0 \multo U_\Sigma, A_i \addto tR) M \\
        &\quad\cdot \eval{\p*{X_{k-1}^\top(X_0 \multo U_\Sigma, A_i \addto tR) C_k X_{k-1}(X_0 \multo U_\Sigma, A_i \addto tR) + D_k}}_{t=0} \\
        &= \eval{\frac{\dd}{\dd t} G_{k-1}(X_0 \multo U_\Sigma, A_i \addto tR)}_{t=0} \\
        &\quad+ b_k \eval{\frac{\dd}{\dd t} G_{k-1}(X_0 \multo U_\Sigma, A_i \addto tR)}_{t=0} M \p*{X_{k-1}^\top(X_0 \multo U_\Sigma) C_k X_{k-1}(X_0 \multo U_\Sigma) + D_k} \\
        &\quad+ b_k G_{k-1}(X_0 \multo U_\Sigma) M \eval{\frac{\dd}{\dd t} X_{k-1}^\top(X_0 \multo U_\Sigma, A_i \addto tR)}_{t=0} C_k X_{k-1}(X_0 \multo U_\Sigma) \\
        &\quad+ b_k G_{k-1}(X_0 \multo U_\Sigma) M X_{k-1}^\top(X_0 \multo U_\Sigma) C_k \eval{\frac{\dd}{\dd t} X_{k-1}(X_0 \multo U_\Sigma, A_i \addto tR)}_{t=0} \\
        &= U_\Sigma \eval{\frac{\dd}{\dd t} G_{k-1}(A_i \addto tU_\Sigma^{-1} R U_\Sigma)}_{t=0} \\
        &\quad+ b_k U_\Sigma \eval{\frac{\dd}{\dd t} G_{k-1}(A_i \addto tU_\Sigma^{-1} R U_\Sigma)}_{t=0} M \p*{X_{k-1}^\top C_k X_{k-1} + D_k} \\
        &\quad+ b_k U_\Sigma G_{k-1} M \eval{\frac{\dd}{\dd t} X_{k-1}^\top(A_i \addto tU_\Sigma^{-1} R U_\Sigma)}_{t=0} C_k X_{k-1} \\
        &\quad+ b_k U_\Sigma G_{k-1} M X_{k-1}^\top C_k \eval{\frac{\dd}{\dd t} X_{k-1}(A_i \addto tU_\Sigma^{-1} R U_\Sigma)}_{t=0} \\
        &= U_\Sigma \eval{\frac{\dd}{\dd t} G_k(A_i \addto tU_\Sigma^{-1} R U_\Sigma)}_{t=0}.
    \end{align*}
    This concludes the proof of \cref{eq:recur_gxa}. Consider the in-context risk:
    \begin{align*}
        & \eval{\frac{\dd}{\dd t} \L(A_i \addto tR)}_{t=0} \\
        &= 2\E_{X_0, W, U_\bot} \brk*{\tr \p*{(I - M) Y_L^\top(X_0 \multo U_\Sigma) \eval{\frac{\dd}{\dd t} Y_L(X_0 \multo U_\Sigma, A_i \addto tR)}_{t=0} (I - M)}} \\
        &= 2\E_{X_0, W, U_\bot} \brk*{\tr \p*{(I - M) G_L^\top U_\Sigma^\top W^\top W U_\Sigma \eval{\frac{\dd}{\dd t} G_L(A_i \addto tU_\Sigma^{-1} R U_\Sigma)}_{t=0} (I - M)}} \\
        &= 2d \E_{X_0} \brk*{\tr \p*{(I - M) G_L^\top \Sigma^{-1} \eval{\frac{\dd}{\dd t} \E_{U_\bot} \brk*{G_L(A_i \addto tU_\Sigma^{-1} R U_\Sigma)}}_{t=0} (I - M)}} \\
        &= 2d \E_{X_0} \brk*{\tr \p*{(I - M) G_L^\top \Sigma^{-1} \eval{\frac{\dd}{\dd t} G_L(A_i \addto \E_{U_\bot} \brk*{tU_\Sigma^{-1} R U_\Sigma})}_{t=0} (I - M)}} \\
        &= 2d \E_{X_0} \brk*{\tr \p*{(I - M) G_L^\top \Sigma^{-1} \eval{\frac{\dd}{\dd t} G_L(A_i \addto tr I_d)}_{t=0} (I - M)}} \\
        &= \eval{\frac{\dd}{\dd t} \E_{X_0, W} \brk*{\tr \p*{(I - M) Y_L^\top(A_i \addto tr I_d) Y_L(A_i \addto tr I_d) (I - M)}}}_{t=0} \\
        &= \eval{\frac{\dd}{\dd t} \L(A_i \addto tr I_d)}_{t=0},
    \end{align*}
    where $r = \E_{U_\bot} [U_\Sigma^{-1} R U_\Sigma] = \frac{1}{d} \tr(\Sigma^{-1/2} R \Sigma^{1/2})$, and we used the fact that $U_\Sigma^\top \Sigma^{-1} U_\Sigma = \Sigma^{-1}$, and $\eval{\frac{\dd}{\dd t} G_L(A_i \addto tR)}_{t=0}$ is affine in $R$. This concludes that \cref{eq:better_R_pair} holds for $A_i$, $i \in [1,L]$.
    
    \textbf{2. \Cref{eq:better_R_pair} holds for $B_i$.}
    
    From the recursive expressions in \cref{eq:layer_xy}, we can conclude that the values of $X_l$ do not depend on $B_i$. Therefore, we naturally have
    \begin{equation}
        X_l(B_i \addto tR) = X_l. \label{eq:recur_xxb}
    \end{equation}
    Next, we would like to show that for any $l \in [1,L]$,
    \begin{equation}
        \E_{W} \brk*{W^\top \eval{\frac{\dd}{\dd t} Y_l(B_i \addto tR)}_{t=0}} = \Sigma^{-1} \eval{\frac{\dd}{\dd t} G_l(b_i \addto t\tr(R))}_{t=0}. \label{eq:recur_yb}
    \end{equation}
    When $l < i$, we can easily verify \cref{eq:recur_yb} since both sides equal $0$. When $l = i$, we can get
    \begin{align*}
        \E_{W} \brk*{W^\top \eval{\frac{\dd}{\dd t} Y_l(B_i \addto tR)}_{t=0}} &= \E_{W} \brk*{W^\top R Y_{l-1} M \p*{X_{l-1}^\top C_l X_{l-1} + D_l}} \\
        &= \E_{W} \brk*{W^\top R W} G_{l-1} M \p*{X_{l-1}^\top C_l X_{l-1} + D_l} \\
        &= \tr(R) \Sigma^{-1} G_{l-1} M \p*{X_{l-1}^\top C_l X_{l-1} + D_l} \\
        &= \Sigma^{-1} \eval{\frac{\dd}{\dd t} G_l(b_i \addto t\tr(R))}_{t=0}.
    \end{align*}
    Suppose that \cref{eq:recur_yb} holds for some $l = k - 1 \ge i$. One can then verify
    \begin{align*}
        & \E_{W} \brk*{W^\top \eval{\frac{\dd}{\dd t} Y_k(B_i \addto tR)}_{t=0}} \\
        &= \E_{W} \brk*{W^\top \eval{\frac{\dd}{\dd t} Y_{k-1}(B_i \addto tR) \p*{I + b_k M (X_{k-1}^\top C_k X_{k-1} + D_k)}}_{t=0}} \\
        &= \E_{W} \brk*{W^\top \eval{\frac{\dd}{\dd t} Y_{k-1}(B_i \addto tR)}_{t=0}} \p*{I + b_k M (X_{k-1}^\top C_k X_{k-1} + D_k)} \\
        &= \Sigma^{-1} \eval{\frac{\dd}{\dd t} G_{k-1}(b_i \addto t\tr(R))}_{t=0} \p*{I + b_k M (X_{k-1}^\top C_k X_{k-1} + D_k)} \\
        &= \Sigma^{-1} \eval{\frac{\dd}{\dd t} G_k(b_i \addto t\tr(R))}_{t=0}.
    \end{align*}
    The proof of \cref{eq:recur_yb} is complete. Now, look at the in-context risk, we have
    \begin{align*}
        \eval{\frac{\dd}{\dd t} \L(B_i \addto tR)}_{t=0} &= 2\E_{X_0, W} \brk*{\tr \p*{(I - M) Y_L^\top \eval{\frac{\dd}{\dd t} Y_L(B_i \addto tR)}_{t=0} (I - M)}} \\
        &= 2\E_{X_0} \brk*{\tr \p*{(I - M) G_L^\top \E_{W} \brk*{W^\top \eval{\frac{\dd}{\dd t} Y_L(B_i \addto tR)}_{t=0}} (I - M)}} \\
        &= 2\E_{X_0} \brk*{\tr \p*{(I - M) G_L^\top \Sigma^{-1} \eval{\frac{\dd}{\dd t} G_L(b_i \addto t\tr(R))}_{t=0} (I - M)}} \\
        &= 2\E_{X_0, W} \brk*{\tr \p*{(I - M) Y_L^\top \eval{\frac{\dd}{\dd t} Y_L(B_i \addto t\tr(R) I_d)}_{t=0} (I - M)}} \\
        &= \eval{\frac{\dd}{\dd t} \L(B_i \addto t\tr(R) I_d)}_{t=0}.
    \end{align*}
    This concludes that \cref{eq:better_R_pair} holds for $B_i$, $i \in [1,L]$.
    
    \textbf{3. \Cref{eq:better_R_pair} holds for $C_i$.}
    
    Similar to the $A_i$ case, we will first prove that for any $l \in [1,L]$,
    \begin{equation}
        \eval{\frac{\dd}{\dd t} X_l(X_0 \multo U_\Sigma, C_i \addto tR)}_{t=0} = U_\Sigma \eval{\frac{\dd}{\dd t} X_l(C_i \addto tU_\Sigma^\top R U_\Sigma)}_{t=0}. \label{eq:recur_xxc}
    \end{equation}
    The equation above holds trivially for $l < i$. For the case $l = i$, we have
    \begin{align*}
        & \eval{\frac{\dd}{\dd t} X_l(X_0 \multo U_\Sigma, C_i \addto tR)}_{t=0} \\
        &= A_j X_{l-1}(X_0 \multo U_\Sigma) M X_{l-1}^\top(X_0 \multo U_\Sigma) R X_{l-1}(X_0 \multo U_\Sigma) \\
        &= U_\Sigma A_j X_{l-1} M X_{l-1}^\top U_\Sigma^\top R U_\Sigma X_{l-1} = U_\Sigma \eval{\frac{\dd}{\dd t} X_l(C_i \addto tU_\Sigma^\top R U_\Sigma)}_{t=0}.
    \end{align*}
    One can conclude the proof of \cref{eq:recur_xxc} through a similar reduction as \cref{eq:recur_xxa} for $l > i$ layers. Next, we establish the corresponding result for $G_l$:
    \begin{equation}
        \eval{\frac{\dd}{\dd t} G_l(X_0 \multo U_\Sigma, C_i \addto tR)}_{t=0} = U_\Sigma \eval{\frac{\dd}{\dd t} G_l(C_i \addto tU_\Sigma^\top R U_\Sigma)}_{t=0}. \label{eq:recur_gxc}
    \end{equation}
    This equation holds trivially for $l < i$. When taking $l = i$, we can verify that
    \begin{align*}
        \eval{\frac{\dd}{\dd t} G_l(X_0 \multo U_\Sigma, C_i \addto tR)}_{t=0} &= b_l G_{l-1}(X_0 \multo U_\Sigma) M X_{l-1}^\top(X_0 \multo U_\Sigma) R X_{l-1}(X_0 \multo U_\Sigma) \\
        &= b_l U_\Sigma G_{l-1}(X_0 \multo U_\Sigma) M X_{l-1}^\top U_\Sigma^\top R U_\Sigma X_{l-1} \\
        &= U_\Sigma \eval{\frac{\dd}{\dd t} G_l(C_i \addto tU_\Sigma^\top R U_\Sigma)}_{t=0}.
    \end{align*}
    For $l > i$ layers, one can follow similar reductions as \cref{eq:recur_gxa} to finish the proof. We then consider the in-context risk:
    \begin{align*}
        & \eval{\frac{\dd}{\dd t} \L(C_i \addto tR)}_{t=0} \\
        &= 2\E_{X_0, W, U_\bot} \brk*{\tr \p*{(I - M) Y_L^\top(X_0 \multo U_\Sigma) \eval{\frac{\dd}{\dd t} Y_L(X_0 \multo U_\Sigma, C_i \addto tR)}_{t=0} (I - M)}} \\
        &= 2\E_{X_0, W, U_\bot} \brk*{\tr \p*{(I - M) G_L^\top U_\Sigma^\top W^\top W U_\Sigma \eval{\frac{\dd}{\dd t} G_L(C_i \addto tR)}_{t=0} (I - M)}} \\
        &= 2d \E_{X_0} \brk*{\tr \p*{(I - M) G_L^\top \Sigma^{-1} \eval{\frac{\dd}{\dd t} \E_{U_\bot} \brk*{G_L(C_i \addto tU_\Sigma^\top R U_\Sigma)}}_{t=0} (I - M)}} \\
        &= 2d \E_{X_0} \brk*{\tr \p*{(I - M) G_L^\top \Sigma^{-1} \eval{\frac{\dd}{\dd t} G_L(C_i \addto tr \Sigma^{-1})}_{t=0} (I - M)}} \\
        &= \eval{\frac{\dd}{\dd t} \E_{X_0, W} \brk*{\tr \p*{(I - M) Y_L^\top(C_i \addto tr \Sigma^{-1}) Y_L(C_i \addto tr \Sigma^{-1}) (I - M)}}}_{t=0} \\
        &= \eval{\frac{\dd}{\dd t} \L(C_i \addto tr \Sigma^{-1})}_{t=0},
    \end{align*}
    where $r = \E_{U_\bot} [U_\Sigma^\top R U_\Sigma] = \frac{1}{d} \tr(\Sigma^{1/2} R \Sigma^{1/2})$. This concludes that \cref{eq:better_R_pair} holds for $C_i$.
    
    \textbf{4. \Cref{eq:better_R_pair} holds for $D_i$.}
    
    Let $U_p \in \R^{n \times n}$ be a uniformly sampled permutation matrix, i.e., a binary matrix that has exactly one $1$ entry in each row and column with all other entries $0$. Let $U_\circ = \diag(U_p \otimes I_2, I_2) \in \R^{(2n+2) \times (2n+2)}$. One can verify that by multiplying $X_0 U_\circ$, it is equal to shuffling the first $n$ $2$-column sub-blocks of $X_0$ and keeping the last $2$ columns unchanged.
    
    Then, consider a matrix $U_\xi = \diag(\xi_1, \dots, \xi_{n+1}) \in \R^{(n+1) \times (n+1)}$ where $\xi_i \iidsim \mathrm{Unif}\{\pm 1\}$, i.e., a diagonal matrix with random $\pm 1$ entries. Let $U_\pm = U_\xi \otimes I_2 \in \R^{(2n+2) \times (2n+2)}$. Thus, $U_\pm = U_\pm^\top$ and $X_0 U_\pm$ is randomly flipping the sign of each $2$-column sub-block in $X_0$.
    
    We are going to prove that for any $l \in [1,L]$, recalling that $f(A \mulby B) = f(A \leftarrow AB)$,
    \begin{gather}
        X_l(X_0 \mulby U_\pm U_\circ) = X_l U_\pm U_\circ, \label{eq:recur_xxp_pair}\\
        G_l(X_0 \mulby U_\pm U_\circ) = G_l U_\pm U_\circ. \label{eq:recur_gxp_pair}
    \end{gather}
    \Cref{eq:recur_xxp_pair} holds trivially for $l = 0$. When \cref{eq:recur_xxp_pair} holds for some $l = k - 1$, we can verify that
    \begin{align*}
        & X_k(X_0 \mulby U_\pm U_\circ) \\
        &= X_{k-1}U_\pm U_\circ + A_k X_{k-1}U_\pm U_\circ M \p*{U_\circ^\top U_\pm^\top X_{k-1}^\top C_k X_{k-1} U_\pm U_\circ + D_k} \\
        &= X_{k-1}U_\pm U_\circ + A_k X_{k-1}U_\pm U_\circ M U_\circ^\top U_\pm^\top \p*{X_{k-1}^\top C_k X_{k-1} + U_\pm U_\circ D_k U_\circ^\top U_\pm^\top} U_\pm U_\circ \\
        &= X_{k-1}U_\pm U_\circ + A_k X_{k-1} M \p*{X_{k-1}^\top C_k X_{k-1} + D_k} U_\pm U_\circ \\
        &= \p*{X_{k-1} + A_k X_{k-1} M \p*{X_{k-1}^\top C_k X_{k-1} + D_k}} U_\pm U_\circ = X_k U_\pm U_\circ.
    \end{align*}
    It uses the fact that there exists some $D_i^1, D_i^2 \in \R^{2 \times 2}$ such that $D_i = \diag(I_n \otimes D_i^1, D_i^2)$, so shuffling the first $n$ $2 \times 2$ diagonal sub-blocks of $D_i$ does not change the matrix, and we have $U_\circ D_i U_\circ^\top = D_i$. Similarly, we have $U_\pm D_k U_\pm^\top = D_k$. This concludes \cref{eq:recur_xxp_pair}, and \cref{eq:recur_gxp_pair} could be acquired similarly.
    
    Next, we will establish the following equalities for $X_l$ and $G_l$:
    \begin{align}
        \eval{\frac{\dd}{\dd t} X_l(X_0 \mulby U_\pm U_\circ, D_i \addto tR)}_{t=0} &= \eval{\frac{\dd}{\dd t} X_l(D_i \addto tU_\pm U_\circ R U_\circ^\top U_\pm^\top)}_{t=0} U_\pm U_\circ, \label{eq:recur_xpd_pair} \\
        \eval{\frac{\dd}{\dd t} G_l(X_0 \mulby U_\pm U_\circ, D_i \addto tR)}_{t=0} &= \eval{\frac{\dd}{\dd t} G_l(D_i \addto tU_\pm U_\circ R U_\circ^\top U_\pm^\top)}_{t=0} U_\pm U_\circ. \label{eq:recur_gpd_pair}
    \end{align}
    The proof follows by similar reductions as proving \cref{eq:recur_xxa,eq:recur_gxa}.
    
    Finally, we consider the in-context risk under the permutation of $U_p$ and $U_\xi$. Since each pair of $(x_i, y_i)$ is equivalently sampled from Gaussian distributions, we have $X_0 \deq X_0 U_\pm U_\circ$. Therefore,
    \begin{align*}
        & \eval{\frac{\dd}{\dd t} \L(D_i \addto tR)}_{t=0} \\
        &= 2\E_{X_0, W} \brk*{\tr \p*{(I - M) Y_L^\top \eval{\frac{\dd}{\dd t} Y_L(D_i \addto tR)}_{t=0} (I - M)}} \\
        &= 2\E_{X_0, W, U_p, U_\xi} \brk*{\tr \p*{(I \!-\! M) Y_L^\top(X_0 \mulby U_\pm U_\circ) \vphantom{\eval{}_{t=0}} \eval{\frac{\dd}{\dd t} Y_L(X_0 \mulby U_\pm U_\circ, D_i \addto tR)}_{t=0} (I - M)}} \\
        &= 2d\E_{X_0, U_p, U_\xi} \brk*{\tr \p*{(I \!-\! M) U_\circ^\top U_\pm^\top G_L^\top \Sigma^{-1} \eval{\frac{\dd}{\dd t} G_L(D_i \!\addto\! tU_\pm U_\circ R U_\circ^\top U_\pm^\top)}_{t=0} U_\pm U_\circ (I \!-\! M)}} \\
        &= 2d\E_{X_0} \brk*{\tr \p*{(I - M) G_L^\top \Sigma^{-1} \eval{\frac{\dd}{\dd t} \E_{U_p, U_\xi} \brk*{G_L(D_i \addto tU_\pm U_\circ^\top R U_\circ U_\pm)}}_{t=0} (I - M)}} \\
        &= 2d\E_{X_0} \brk*{\tr \p*{(I - M) G_L^\top \Sigma^{-1} \eval{\frac{\dd}{\dd t} G_L(D_i \addto t\tR)}_{t=0} (I - M)}} = \eval{\frac{\dd}{\dd t} \L(D_i \addto t\tR)}_{t=0},
    \end{align*}
    where $\tR = \E_{U_p, U_\xi} [U_\pm U_\circ^\top R U_\circ U_\pm] = \diag(I_n \otimes R^1, R^2)$, $R^1 = \frac{1}{n} \sum_{j=1}^n R_j$, $R^2 = R_{n+1}$, and $R_j$ is the $j$-th $2 \times 2$ diagonal block of $R$. The 4th equality uses the fact that $\tr[(I-M) A (I-M)]$ is extracting the right-bottom element of $A$, so it should be equal to $\tr[(I-M) U_\circ^\top U_\pm^\top A U_\pm U_\circ (I-M)]$ for any matrix $A$. This concludes that \cref{eq:better_R_pair} holds for $D_i$.

    Till now, we have proved that \cref{eq:better_R_pair} holds for each one of $A_i, B_i, C_i, D_i$. The proof of the whole theorem is then completed by applying \Cref{lm:inf_d}.
\end{proof}

\subsection{Proof of Theorem \ref{thm:triple_crit}}

\begin{proof}
    In this proof, we follow the same notations as the proof of \Cref{thm:pair_crit}, where the constant $\frac{1}{n}$ factor is dropped and $\tZ_0$, $\tX_0$, $\tY_0$ are simplified as $Z_0$, $X_0$, $Y_0$ respectively.
    \begin{equation}
        Z_0 = \env{bmatrix}{x_1 & 0 & 0 & \cdots & x_n & 0 & 0 & x_\test & 0 & 0 \\
        0 & 0 & y_1 & \cdots & 0 & 0 & y_n & 0 & 0 & y_\test} \in \R^{(2d) \times (3n+3)}.
    \end{equation}
    Let $Z_l \in \R^{2d \times (3n+3)}$ be the $l$-th layer's output and let $X_l, Y_l \in \R^{d \times (3n+3)}$ be its first and last $d$ rows. Our goal is to prove that, for any $E \in A \cup B \cup C \cup D$ and an arbitrary matrix $R \in \R^{d \times d}$ ($\R^{d_p \times d_p} for D$), there exists $\tR \in \cS_I$ ($\cS_\Sigma$ for C, $\cS_P$ for D) such that
    \begin{equation} \label{eq:better_R_triple}
        \frac{\dd}{\dd t} \eval{\L(E \addto t\tR)}_{t=0} \le \frac{\dd}{\dd t} \eval{\L(E \addto tR)}_{t=0}.
    \end{equation}
    The proofs of \cref{eq:better_R_triple} for $A_i$, $B_i$ and $C_i$ are identical with the proof of \Cref{thm:pair_crit} so we omit them. We will be focusing on $D_i$ for the rest of the proof.
    
    Let $U_p^s \in \R^{n \times n}$ and $U_p^t \in \R^{(n+1) \times (n+1)}$ be uniformly sampled permutation matrices. Let $U_\circ^s = \diag(U_p^s, 1) \otimes \diag(1, 0, 1)$ and $U_\circ^t = U_p^t \otimes \diag(0, 1, 0)$. Therefore, $X_0 U_\circ^s$ is shuffling the $1$-st and $3$-rd columns among each $3$-column sub-block of $X_0$ (except for the last $3$-column sub-block), and $X_0 U_\circ^s$ is shuffling the $2$-nd column among each $3$-column sub-block. Next, let $U_\xi^s, U_\xi^t \in \R^{(n+1) \times (n+1)}$ be diagonal matrices with uniformly sampled $\pm 1$ entries. Define $U_\pm^s = U_\xi^s \otimes \diag(1, 0, 1)$ and $U_\pm^t = U_\xi^t \otimes \diag(0, 1, 0)$. It can then be verified that $X_0 U_\pm^s U_\pm^t \deq X_0$.
    
    To simplify the notations, let $U_\equiv$ denote $U_\pm^s U_\pm^t U_\circ^s U_\circ^t$. We will focus on a subset of $\cS_P$:
    \begin{equation*}
        \cS_P^\prime = \set*{\diag(I_n \otimes \Lambda_1, \Lambda_2) + I_{n+1} \otimes \Lambda_3 }{ \Lambda_1, \Lambda_2 \in \cM\p*{\env{smallmatrix}{1 & 0 & 1 \\ 0 & 0 & 0 \\ 1 & 0 & 1}}, \Lambda_3 \in \cM\p*{\env{smallmatrix}{0 & 0 & 0 \\ 0 & 1 & 0 \\ 0 & 0 & 0}} }.
    \end{equation*}
    Assume $D_k = \diag(I_n \otimes \Lambda_1, \Lambda_2) + I_{n+1} \otimes \Lambda_3 \in \cS_P^\prime$ as defined above, one can verify that it is a block-diagonal matrix constructed from the same $3 \times 3$ sub-blocks, and thus is invariant under $U_\equiv D_k U_\equiv^\top$. We will then prove that for any $l \in [1,L]$,
    \begin{gather}
        X_l(X_0 \mulby U_\equiv) = X_l U_\equiv, \label{eq:recur_xxp_triple} \\
        G_l(X_0 \mulby U_\equiv) = G_l U_\equiv, \label{eq:recur_gxp_triple} \\
        \eval{\frac{\dd}{\dd t} X_l(X_0 \mulby U_\equiv, D_i \addto tR)}_{t=0} = \eval{\frac{\dd}{\dd t} X_l(D_i \addto tU_\equiv R U_\equiv^\top)}_{t=0} U_\equiv, \label{eq:recur_xpd_triple} \\
        \eval{\frac{\dd}{\dd t} G_l(X_0 \mulby U_\equiv, D_i \addto tR)}_{t=0} = \eval{\frac{\dd}{\dd t} G_l(D_i \addto tU_\equiv R U_\equiv^\top)}_{t=0} U_\equiv. \label{eq:recur_gpd_triple}
    \end{gather}
    These results can be acquired by similar proofs as \cref{eq:recur_xxp_pair,eq:recur_gxp_pair,eq:recur_xpd_pair,eq:recur_gpd_pair}. We then consider the in-context risk under the permutations of $U_\equiv$. Similarly, we have $X_0 \deq X_0 U_\equiv$ and
    \begin{align*}
        & \eval{\frac{\dd}{\dd t} \L(D_i \addto tR)}_{t=0} \\
        &= 2\E_{X_0, W} \brk*{\tr \p*{(I - M) Y_L^\top \eval{\frac{\dd}{\dd t} Y_L(D_i \addto tR)}_{t=0} (I - M)}} \\
        &= 2d\E_{X_0, U_\equiv} \brk*{\tr \p*{(I - M) G_L^\top(X_0 \mulby U_\equiv) \Sigma^{-1} \eval{\frac{\dd}{\dd t} G_L(X_0 \mulby U_\equiv, D_i \addto tR)}_{t=0} (I - M)}} \\
        &= 2d\E_{X_0, U_\equiv} \brk*{\tr \p*{(I - M) U_\equiv^\top G_L^\top \Sigma^{-1} \vphantom{\eval{}} \eval{\frac{\dd}{\dd t} G_L(D_i \addto tU_\equiv R U_\equiv^\top)}_{t=0} U_\equiv (I - M)}} \\
        &= 2d\E_{X_0} \brk*{\tr \p*{(I - M) G_L^\top \Sigma^{-1} \eval{\frac{\dd}{\dd t} G_L(D_i \addto t\E_{U_\equiv} \brk*{U_\equiv R U_\equiv^\top})}_{t=0} (I - M)}} \\
        &= \eval{\frac{\dd}{\dd t} \L(D_i \addto t\tR)}_{t=0}.
    \end{align*}
    Let $R_j$ be the $j$-th $3 \times 3$ diagonal block of $R$, then $R^1 = \frac{1}{n} \sum_{j=1}^n R_j \circ \p*{\env{smallmatrix}{1 & 0 & 1 \\ 0 & 0 & 0 \\ 1 & 0 & 1}}$, $R^2 = R_{n+1} \circ \p*{\env{smallmatrix}{1 & 0 & 1 \\ 0 & 0 & 0 \\ 1 & 0 & 1}}$, $R^3 = \frac{1}{n+1} \sum_{j=1}^{n+1} R_j \circ \p*{\env{smallmatrix}{0 & 0 & 0 \\ 0 & 1 & 0 \\ 0 & 0 & 0}}$ and $\tR = \E_{U_\equiv} \brk*{U_\equiv R U_\equiv^\top} = \diag(I_n \otimes R^1, R^2) + I_{n+1} \otimes R^3$. This indicates that \cref{eq:better_R_triple} holds for each $D_i \in \cS_P^\prime$, and thus the proof of the whole theorem completes by applying \Cref{lm:inf_d} and noticing that $\cS_P^\prime \subset \cS_P$.
\end{proof}

\subsection{Proof of Theorem \ref{thm:pair_crit_ins}}

\begin{proof}
    We keep the same notations as the proof of \Cref{thm:pair_crit}, dropping the $\frac{1}{n}$ factor and simplifying $\tX_0$, $\tY_0$, $\tZ_0$ as $X_0$, $Y_0$, $Z_0$, as follows:
    \begin{equation}
        Z_0 = \env{bmatrix}{0 & 0 & \cdots & 0 & 0 & 0 & 0 \\
        x_1 & y_1 & \cdots & x_n & y_n & x_\test & y_\test} \in \R^{(2d) \times (2n+2)}.
    \end{equation}
    Note that we now have $X_0$ and $Y_0$ containing both $x_i$ and $y_i$. Define
    \begin{gather*}
        X = \env{bmatrix}{x_1 & 0 & \cdots & x_n & 0 & x_\test & 0}, \\
        \bX = \env{bmatrix}{0 & x_1 & \cdots & 0 & x_n & 0 & x_\test}, \\
        Y = \env{bmatrix}{0 & y_1 & \cdots & 0 & y_n & 0 & y_\test}.
    \end{gather*}
    we then have $Y_0 = X + Y = X + W\bX$. From the parameter configuration in \cref{eq:vq_ins0}, the update rule of the first attention layer is
    \begin{equation}
        X_1 = A_1 Y_0 M D_1 = A_1 X M D_1, \quad Y_1 = Y_0 = X + W\bX.
    \end{equation}
    The update rule for the following layers is the same as \cref{eq:layer_xy}. We are going to prove that, for any $E \in A \cup B \cup C \cup D$ and an arbitrary matrix $R \in \R^{d \times d}$ ($\R^{d_p \times d_p}$ for $D$), there exists $\tR \in \cS_I$ ($\cS_\Sigma$ for $C$, $\cS_P$ for $D$) such that
    \begin{equation} \label{eq:better_R_ins}
        \eval{\frac{\dd}{\dd t} \L(E \addto t\tR)}_{t=0} \le \eval{\frac{\dd}{\dd t} \L(E \addto tR)}_{t=0}.
    \end{equation}
    Similarly to \Cref{thm:pair_crit}, we uniformly sample $U_\perp \in \R^{d \times d}$ as an orthonormal random matrix, and let $U_\Sigma = \Sigma^{1/2} U_\perp \Sigma^{-1/2}$. Under the condition that $B_l = b_l I_d$ for some $b_l \in \R$, we have
    \begin{equation*}
        Y_l = Y_1 \prod_{j=2}^l \p*{I + b_j M \p*{X_{j-1}^\top C_j X_{j-1} + D_j}}.
    \end{equation*}
    Let $F_l = X \prod_{j=2}^l \p*{I \!+\! b_j M \p*{X_{j-1}^\top C_j X_{j-1} \!+\! D_j}}$, $G_l = \bX \prod_{j=2}^l \p*{I \!+\! b_j M \p*{X_{j-1}^\top C_j X_{j-1} \!+\! D_j}}$, we then have $Y_l = F_l + W G_l$. According to \Cref{lm:risk_reform},
    \begin{align*}
        & \eval{\frac{\dd}{\dd t} \L(E \addto tR)}_{t=0} \\
        &= \eval{\frac{\dd}{\dd t} \E_{X_0, W} \brk*{\tr \p*{(I - M) Y_L^\top(E \addto tR) Y_L(E \addto tR) (I - M)}}}_{t=0} \\
        &= \eval{\frac{\dd}{\dd t} \E_{X_0, W} \brk*{\tr \p*{(I - M) F_L^\top(E \addto tR) F_L(E \addto tR) (I - M)}}}_{t=0} \\
        &\quad+ \eval{\frac{\dd}{\dd t} \E_{X_0, W} \brk*{\tr \p*{(I - M) G_L^\top(E \addto tR) W^\top W G_L(E \addto tR) (I - M)}}}_{t=0} \\
        &= 2\E_{X_0} \brk*{\tr \p*{(I - M) F_L^\top \eval{\frac{\dd}{\dd t} F_L(E \addto tR)}_{t=0} (I - M)}} \\
        &\quad+ 2d\E_{X_0} \brk*{\tr \p*{(I - M) G_L^\top \Sigma^{-1} \eval{\frac{\dd}{\dd t} G_L(E \addto tR)}_{t=0} (I - M)}}.
    \end{align*}
    Next, we will show that \cref{eq:better_R_ins} holds for each one of $A_i, B_i, C_i, D_i$ for any $i \in [1, L]$.
    
    \textbf{1. \Cref{eq:better_R_ins} holds for $A_i$.}

    One can easily verify that \cref{eq:recur_xx,eq:recur_xxa} still hold. Furthermore, \cref{eq:recur_gx,eq:recur_gxa} hold for both $F_l$ and $G_l$. With these observations, we can then verify
    \begin{align*}
        & \eval{\frac{\dd}{\dd t} \L(A_i \addto tR)}_{t=0} \\
        &= 2\E_{X_0, U_\perp} \brk*{\tr \p*{(I - M) F_L^\top(X \multo U_\Sigma) \eval{\frac{\dd}{\dd t} F_L(X \multo U_\Sigma, A_i \addto tR)}_{t=0} (I - M)}} \\
        &\quad+ 2d\E_{X_0, U_\perp} \brk*{\tr \p*{(I - M) G_L^\top(X \multo U_\Sigma) \Sigma^{-1} \eval{\frac{\dd}{\dd t} G_L(X \multo U_\Sigma, A_i \addto tR)}_{t=0} (I - M)}} \\
        &= 2\E_{X_0, U_\perp} \brk*{\tr \p*{(I - M) F_L^\top U_\Sigma^\top U_\Sigma \eval{\frac{\dd}{\dd t} F_L(A_i \addto tU_\Sigma^{-1} R U_\Sigma)}_{t=0} (I - M)}} \\
        &\quad+ 2d\E_{X_0, U_\perp} \brk*{\tr \p*{(I - M) G_L^\top U_\Sigma^\top \Sigma^{-1} U_\Sigma \eval{\frac{\dd}{\dd t} G_L(A_i \addto tU_\Sigma^{-1} R U_\Sigma)}_{t=0} (I - M)}} \\
        &= 2\E_{X_0} \brk*{\tr \p*{(I - M) F_L^\top \eval{\frac{\dd}{\dd t} F_L(A_i \addto trI_d)}_{t=0} (I - M)}} \\
        &\quad+ 2d\E_{X_0} \brk*{\tr \p*{(I - M) G_L^\top \Sigma^{-1} \eval{\frac{\dd}{\dd t} G_L(A_i \addto trI_d)}_{t=0} (I - M)}} \\
        &= \eval{\frac{\dd}{\dd t} \L(A_i \addto trI_d)}_{t=0},
    \end{align*}
    where $r = \E_{U_\perp}[U_\Sigma^{-1} R U_\Sigma] = \frac{1}{d} \tr(\Sigma^{-1/2}R\Sigma^{1/2})$.
    
    \textbf{2. \Cref{eq:better_R_ins} holds for $B_i$.}

    From the definition of $F_l$ and $G_l$, we can verify that
    \begin{align*}
        & \eval{\frac{\dd}{\dd t} Y_l(B_i \addto tR)}_{t=0} \\
        &= R (F_{i-1} + W G_{i-1}) M (X_{i-1}^\top C_i X_{i-1} + D_i) \prod_{j=i+1}^l \p*{I + b_j M (X_{j-1}^\top C_j X_{j-1} + D_j)}.
    \end{align*}
    Define
    \begin{gather*}
        \bF_l^i = \p*{F_{i-1} + B_i F_{i-1} M (X_{i-1}^\top C_i X_{i-1} + D_i)} \prod_{j=i+1}^l \p*{I + b_j M (X_{j-1}^\top C_j X_{j-1} + D_j)}, \\
        \bG_l^i = \p*{WG_{i-1} + B_i WG_{i-1} M (X_{i-1}^\top C_i X_{i-1} + D_i)} \prod_{j=i+1}^l \p*{I + b_j M (X_{j-1}^\top C_j X_{j-1} + D_j)},
    \end{gather*}
    We then have
    \begin{equation*}
        \eval{\frac{\dd}{\dd t} Y_l(B_i \addto tR)}_{t=0} = \eval{\frac{\dd}{\dd t} \bF_l^i(B_i \addto tR)}_{t=0} + \eval{\frac{\dd}{\dd t} \bG_l^i(B_i \addto tR)}_{t=0}.
    \end{equation*}
    Similar to \cref{eq:recur_gxa,eq:recur_yb}, we can prove that
    \begin{gather*}
        \eval{\frac{\dd}{\dd t} \bF_l^i(X_0 \multo U_\Sigma, B_i \addto tR)}_{t=0} = U_\Sigma \eval{\frac{\dd}{\dd t} \bF_l^i(B_i \addto tU_\Sigma^{-1} R U_\Sigma)}_{t=0}, \\
        \E_{W} \brk*{W^\top \eval{\frac{\dd}{\dd t} \bG_l^i(B_i \addto tR)}_{t=0}} = \Sigma^{-1} \eval{\frac{\dd}{\dd t} \bG_l^i(B_i \addto t\tr(R)I_d)}_{t=0}.
    \end{gather*}
    Without loss of generality, we assume that $r = \frac{1}{d} \tr(\Sigma^{-1/2}R\Sigma^{1/2}) \le \frac{1}{d} \tr(R)$, and let $\gamma = rd / \tr(R) \le 1$. Then, one can verify that
    \begin{align*}
        & \eval{\frac{\dd}{\dd t} \L(B_i \addto tR)}_{t=0} \\
        &= 2\E_{X_0, U_\perp} \brk*{\tr \p*{(I - M) F_l^\top(X \multo U_\Sigma) \eval{\frac{\dd}{\dd t} {\bF_l^i}(X \multo U_\Sigma, B_i \addto tR)}_{t=0} (I - M)}} \\
        &\quad+ 2\E_{X_0, W} \brk*{\tr \p*{(I - M) G_l^\top W^\top \eval{\frac{\dd}{\dd t} \bG_l^i(B_i \addto tR)}_{t=0} (I - M)}} \\
        &= 2\E_{X_0} \brk*{\tr \p*{(I - M) F_l^\top \eval{\frac{\dd}{\dd t} {\bF_l^i}(B_i \addto trI_d)}_{t=0} (I - M)}} \\
        &\quad+ 2\E_{X_0} \brk*{\tr \p*{(I - M) G_l^\top \Sigma^{-1} \eval{\frac{\dd}{\dd t} \bG_l^i(B_i \addto t\tr(R)I_d)}_{t=0} (I - M)}} \\
        &= 2\E_{X_0} \brk*{\tr \p*{(I - M) F_l^\top \eval{\frac{\dd}{\dd t} F_l(B_i \addto trI_d)}_{t=0} (I - M)}} \\
        &\quad+ \frac{1}{\gamma} 2d \E_{X_0} \brk*{\tr \p*{(I - M) G_l^\top \Sigma^{-1} \eval{\frac{\dd}{\dd t} G_l(B_i \addto trI_d)}_{t=0} (I - M)}} \\
        &= \p*{\frac{1}{\gamma} -1} 2d \E_{X_0} \brk*{\tr \p*{(I - M) G_l^\top \Sigma^{-1} \eval{\frac{\dd}{\dd t} G_l(B_i \addto trI_d)}_{t=0} (I - M)}} \\
        &\quad+ \eval{\frac{\dd}{\dd t} \L(B_i \addto trI_d)}_{t=0} \ge \eval{\frac{\dd}{\dd t} \L(B_i \addto trI_d)}_{t=0}.
    \end{align*}
    The last inequality assumes the positivity of the term involving $G_l$. Otherwise, one can simply flip the numerator and denominator of $\gamma$ and scale the derivative of $F_l$ instead of $G_l$ to yield an additional positive term besides the risk term to finish the proof.
    
    \textbf{3. \Cref{eq:better_R_ins} holds for $C_i$, $D_i$.}

    Similarly, one can verify that \cref{eq:recur_xxc,eq:recur_gxc} still hold (also \cref{eq:recur_xxp_pair,eq:recur_gxp_pair,eq:recur_xpd_pair,eq:recur_gpd_pair}), and finish the proof by following the same reductions as \Cref{thm:pair_crit} with $F_l$ and $G_l$.
\end{proof}

\subsection{Proof of Proposition \ref{thm:opt_d4}}

\begin{proof}
    Let $A_l = a_l I_d$, $B_l = b_l I_d$, $C_l = c_l I_d$ and $D_l = \diag(I_n \otimes D_l^1, D_l^2) + I_{n+1} \otimes D_l^3 + D_l^4 \otimes D_l^5$ for $l \in [1,2]$. Let $Z_l \in \R^{2d \times (3n+3)}$ be the output of the $l$-th attention layer, and let $X_l, Y_l \in \R^{d \times (3n+3)}$ be its first and last $d$ rows respectively. Note that $Y_l$ in this proof does not contain $y_\test$.

    Let $D_1^1 = \p*{\env{smallmatrix}{d_x^x & 0 & d_x^y \\ 0 & 0 & 0 \\ d_y^x & 0 & d_y^y}}$, $D_1^2 = \p*{\env{smallmatrix}{s_x & 0 & s_y \\ 0 & 0 & 0 \\ 0 & 0 & 0}}$ (note that the last row of $D_1^2$ is masked out by $M$, so we simply set it to $0$), and $D_1^5 = \p*{\env{smallmatrix}{0 & t_x & 0 \\ 0 & 0 & 0 \\ 0 & t_y & 0}}$. We use $D$ as an abbreviation for $D_1^4$, and use $d_{i,j}$ to denote the elements in $D$. One can verify that
    \begin{align*}
        X_1 &= X_0 + a_1 X_0 M \p*{\diag(I_n \otimes D_1^1, D_1^2) + I_{n+1} \otimes D_1^3 + D_1^4 \otimes D_1^5} \\
        &= \begin{array}{ccccc}
            [ & (1 + a_1 d_x^x) x_1 & a_1 t_x \sum_{i=1}^{n+1} d_{i,1} x_i & a_1 d_x^y x_1 & \\
            & & \cdots & & \\
            & (1 + a_1 d_x^x) x_n & a_1 t_x \sum_{i=1}^{n+1} d_{i,n} x_i & a_1 d_x^y x_n & \\
            & (1 + a_1 d_x^x) x_\test & a_1 t_x \sum_{i=1}^{n+1} d_{i,n+1} x_i & a_1 d_x^y x_\test & ]
        \end{array}.
    \end{align*}
    Similarly, we have
    \begin{align*}
        Y_1 &= Y_0 + b_1 Y_0 M \p*{\diag(I_n \otimes D_1^1, D_1^2) + I_{n+1} \otimes D_1^3 + D_1^4 \otimes D_1^5} \\
        &= \begin{array}{ccccc}
            [ & b_1 d_y^x y_1 & b_1 t_y \sum_{i=1}^{n} d_{i,1} y_i & (1 + b_1 d_y^y) y_1 & \\
            & & \cdots & & \\
            & b_1 d_y^x y_n & b_1 t_y \sum_{i=1}^{n} d_{i,n} y_i & (1 + b_1 d_y^y) y_n & \\
            & 0 & b_1 t_y \sum_{i=1}^{n} d_{i,n+1} y_i & 0 & ]
        \end{array}.
    \end{align*}
    By the definition of linear attention, we can show that
    \begin{align*}
        \TF(Z_0; \{V_l, Q_l\}_{l=1}^2) &= (Y_2)_{3n+3} = b_2 Y_1 M \p*{c_2 X_1^\top (X_1)_{3n+3} + (D_2)_{3n+3}} \\
        &= b_2 c_2 a_1 d_x^y \p*{\sum_{i=1}^{3n+2} (Y_1)_i (X_1)_i^\top} x_\test.
    \end{align*}
    Define $\Delta X_1 = \env{bmatrix}{0 & a_1 t_x d_{n+1,1} x_\test & 0 & \cdots & 0 & a_1 t_x d_{n+1,n+1} x_\test & 0}$, and let $\bX_1 = X_1 - \Delta X_1$, then $\TF(Z_0; \{V_l, Q_l\}_{l=1}^2) = \TF(Z_0; \{V_l, Q_l\}_{l=1}^2, X_1 \leftarrow \bX_1) + \TF(Z_0; \{V_l, Q_l\}_{l=1}^2, X_1 \leftarrow \Delta X_1)$. Let $b_1 d_y^x (1 + a_1 d_x^x) + (1 + b_1 d_y^x) a_1 d_x^x = a$, $b_1 t_y a_1 t_x = b$, $b_2 c_2 a_1 d_x^y = c$, we then have
    \begin{align}
        \TF(Z_0; \{V_l, Q_l\}_{l=1}^2, X_1 \leftarrow \bX_1) &= c \p*{a \sum_{i=1}^n y_i x_i^\top + b \sum_{i=1}^{n+1} \p*{\sum_{j=1}^n d_{j,i} y_j} \p*{\sum_{j=1}^n d_{j,i} x_j^\top}} x_\test \notag\\
        &= c \p*{a \sum_{i=1}^n y_i x_i^\top + b \sum_{j=1}^n \sum_{k=1}^n \p*{\sum_{i=1}^{n+1} d_{j,i} d_{k,i}} y_j x_k^\top} x_\test, \label{eq:b_x}\\
        \TF(Z_0; \{V_l, Q_l\}_{l=1}^2, X_1 \leftarrow \Delta X_1) &= bc \sum_{i=1}^{n+1} \sum_{j=1}^n d_{j,i} y_j d_{n+1,i} x_\test^\top x_\test \notag\\
        &= bc \sum_{j=1}^n \p*{\sum_{i=1}^{n+1} d_{j,i} d_{n+1,i}} y_j x_\test^\top x_\test. \label{eq:delta_x}
    \end{align}
    Now consider the in-context risk,
    \begin{align*}
        & \L(V,Q) = \E_{Z_0,W} \norm{\TF(Z_0; \{V,Q\}) + W x_\test}_2^2 \\
        &= \E_{Z_0,W} \brk*{\p*{\TF(Z_0; \{V,Q\}) + W x_\test}^\top \p*{\TF(Z_0; \{V,Q\}) + W x_\test}} \\
        &= \E_{Z_0,W} \brk*{\p*{\TF(Z_0; \{V,Q\}, X_1 \leftarrow \bX_1) + W x_\test}^\top \p*{\TF(Z_0; \{V,Q\}, X_1 \leftarrow \bX_1) + W x_\test}} \\
        &\quad+ 2\E_{Z_0,W} \brk*{\TF(Z_0; \{V,Q\}, X_1 \leftarrow \Delta X_1)^\top \p*{\TF(Z_0; \{V,Q\}, X_1 \leftarrow \bX_1) + W x_\test}} \\
        &\quad+ \E_{Z_0,W} \brk*{\TF(Z_0; \{V,Q\}, X_1 \leftarrow \Delta X_1)^\top \TF(Z_0; \{V,Q\}, X_1 \leftarrow \Delta X_1)}.
    \end{align*}
    In the equation above, the $3$-rd part is always positive. We then examine the second part:
    \begin{align*}
        & \E_{Z_0,W} \brk*{\TF(Z_0; \{V,Q\}, X_1 \leftarrow \Delta X_1)^\top \p*{\TF(Z_0; \{V,Q\}, X_1 \leftarrow \bX_1) + W x_\test}} \\
        &= \E_{Z_0,W} \brk*{x_\test^\top x_\test v_1 x_\test + x_\test^\top x_\test v_2 x_\test} = 0,
    \end{align*}
    where $v_1 \!=\! bc \sum_{j=1}^n \p*{\sum_{i=1}^{n+1} d_{j,i} d_{n+1,i}} y_j^\top \!c \p*{a \sum_{i=1}^n y_i x_i^\top \!\!+\! b \sum_{j=1}^n \sum_{k=1}^n \p*{\sum_{i=1}^{n+1} d_{j,i} d_{k,i}} y_j x_k^\top}$ and $v_2 = bc \sum_{j=1}^n \p*{\sum_{i=1}^{n+1} d_{j,i} d_{n+1,i}} y_j^\top W$ are independent of $x_\test$. Therefore, $\L(V,Q)$ attains its minimum only if $\TF(Z_0; \{V,Q\}, X_1 \leftarrow \Delta X_1) = 0$, implying $d_{n+1,i} = 0$ for $i \in [1,n+1]$.
    
    In the following analysis, we will assume that the last row of $D$ is $0$, and let $M \in \R^{n \times (n+1)}$ be the first $n$ rows of $D$. Additionally, we will drop the $c$ factor in \cref{eq:b_x}, since its position could be substituted by $a$ and $b$. We then define $\tW = a \sum_{i=1}^n y_i x_i^\top + b \sum_{j=1}^n \sum_{k=1}^n \p*{\sum_{i=1}^{n+1} d_{j,i} d_{k,i}} y_j x_k^\top$, $X = \env{bmatrix}{x_1 & \cdots & x_n}$ and $Y = \env{bmatrix}{y_1 & \cdots & y_n}$. One can verify that
    \begin{equation} \label{eq:approx_w}
        \tW = a Y X^\top + b Y M M^\top X^\top = a WXX^\top + b WX MM^\top X^\top.
    \end{equation}
    Furthermore, the in-context risk could be expanded as
    \begin{align*}
        \L(V,Q) &= \E_{Z_0,W} \norm{\tW x_\test + W x_\test}_2^2 = \E_{Z_0,W} \brk*{x_\test^\top (\tW + W)^\top (\tW + W) x_\test} \\
        &= \E_{Z_0,W} \brk*{\tr\p*{(\tW + W)^\top (\tW + W)}} \\
        &= \E_{Z_0,W} \brk*{\tr(\tW^\top \tW) + 2\tr(W^\top \tW) + \tr(W^\top W)}.
    \end{align*}
    We will use the identity $\E_X [XAX^\top XBX^\top] = \p*{\tr(A) \tr(B) + \tr(AB^\top) + d\tr(AB)} I_d$ for any $A, B \in \R^{n \times n}$, which can be acquired by expanding each element and applying Isserlis' theorem. Let $T_1 = \tr(MM^\top)$ and $T_2 = \tr(MM^\top MM^\top)$, then
    \begin{align*}
        & \E_{Z_0,W} \brk*{\tr\p*{(a WXX^\top + b WX MM^\top X^\top)^\top (a WXX^\top + b WX MM^\top X^\top)}} \\
        &= \E_{Z_0,W} \brk*{a^2 \tr(XX^\top W^\top W XX^\top) + 2ab \tr(XX^\top W^\top WX MM^\top X^\top)} \\
        &\quad+ \E_{Z_0,W} \brk*{b^2 \tr(X MM^\top X^\top W^\top WX MM^\top X^\top)} \\
        &= d\E_{Z_0} \brk*{a^2 \tr(XX^\top XX^\top) + 2ab \tr(XX^\top X MM^\top X^\top) + b^2 \tr(X MM^\top X^\top X MM^\top X^\top)} \\
        &= a^2 d^2 n(n+1+d) + 2ab d^2 (n+1+d) T_1 + b^2 d^2 (T_1^2 + (1+d)T_2).
    \end{align*}
    Simultaneously, we can verify that $\E_{Z_0,W} [\tr(W^\top W)] = d^2$ and
    \begin{equation*}
        \E_{Z_0,W} \brk*{\tr(W^\top \tW)} = \E_{Z_0,W} \brk*{a W^\top WXX^\top + b W^\top WX MM^\top X^\top} = ad^2 n + bd^2 T_1.
    \end{equation*}
    Combining the results above, we aim to find the optimal $a, b, M$ that minimize
    \begin{equation*}
        \frac{1}{d^2} \L(V,Q) = c_0 + c_1 T_1 + c_2 T_1^2 + c_3 T_2,
    \end{equation*}
    where
    \begin{gather*}
         c_0 = a^2n(n+1+d) + 1 + 2an,\quad c_1 = 2ab(n+1+d) + 2b, \\
         c_2 = b^2,\quad c_3 = b^2(1+d).
    \end{gather*}
    Since $c_3 \ge 0$, to minimize $\L(V,Q)$ we need to minimize $T_2$. Given that $MM^\top$ is symmetric, we denote its $n$ eigenvalues as $\lambda_i$, $i \in [1,n]$. Then by Cauchy–Schwarz inequality,
    \begin{equation*}
        \tr(MM^\top MM^\top) = \sum_{i=1}^n \lambda_i^2 \ge \frac{1}{n} \p*{\sum_{i=1}^n \lambda_i}^2 = \frac{1}{n} \tr^2(MM^\top).
    \end{equation*}
    Therefore, $\L(V,Q)$ is minimized only if the inequality above holds with equality, which implies that $\lambda_i = \lambda_j$ for any $i \ne j$. This concludes the proof by showing that there exists $\lambda \in \R$ such that $MM^\top = \lambda I_d$, and thus $DD^\top = \diag(\lambda I_d, 0)$.
\end{proof}

\subsection{Proof of Proposition \ref{thm:opt_d4_dropout}}

\begin{proof}
    We will continue from \cref{eq:b_x,eq:delta_x}. After applying token-wise dropout, we have
    \begin{align}
        & \TF(Z_0; \{V_l, Q_l\}_{l=1}^2, X_1 \leftarrow \bX_1) = \sum_{i=1}^n (a o_2^{3i-2} + b o_2^{3i}) o_1^{3i-2} o_1^{3i} y_i x_i^\top o_1^{3n+1} o_2^{3n+3} x_\test \notag\\
        &\quad+ c \sum_{j=1}^n \sum_{k=1}^n \p*{\sum_{i=1}^{n+1} o_2^{3i-1} d_{j,i} d_{k,i}} o_1^{3j} o_1^{3k-2} y_j x_k^\top o_1^{3n+1} o_2^{3n+3} x_\test, \label{eq:b_x_ins}\\
        &\TF(Z_0; \{V_l, Q_l\}_{l=1}^2, X_1 \leftarrow \Delta X_1) = c o_2^{3n+3} \sum_{j=1}^n \p*{\sum_{i=1}^{n+1} d_{j,i} d_{n+1,i}} o_1^{3j} o_1^{3n+1} y_j x_\test^\top x_\test, \notag
    \end{align}
    where $a = b_2 c_2 a_1 d_x^y b_1 d_y^x (1 + a_1 d_x^x)$, $b = b_2 c_2 a_1 d_x^y (1 + b_1 d_y^x) a_1 d_x^x$ and $c = b_2 c_2 a_1 d_x^y b_1 t_y a_1 t_x$. One can verify that our previous analysis about $\TF(Z_0; \{V_l, Q_l\}_{l=1}^2, X_1 \leftarrow \Delta X_1)$ still holds and we thus have $d_{n+1,:} = 0$. We then define:
    \begin{gather*}
        O_l^1 = \diag(o_l^1, \cdots, o_l^{3n-2}) \in \R^{n \times n}, \quad O_l^2 = \diag(o_l^3, \cdots, o_l^{3n}) \in \R^{n \times n}, \quad \text{for } l \in [2], \\
        O_2^3 = \diag(o_2^2, \cdots, o_2^{3n+2}) \in \R^{(n+1) \times (n+1)}.
    \end{gather*}
    By defining
    \begin{equation*}
        \tW = \sum_{i=1}^n (a o_2^{3i-2} + b o_2^{3i}) o_1^{3i-2} o_1^{3i} y_i x_i^\top + c \sum_{j=1}^n \sum_{k=1}^n \p*{\sum_{i=1}^{n+1} o_2^{3i-1} d_{j,i} d_{k,i}} o_1^{3j} o_1^{3k-2} y_j x_k^\top,
    \end{equation*}
    One can verify that
    \begin{equation*}
        \tW = A + B + C \triangleq a Y O_1^2 O_2^1 O_1^1 X^\top + b Y O_1^2 O_2^2 O_1^1 X^\top + c Y O_1^2 M O_2^3 M^\top O_1^1 X^\top.
    \end{equation*}
    Then, we will compute the expectation of each term in the following decomposition:
    \begin{equation*}
        \L(V,Q) = \E_{Z_0,W} \brk*{\tr(\tW^\top \tW) + 2\tr(W^\top \tW) + \tr(W^\top W)},
    \end{equation*}
    Specifically, let $T_1 = \tr(MM^\top)$, $T_2 = \tr(MM^\top MM^\top)$, $T_3 = \norm{M}_4^4$, $T_4 = \sum_{i=1}^n \norm{M_{i,:}}_2^4$, $T_5 = \sum_{j=1}^{n+1} \norm{M_{:,j}}_2^4$, we then have
    \begin{gather*}
        \E[\tr(A^\top A)] = a^2 d^2 (np^3 + n(n-1)p^6 + (1+d)np^3), \\
        \E[\tr(B^\top B)] = b^2 d^2 (np^3 + n(n-1)p^6 + (1+d)np^3), \\
        \begin{aligned}
            \E[\tr(C^\top C)] &= c^2 d^2 (p^6 T_1^2 + (1+d)(p^4 - p^6)T_4 + (1+d)(p^5 - p^6)T_5 \\
            &\quad+ (1+d)(p^3 - p^4 - p^5 + p^6) T_3 + (p^3 - p^4) T_4 + p^4 T_2 + d p^6 T_2),
        \end{aligned} \\
        \E[\tr(A^\top B)] = ab d^2 (np^4 + n(n-1)p^6 + (1+d)np^4), \\
        \E[\tr(A^\top C)] = ac d^2 ((p^4 + (n-1)p^6) T_1 + (1+d)p^4 T_1), \\
        \E[\tr(B^\top C)] = bc d^2 ((p^4 + (n-1)p^6) T_1 + (1+d)p^4 T_1), \\
        \E[\tr(W^\top A)] = a d^2 np^3, \quad \E[\tr(W^\top B)] = b d^2 np^3, \quad \E[\tr(W^\top C)] = c d^2 p^3 T_1.
    \end{gather*}
    Summarizing our analysis above, $\min_M \L(V,Q)$ is equivalent to:
    \begin{equation*}
        \min_M \brc*{c_0 + c_1 T_1 + c_2 T_2 + c_3 T_3 + c_4 T_4 + c_5 T_5 + c_6 T_1^2},
    \end{equation*}
    where
    \begin{gather*}
        c_0 = 1 + n(2+d)p^3(a^2 + b^2) + 2np^3(a+b) + 2n(2+d)p^4ab + n(n-1)p^6(a+b)^2, \\
        c_1 = 2(a+b)c(p^4 + (n-1)p^6 + (1+d)p^4) + 2cp^3, \\
        c_2 = c^2 (p^4 + dp^6), \\
        c_3 = c^2 (1+d) (p^3 - p^4 - p^5 + p^6), \\
        c_4 = c^2 ((1+d) (p^4 - p^6) + (p^3 - p^4)), \\
        c_5 = c^2 (1+d)(p^5 - p^6), \\
        c_6 = c^2 p^6.
    \end{gather*}
    It is easy to verify that $c_2, c_3, c_4, c_5, c_6 \ge 0$.
\end{proof}

\subsection{Proof of Proposition \ref{prop:eos}}

\begin{restate}{\Cref{prop:eos}}
    Let $d_p$ denote the number of non-EOS tokens. Given any $L$-layer, single-head, $d$-dimensional linear-attention transformer with EOS tokens:
    \begin{equation*}
        \TF\p*{Z_0; \{V_l,Q_l,P_l\}_{l \in [L]}} = (Z_L)_{:, d_p+1}, \quad (Z_0)_{:, d_p+1} = 0,
    \end{equation*}
    where
    \begin{gather*}
        Z_l \in \R^{d \times (d_p+1)},\; V_l, Q_l \in \R^{d \times d},\; P_l \in \R^{(d_p+1) \times (d_p+1)}, \\
        Z_l = Z_{l-1} + V_l Z_{l-1} M (Z_{l-1}^\top Q_l Z_{l-1}^\top + P_l), \quad M = \diag(I_{d_p}, 0).
    \end{gather*}
    There exists an $L$-layer, two-head, $2d$-dimensional linear-attention transformer operating without EOS tokens:
    \begin{equation*}
        \TF\p*{\bZ_0; \{\bV_l^h,\bQ_l^h,\bP_l^h\}_{l \in [L], h \in [2]}} = (\bZ_L)_{d:2d, d_p},
    \end{equation*}
    where
    \begin{gather*}
        \bZ_l \in \R^{2d \times d_p},\; \bV_l^h, \bQ_l^h \in \R^{2d \times 2d},\; \bP_l^h \in \R^{d_p \times d_p}, \\
        \bZ_l = \bZ_{l-1} + \sum_{h=1}^2 \bV_l^h \bZ_{l-1} (\bZ_{l-1}^\top \bQ_l^h \bZ_{l-1}^\top + \bP_l^h).
    \end{gather*}
    Such that for any $Z \in \R^{d \times d_p}$, by letting $Z_0 = \env{bmatrix}{Z & 0}$ and $\bZ_0 = \env{bmatrix}{Z \\ 0}$, we have
    \begin{equation*}
        \TF\p*{Z_0; \{V_l,Q_l,P_l\}_{l \in [L]}} = \TF\p*{\bZ_0; \{\bV_l^h,\bQ_l^h,\bP_l^h\}_{l \in [L], h \in [2]}}.
    \end{equation*}
\end{restate}
\begin{proof}
    We construct $\bV_l^h$, $\bQ_l^h$, and $\bP_l^h$ as follows:
    \begin{gather*}
        \bV_l^1 = \env{bmatrix}{V_l & 0 \\ 0 & 0}, \quad \bQ_l^1 = \env{bmatrix}{Q_l & 0 \\ 0 & 0}, \quad \bP_l^1 = (P_l)_{1:d_p, 1:d_p}, \\
        \bV_l^2 = \env{bmatrix}{0 & 0 \\ V_l & 0}, \quad \bQ_l^2 = \env{bmatrix}{0 & Q_l \\ 0 & 0}, \quad \bP_l^2 = \env{bmatrix}{0 & (P_l)_{:,d_p+1}}.
    \end{gather*}
    We will show that for any $l \in [L]$, it satisfies $\bZ_l = \env{bmatrix}{(Z_l)_{:,(1:d_p-1)} & (Z_l)_{:,d_p} \\ 0 & (Z_l)_{:,d_p+1}}$. One can verify that it holds trivially for $l = 0$. Then, suppose it holds for some $l = k - 1$, we have
    \begin{align*}
        \bZ_k &= \bZ_{k-1} + \bV_k^1 \bZ_{k-1} (\bZ_{k-1}^\top \bQ_k^1 \bZ_{k-1}^\top + \bP_k^1) + \bV_k^2 \bZ_{k-1} (\bZ_{k-1}^\top \bQ_k^2 \bZ_{k-1}^\top + \bP_k^2) \\
        &= \bZ_{k-1} + \env{bmatrix}{V_k (Z_{k-1})_{:,1:d_p} \p*{(Z_{k-1})_{:,1:d_p}^\top Q_k (Z_{k-1})_{:,1:d_p} + (P_k)_{1:d_p, 1:d_p}} \\ 0} \\
        &\quad+ \env{bmatrix}{0 \\ V_k (Z_{k-1})_{:,1:d_p}} \p*{\env{bmatrix}{0 & (Z_{k-1})_{:,1:d_p}^\top Q_k (Z_{k-1})_{:,d_p+1}} + \env{bmatrix}{0 & (P_k)_{:,d_p+1}}} \\
        &= \bZ_{k-1} + \env{bmatrix}{V_k Z_{k-1} M \p*{Z_{k-1}^\top Q_k (Z_{k-1})_{:,1:d_p} + (P_k)_{:, 1:d_p}} \\ 0} \\
        &\quad+ \env{bmatrix}{0 & 0 \\ 0 & V_k Z_{k-1} M \p*{Z_{k-1}^\top Q_k (Z_{k-1})_{:,d_p+1} + (P_k)_{:,d_p+1}}} \\
        &= \env{bmatrix}{(Z_k)_{:,1:d_p} \\ 0} + \env{bmatrix}{0 & 0 \\ 0 &  (Z_k)_{:,d_p+1}}.
    \end{align*}
    The proof is complete.
\end{proof}

\section{Experiment Details and Additional Results}

In this section, we present experiment details and additional results not included in the main text due to space limitations. Our experiments are conducted on an A100 40G GPU. It takes around 30 GPU hours to fully reproduce our results\footnote{The source code is available in supplementary materials.}.

\subsection{Synthetic Experiments on Linear Transformers}

We consider training linear-attention transformers on random linear regression instances. We take embedding dimension $d = 4$, and the distributions for generating $x_i$ and $w_i$ are both $P_x = P_w = \cN(0, I_d)$. We optimize the ICL risk for $L$-layer linear transformers with $n$ in-context demonstrations using AdamW, where $L \in [3]$ and $n \in [5, 30]$. Each gradient step is computed from a batch size of $1000$. We additionally apply $\ell_1$ regularization to simplify the found solutions. For training efficiency and stability, we restrict the $A_l$, $B_l$, and $C_l$ matrices to $\cS_I$ during training, and initialize $D_l \in \R^{d_p \times d_p}$ with \iid Gaussian matrices. For each case, we train $40$ models with different random seeds, and report the minimum achieved ICL risk to approximate the global minimum.

To reproduce the task vector mechanism, we focus on transformers trained with triplet-formatted prompts. The training procedure is identical to the above. For inference, we restrict $P_w$ to rank-one coefficient matrices, by letting $W = w_1 w_2^\top$, where $w_1, w_2 \sim \cN(0, I_d)$. We first generate normal ICL prompts to generate task vectors as the hidden states of the last arrow token after the first attention layer, and then inject them into zero-shot prompts after normalization. The final outputs $\hat y_\test$ are taken as the output of these injected zero-shot prompts after being processed with the same transformer model. We compute the final risk as $\E\norm{ \frac{\hat y_\test}{\norm{\hat y_\test}} + \frac{y_\test}{\norm{y_\test}} }$ to simulate the layer normalization blocks in practical LLMs. The reported scores are averaged for $n \in [5, 30]$.

\subsection{Experiments on Practical LLMs}

\textbf{Datasets.} Following the settings of the original task vector method \cite{hendel2023context}, our study covers $33$ tasks in $5$ categories. The detailed description for each task is provided in \Cref{tbl:tasks}.

\begin{table}[p]
    \centering
    \small
    \caption{Descriptions of the tasks used in our empirical studies.}
    \label{tbl:tasks}
    \newcolumntype{T}{r<{$\;$}@{}c<{$\;$}@{}l}  
    \adjustbox{max width=\textwidth}{%
    \begin{tabular}{llTp{170pt}}
        \toprule
        \textbf{Category} & \textbf{Task} & \multicolumn{3}{c}{\textbf{Example}} & \textbf{Description} \\
        \midrule
        \multirow{6}{*}{Knowledge} & Contry to Capital & France & $\to$ & Paris & Output the capital city of the given country. \\
        & Person to Language & Macron & $\to$ & French & Output the native language of the given person. \\
        & Location to Continent & Paris & $\to$ & Europe & Output the corresponding continent of the given location. \\
        & Religion & Saladin & $\to$ & Muslim & Output the associated religion of the given location or person. \\
        \midrule
        \multirow{10}{*}{Algorithmic} & List First & [a,b,c] & $\to$ & a & Output the first item in the given list. \\
        & List Last & [a,b,c] & $\to$ & c & Output the last item in the given list. \\
        & Next Letter & a & $\to$ & b & Output the next letter of the given letter in the alphabet. \\
        & Prev Letter & b & $\to$ & a & Output the previous letter of the given letter in the alphabet. \\
        & To Upper & a & $\to$ & A & Output the corresponding uppercase letter of the given lowercase letter. \\
        & To Lower & A & $\to$ & a & Output the corresponding lowercase letter of the given uppercase letter. \\
        \midrule
        \multirow{6}{*}{Translation} & English to French & hello & $\to$ & bonjour & Translate the given word in English to French. \\
        & English to Italian & hello & $\to$ & ciao & Translate the given word in English to Italian. \\
        & English to Spanish & hello & $\to$ & hola & Translate the given word in English to Spanish. \\
        & French to English & bonjour & $\to$ & hello & Translate the given word in French to English. \\
        & Italian to English & ciao & $\to$ & hello & Translate the given word in Italian to English. \\
        & Spanish to English & hola & $\to$ & hello & Translate the given word in Spanish to English. \\
        \midrule
        \multirow{17}{*}{Linguistic} & Present to Gerund & go & $\to$ & going & Output the corresponding gerund form of the given verb in present simple tense. \\
        & Present to Past & go & $\to$ & went & Output the corresponding past simple form of the given verb in present simple tense. \\
        & Present to Past Perfect & go & $\to$ & gone & Output the corresponding past perfect form of the given verb in present simple tense. \\
        & Gerund to Present & going & $\to$ & go & Output the corresponding present simple form of the given verb in gerund form. \\
        & Past to Present & went & $\to$ & go & Output the corresponding present simple form of the given verb in past simple tense. \\
        & Past Perfect to Present & gone & $\to$ & go & Output the corresponding present simple form of the given verb in past perfect tense. \\
        & Singular to Plural & dog & $\to$ & dogs & Output the corresponding plural form of the given noun in singular form. \\
        & Plural to Singular & dogs & $\to$ & dog & Output the corresponding singular form of the given noun in plural form. \\
        & Antonym & happy & $\to$ & sad & Output the antonym of the given adjective. \\
        \midrule
        \multirow{16}{*}{Bijection} & To Upper \& Lower & a & $\leftrightarrow$ & A & Output the given letter in uppercase if it is in lowercase, and vice versa. \\
        & English \& French & hello & $\leftrightarrow$ & bonjour & Translate the given word to French if it is in English, and vice versa. \\
        & English \& Italian & hello & $\leftrightarrow$ & ciao & Translate the given word to Italian if it is in English, and vice versa. \\
        & English \& Spanish & hello & $\leftrightarrow$ & hola & Translate the given word to Spanish if it is in English, and vice versa. \\
        & Present \& Gerund & go & $\leftrightarrow$ & going & Output the given verb in gerund form if it is in present simple tense, and vice versa. \\
        & Present \& Past & go & $\leftrightarrow$ & went & Output the given verb in past simple form if it is in present simple tense, and vice versa. \\
        & Present \& Past Perfect & go & $\leftrightarrow$ & gone & Output the given verb in past perfect form if it is in present simple tense, and vice versa. \\
        & Singular \& Plural & dog & $\leftrightarrow$ & dogs & Output the given noun in plural form if it is in singular form, and vice versa. \\
        \bottomrule
    \end{tabular}}
\end{table}

\textbf{Prompt Template.} The template used to construct ICL demonstrations is ``Example:$\{x_i\}\to\{y_i\}$, where $x_i$ and $y_i$ are subsequently replaced by the input and output of the semantic mapping. For the query part, $y_i$ is omitted from the prompt. After concatenating each demonstration with ``\textbackslash n'', an example of the full input prompt is:
\begin{equation*}
    \text{Example:}\{x_1\}\to\{y_1\}\text{\textbackslash n}\cdots\text{Example:}\{x_n\}\to\{y_n\}\text{\textbackslash n}\text{Example:}\{x_\test\}\to
\end{equation*}

\textbf{Evaluation.} To evaluate the $N$-shot performance, we generate $50 \times (N+1)$ \iid prompts for each task with number of demonstrations $n = 10$ for task vector extraction. The hidden states of the last $\to$ token, which is also literally the last token in the prompt, are recorded for every layer in the transformer. Thereafter, we generate another $50$ \iid prompts with $N$ demonstrations, where $x_\test$ is selected to be distinct from the previous chosen ones. The final accuracy is measured by whether the next word predicted matches the expected answer. The performance of the standard ICL method (Baseline) is acquired by inferring without interference. For the task vector method (TaskV) and our multi-vector variant (TaskV-M), the extracted task vectors are injected to replace the hidden states of the arrow $\to$ tokens at a specified layer $l$. For TaskV, only the last arrow token is injected, while for TaskV-M, each of the $N + 1$ arrow tokens is injected with the $N + 1$ extracted task vectors for the same task. The performance is reported for the one layer $l \in L$ achieving the highest accuracy. For each case, the mean and standard deviation are evaluated through $5$ independent trials.

\begin{table}[t]
    \centering
    \small
    \caption{Accuracy comparison between standard ICL (Baseline), the task vector method (TaskV), and our strategy (TaskV-M). The experiment is conducted on Pythia-12B with $n = 10$.}
    \label{tbl:mtv2}
    \adjustbox{max width=\textwidth}{%
    \begin{tabular}{cl|ccccc|c}
        \toprule
        \multicolumn{2}{c|}{\textbf{Method}} & \textbf{Knowledge} & \textbf{Algorithmic} & \textbf{Translation} & \textbf{Linguistic} & \textbf{Bijection} & \textbf{Average} \\
        \midrule
        \multirow{2}{*}{$0$-shot} & Baseline & 6.60 \scriptsize{$\pm$ 1.59} & 14.07 \scriptsize{$\pm$ 1.45} & 8.60 \scriptsize{$\pm$ 0.68} & 12.53 \scriptsize{$\pm$ 1.57} & 10.31 \scriptsize{$\pm$ 0.70} & 10.82 \scriptsize{$\pm$ 0.48} \\
        & TaskV & \textbf{63.30} \scriptsize{$\pm$ 2.62} & \textbf{84.73} \scriptsize{$\pm$ 1.22} & \textbf{62.07} \scriptsize{$\pm$ 0.98} & \textbf{82.58} \scriptsize{$\pm$ 1.22} & \textbf{42.27} \scriptsize{$\pm$ 0.92} & \textbf{66.40} \scriptsize{$\pm$ 0.96} \\
        \midrule
        \multirow{3}{*}{$1$-shot} & Baseline & 61.80 \scriptsize{$\pm$ 5.45} & 72.80 \scriptsize{$\pm$ 1.15} & 43.27 \scriptsize{$\pm$ 2.92} & 57.07 \scriptsize{$\pm$ 1.15} & 41.91 \scriptsize{$\pm$ 2.83} & 53.95 \scriptsize{$\pm$ 1.02} \\
        & TaskV & 76.40 \scriptsize{$\pm$ 2.40} & \textbf{84.20} \scriptsize{$\pm$ 1.05} & \textbf{71.47} \scriptsize{$\pm$ 1.41} & \textbf{87.16} \scriptsize{$\pm$ 2.04} & 53.11 \scriptsize{$\pm$ 2.37} & 73.59 \scriptsize{$\pm$ 0.79} \\
        & TaskV-M & \textbf{77.70} \scriptsize{$\pm$ 2.52} & 83.73 \scriptsize{$\pm$ 1.37} & 71.00 \scriptsize{$\pm$ 1.48} & 86.80 \scriptsize{$\pm$ 1.59} & \textbf{53.87} \scriptsize{$\pm$ 2.90} & \textbf{73.68} \scriptsize{$\pm$ 0.90} \\
        \midrule
        \multirow{3}{*}{$2$-shot} & Baseline & 70.30 \scriptsize{$\pm$ 3.71} & 82.13 \scriptsize{$\pm$ 0.54} & 60.80 \scriptsize{$\pm$ 1.81} & 81.16 \scriptsize{$\pm$ 1.57} & 50.76 \scriptsize{$\pm$ 2.17} & 68.41 \scriptsize{$\pm$ 0.64} \\
        & TaskV & 80.30 \scriptsize{$\pm$ 2.46} & \textbf{87.00} \scriptsize{$\pm$ 1.63} & 76.13 \scriptsize{$\pm$ 3.77} & 89.33 \scriptsize{$\pm$ 0.70} & 58.67 \scriptsize{$\pm$ 2.44} & 77.41 \scriptsize{$\pm$ 0.50} \\
        & TaskV-M & \textbf{81.60} \scriptsize{$\pm$ 1.56} & 86.47 \scriptsize{$\pm$ 0.40} & \textbf{77.27} \scriptsize{$\pm$ 2.53} & \textbf{89.51} \scriptsize{$\pm$ 0.88} & \textbf{59.24} \scriptsize{$\pm$ 2.48} & \textbf{77.87} \scriptsize{$\pm$ 0.76} \\
        \midrule
        \multirow{3}{*}{$3$-shot} & Baseline & 77.60 \scriptsize{$\pm$ 2.40} & 81.87 \scriptsize{$\pm$ 0.81} & 68.13 \scriptsize{$\pm$ 2.02} & 86.31 \scriptsize{$\pm$ 1.93} & 55.73 \scriptsize{$\pm$ 1.60} & 73.20 \scriptsize{$\pm$ 0.31} \\
        & TaskV & 84.00 \scriptsize{$\pm$ 2.76} & 86.33 \scriptsize{$\pm$ 1.17} & \textbf{79.53} \scriptsize{$\pm$ 2.27} & 92.00 \scriptsize{$\pm$ 0.67} & 58.76 \scriptsize{$\pm$ 1.53} & 79.06 \scriptsize{$\pm$ 0.67} \\
        & TaskV-M & \textbf{85.40} \scriptsize{$\pm$ 2.31} & \textbf{87.07} \scriptsize{$\pm$ 1.18} & 78.13 \scriptsize{$\pm$ 1.86} & \textbf{92.84} \scriptsize{$\pm$ 0.68} & \textbf{59.56} \scriptsize{$\pm$ 1.27} & \textbf{79.54} \scriptsize{$\pm$ 0.35} \\
        \midrule
        \multirow{3}{*}{$4$-shot} & Baseline & 78.40 \scriptsize{$\pm$ 1.83} & 82.73 \scriptsize{$\pm$ 0.44} & 72.40 \scriptsize{$\pm$ 1.24} & 88.89 \scriptsize{$\pm$ 1.25} & 57.91 \scriptsize{$\pm$ 1.46} & 75.46 \scriptsize{$\pm$ 0.64} \\
        & TaskV & 83.80 \scriptsize{$\pm$ 1.12} & 87.60 \scriptsize{$\pm$ 1.81} & \textbf{80.20} \scriptsize{$\pm$ 2.39} & \textbf{92.18} \scriptsize{$\pm$ 0.96} & 59.38 \scriptsize{$\pm$ 0.47} & 79.59 \scriptsize{$\pm$ 0.62} \\
        & TaskV-M & \textbf{84.30} \scriptsize{$\pm$ 1.50} & \textbf{88.13} \scriptsize{$\pm$ 0.81} & 80.00 \scriptsize{$\pm$ 2.67} & 91.87 \scriptsize{$\pm$ 1.25} & \textbf{60.31} \scriptsize{$\pm$ 0.86} & \textbf{79.87} \scriptsize{$\pm$ 0.51} \\
        \bottomrule
    \end{tabular}}
    \vspace*{\fill}
\end{table}

\textbf{Additional Results.} Besides Llama-13B, we also observe consistent accuracy improvement of our TaskV-M method on the Pythia-12B model, as reported in \Cref{tbl:mtv2}.

\section{Additional Discussions}

\subsection{Last Task Vector Weights the Most}

While our analysis of linear-attention models suggests that each formed task vector (i.e., the hidden state at each arrow token) contributes equally to the final prediction, this assumption does not fully hold in practical LLMs. As demonstrated by the conflicting tasks experiment in \cite{hendel2023context}, injecting a task vector from task $B$ into an ICL prompt designed for task $A$ causes the model to predominantly perform task $B$. This behavior indicates that LLMs largely rely on the last arrow token to determine the task identity. We attribute this to the causal attention mechanism used in practical LLMs, which is not captured by our current theoretical analysis. In causal attention, only the final arrow token can aggregate information from the entire preceding context, making it the most informative and influential for prediction. This explains why our multi-vector strategy offers modest, though consistent, performance gains. The improvement suggests that intermediate arrow tokens do participate in the inference process, albeit less effectively. Enhancing how LLMs utilize information from all arrow tokens remains a promising direction for improving task vector accuracy and robustness.

\subsection{Decoding the Vocabulary of Task Vectors}

Multiple prior works \cite{hendel2023context, todd2024function} have observed an interesting phenomenon: when task vectors are extracted and passed through the final classification layer, the top predicted tokens often belong to the output space of the corresponding task. This effect is particularly prominent in the GPT-J model. Interestingly, we find that this behavior can be naturally explained by our analysis of linear models. Specifically, we assume that the hidden state space has dimensionality at least $2d$, where the first $d$ dimensions represent the input ($x_i$) and the last $d$ dimensions represent the output ($y_i$). Task vectors constructed under this architecture preserve this layout: the first half encodes a linear combination of $x_i$, and the second half encodes a linear combination of $y_i$. In the final layer, the model predicts $y_\test$ by extracting the last $d$ dimensions of the final token. When this same mechanism is applied to a task vector, it naturally produces a linear combination of the $y_i$ values, thereby generating outputs aligned with the task’s output space. This indicates that practical LLMs adopt a similar partition in the hidden state space, justifying our prompt structure for linear model analysis.

\subsection{Limitations}

While our analysis provides new insights into the emergence and functionality of task vectors, it is primarily conducted on simplified linear-attention transformers and synthetic tasks, which may not fully capture the complexity of real-world LLMs. Moreover, our theoretical framework focuses on middle-layer representations and does not fully account for deeper interactions across layers or the role of fine-tuned components such as layer normalization and multi-head attention.

\subsection{Broader Impacts}

This work advances the theoretical understanding of in-context learning and task vector mechanisms, which can lead to more efficient and interpretable language models. By enabling faster inference through task vectors, it may reduce the computational cost and energy consumption of large-scale deployment, thereby making AI systems more accessible and environmentally sustainable. Improved interpretability could also enhance trust and transparency in AI applications across education, healthcare, and other socially beneficial domains.

As task vector methods improve efficiency and transferability, they may also be misused to replicate or extract functionality from proprietary models without authorization, raising concerns around model intellectual property. Additionally, while interpretability is often framed as a benefit, deeper insights into model internals could be exploited to engineer adversarial inputs or extract sensitive training data. Careful consideration and mitigation strategies are essential to ensure that such work aligns with the broader goals of safe and beneficial AI.


\end{document}